\definecolor{mypurple}{rgb}{.4,.0,.5}
\definecolor{darkgreen}{rgb}{0, 0.4,0}
\definecolor{purplebrown}{rgb}{0.5,0.1,0.6}
\definecolor{ultclupcol}{rgb}{0.1,0.5,0.5}
\definecolor{mytrycolor}{rgb}{0.5,0.7,0.2}
\definecolor{ultclupcola}{rgb}{.5,0,.5}
\newcommand{\bl}[1]{\textcolor{blue}{#1}}
\newcommand{\prp}[1]{\textcolor{purple}{#1}}
\newcommand{\yellow}[1]{\textcolor{yellow}{#1}}
\definecolor{shadebrown}{rgb}{0.1,0.1,0.9}
\definecolor{lightblue}{rgb}{0.2,0,1}
\newtcbox{\xmybox}{on line,
arc=7pt,
before upper={\rule[-3pt]{0pt}{10pt}},boxrule=0pt,
boxsep=0pt,left=6pt,right=6pt,top=0pt,bottom=0pt,enhanced, coltext=blue, colback=white!10!yellow}
\newtcbox{\xmyboxa}{on line,
arc=7pt,
before upper={\rule[-3pt]{0pt}{10pt}},boxrule=0pt,
boxsep=0pt,left=6pt,right=6pt,top=0pt,bottom=0pt,enhanced, colback=white!10!yellow}
\newtcbox{\xmyboxb}{on line,
arc=7pt,
before upper={\rule[-3pt]{0pt}{10pt}},boxrule=1pt,colframe=darkgreen!100!blue,
boxsep=0pt,left=6pt,right=6pt,top=0pt,bottom=0pt,enhanced, colback=white!10!yellow}
\newtcbox{\xmyboxc}{on line,
arc=7pt,
before upper={\rule[-3pt]{0pt}{10pt}},boxrule=.7pt,colframe=blue!100!blue,
boxsep=0pt,left=6pt,right=6pt,top=0pt,bottom=0pt,enhanced, coltext=blue, colback=white!10!yellow}
\newtcbox{\xmytboxa}{on line,
arc=7pt,
before upper={\rule[-3pt]{0pt}{10pt}},boxrule=.0pt,colframe=pink!50!yellow,
boxsep=0pt,left=6pt,right=6pt,top=0pt,bottom=0pt,enhanced, coltext=white, colback=blue!40!red}
\newtcbox{\xmytboxb}{on line,
arc=7pt,
before upper={\rule[-3pt]{0pt}{10pt}},boxrule=.0pt,colframe=pink!50!yellow,
boxsep=0pt,left=6pt,right=6pt,top=0pt,bottom=0pt,enhanced, coltext=white, colback=white!40!green}
\newcommand\subsubsubsection{\@startsection{paragraph}{4}{\z@}{-2.5ex\@plus -1ex \@minus -.25ex}{1.25ex \@plus .25ex}{\normalfont\normalsize\bfseries}}
\newcommand\subsubsubsubsection{\@startsection{subparagraph}{5}{\z@}{-2.5ex\@plus -1ex \@minus -.25ex}{1.25ex \@plus .25ex}{\normalfont\normalsize\bfseries}}
\def\y{{\bf y}}
\def\y{{\bf y}}
\def\c{{\bf c}}
\def\tr{\mbox{Tr}}
\def\tr{{\rm tr}\,}
\def\diag{{\rm diag}\,}
\def\be{\begin{equation}}
\def\ee{\end{equation}}
\def\ba{\left[\begin{array}}
\def\ea{\end{array}\right]}
\def\y{{\bf y}}
\def\c{{\bf c}}
\def\1{{\bf 1}}
\def\0{{\bf 0}}
\def\vecw{\mbox{vec}}
\def\rankw{\mbox{rank}}
\def\diag{\mbox{diag}}
\def\bU{\bar{U}}
\def\bV{\bar{V}}
\def\mR{{\mathbb R}}
\def\mC{{\mathbb C}}
\def\mP{{\mathbb P}}
\def\calV{{\cal V}}
\def\calU{{\cal U}}
\def\lp{\left (}
\def\rp{\right )}
\newtheorem{theorem}{Theorem}
\newtheorem{corollary}{Corollary}
\newtheorem{lemma}{Lemma}
\begin{document}

\begin{singlespace}

\title {\textbf{Causal Inference ({C}-inf) --- \emph{asymmetric} scenario of typical phase transitions}
}
\author{
\textsc{Agostino Capponi \footnote{e-mail: {\tt ac3827@columbia.edu}}}\quad  \textsc{Mihailo Stojnic \footnote{e-mail: {\tt flatoyer@gmail.com}}} \quad
\\
{Department of Industrial Engineering and Operations Research}\\
{Columbia University, New York, NY 10027, USA}
 }
\date{}
\maketitle

\centerline{{\bf Abstract}} \vspace*{0.1in}

In this paper, we revisit and further explore a mathematically rigorous connection between  \bl{\textbf{\emph{Causal inference (C-inf)}}} and the \bl{\textbf{\emph{Low-rank recovery (LRR)}}} established in  \cite{Cinfidealwc22}. Leveraging the Random duality - Free probability theory (RDT-FPT) connection, we obtain the \emph{exact explicit} typical C-inf \prp{\textbf{\emph{asymmetric}}} phase transitions (PT). We uncover a \bl{\textbf{\emph{doubling low-rankness}}} phenomenon, which means that exactly two times larger low rankness is allowed in asymmetric scenarios compared to the symmetric worst case ones considered in \cite{Cinfidealwc22}. Consequently, the final PT mathematical expressions are as elegant as those obtained in \cite{Cinfidealwc22}, and highlight direct relations between the targeted C-inf matrix low rankness and the time of treatment. Our results have strong implications for applications, where C-inf matrices are not necessarily symmetric.

\vspace*{0.25in} \noindent {\bf Index Terms: Causal inference; Random duality theory; Algorithms; Matrix completion; Sparsity}.

\end{singlespace}

\section{Introduction}
\label{sec:back}

\bl{\textbf{\emph{Causal inference (C-inf)}}} deals with the design of estimation strategies that allow researchers to draw causal conclusions based on data.  The overarching goal is to draw a conclusion regarding the effect of a causal variable, which is typically referred to as the ``treatment'' or the ``intervention'' on some outcome of interest. For example, suppose we want to estimate the causal effect of a drug on deadly cancer progression (vs no exposure to the drug). Then we want to compare metastasis in the patient's body one month after the drug regime has begun versus
metastasis in the absence of  exposure to the drug. The main challenge for causal inference is that we are not generally able to observe both of these states: at the point in time when we are measuring the outcomes, each individual either has had drug exposure or has not.

The problem of estimating the counterfactual, i.e., what would have been the outcome in the absence of a treatement, is central in many disciplines, including economics, health, and social sciences (see, e.g. \cite{RoseRub83,Rub06,ImbRub15,ADHsynth10,DoudImb16,Xucinf17,HerRob10}), machine learning and theoretical computer science (see, e.g. \cite{PearlBar19,PearlCausBook09,PearlSMack18,PearlSurv09}).  Methodological developments to estimate causal effects have been based on experimental or observational data. Experimental research offers the most plausibly unbiased estimates, but experiments are frequently infeasible because they are costly or subject to moral objections. 
Observational data instead are becoming increasing available due to technological advancements in the design of sensor and hardware devices. Our focus is on causal inference in observational studies, and specifically on the design of efficient algorithmic techniques to estimate counterfactuals.	
	
The C-inf approaches can be broadly classified into three categories: 1) the unconfoundedness (see, e.g. \cite{RoseRub83,ImbRub15}); 2) the synthetic control (see, e.g. \cite{ADHsynth10,DoudImb16,Abadsynth19}); and 3) the matrix completion (see, e.g. \cite{ABDIK21, Agarwal2021,KallusNIPS}. Matrix completion methods build upon the foundation works of \cite{	CR09matcomp,Rechtmatcomp11,CPmatcomp10}). Perhaps unexpectedly, all three methods heavily rely on mathematical, statistical, and ultimately algorithmic concepts with very deep roots in information theory. Our work is positioned within the third line of work that mathematically resembles the \bl{\textbf{\emph{matrix completion (MC)}}} problem.

Along the same lines, our work extends significantly the analysis developed in the companion paper  \cite{Cinfidealwc22}. Therein, we obtained the \emph{exact explicit} typical \prp{\textbf{\emph{worst case}}} C-inf phase transitions (PT), and further showed that these phase transitions are achievable by the symmetric targeted C-inf matrices. In the present paper, we consider a generic asymmetric context, to deal with the situation that C-inf matrices are not necessarily always symmetric in real applications. This allows us improving upon the results from \cite{Cinfidealwc22} in certain scenarios. We build further upon the RDT-FPT synergistic mechanisms considered in \cite{Cinfidealwc22}, and precisely characterize the corresponding asymmetric PTs. We also uncover a \bl{\textbf{\emph{doubling low-rankness}}} phenomenon, which means that exactly two times larger low rankness is allowed in asymmetric scenarios compared to the symmetric worst case ones of \cite{Cinfidealwc22}.

\section{Causal inference mathematical setup}
\label{sec:mbcinf}

In this section, we revisit the explicit \prp{\textbf{\emph{causal inference (C-inf) $\leftrightarrow$ matrix completion (MC)}}}  connection, established in \cite{Cinfidealwc22}. Therein, we have discussed the connection between \emph{low rank recovery (LRR)}, \emph{matrix completion (MC)}, and the \emph{causal inference (C-inf)}. Mathematically speaking, one has that the MC is a special case of the LRR and the C-inf is a special case of the MC itself. Consequently, the mathematical models that describe the LRR problems can be used to describe the MC and ultimately the C-inf ones as well. Below we present the C-inf mathematical setup developed through such a connection in \cite{Cinfidealwc22}.

We start with a low rank matrix $X_{sol}\in\mR^{n\times n}$ with the singular value decomposition (SVD)
\begin{eqnarray}\label{eq:mcsvd1}
X=U\Sigma V^T,
\end{eqnarray}
where
\begin{eqnarray}\label{eq:mcsvd2}
\sigma(X)\triangleq\diag(\Sigma) \quad \mbox{and} \quad U^TU=I_{n\times n} \quad \mbox{and} \quad V^TV=I_{n\times n},
\end{eqnarray}
with $I_{n\times n}$ being the $n\times n$ identity matrix and $\diag(\cdot)$ being the operator that creates a column vector of the diagonal elements of its matrix argument. We then define $\ell_p^*(X)$ to be the so-called $\ell_p$ (quasi) norm of $\sigma(X)$ (the vector of the singular values of $X$), i.e.
\begin{eqnarray}\label{eq:mclinsys1b1}
\ell_p^*(X)\triangleq \ell_p(\sigma(X)), p\in\mR_+.
\end{eqnarray}
The following limiting $\ell_p(\cdot)$ connections are important as well
\begin{eqnarray}\label{eq:mclinsys1c}
\ell_0^*(X_{sol})\triangleq \ell_0(\sigma(X_{sol}))=\|\sigma(X_{sol})\|_0=\lim_{p\longrightarrow 0}\|\sigma(X_{sol})\|_p= \lim_{p\longrightarrow 0}\ell_p(\sigma(X_{sol}))=\lim_{p\longrightarrow 0}\ell_p^*(X_{sol}).
\end{eqnarray}
Moreover, we also define the so-called block masking matrix $M$ as  (see Figure \ref{fig:Mnlockcinf} as well)
\begin{center}
\tcbset{beamer,lower separated=false, fonttitle=\bfseries,
coltext=black , colback=yellow!70!orange!40!white ,title style={left color=black, right color=red!70!orange!30!white}}
\begin{tcolorbox}[title=$M$ matrix in block causal inference (C-inf):, width=6.2in]
\vspace{-.15in}
\begin{equation}\label{eq:cinfanl2a}
  M\triangleq M^{(l)}\triangleq \1_{n\times 1}\1_{n\times 1}^T- I^{(l)}(I^{(l)})^T   \1_{n\times 1}\1_{n\times 1}^T   I^{(l)}(I^{(l)})^T \quad \mbox{and} \quad I^{(l)} \triangleq \begin{bmatrix}
        \0_{l\times (n-l)} \\
        I_{(n-l)\times (n-l)}
      \end{bmatrix}.
\end{equation}
\vspace{-.25in}
\end{tcolorbox}
\end{center}
\begin{figure}[htb]
\centering
\centerline{\epsfig{figure=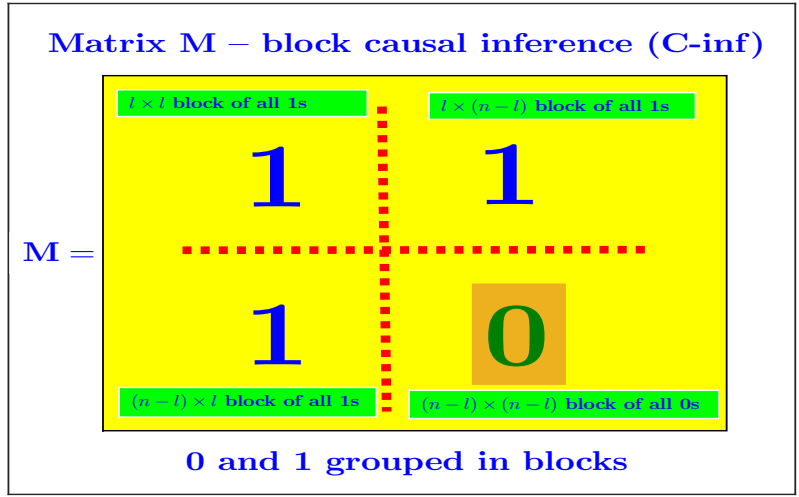,width=13.5cm,height=8cm}}
\caption{Matrix $M\triangleq M^{(l)}$ -- block causal inference (\bl{\textbf{C-inf}}) setup}
\label{fig:Mnlockcinf}
\end{figure}
One then has the following two optimization problems that are at the heart of the \prp{\textbf{\emph{C-inf $\leftrightarrow$ MC}}} connection
\begin{center}
\tcbset{beamer,lower separated=false, fonttitle=\bfseries,width=3.4in, coltext=black ,
colback=yellow!70!orange!40!white,title style={left color=cyan!40!black!80!purple, right color=red!60!yellow!40!orange!80!white},
width=(\linewidth-4pt)/4, equal height group=AT,before=,after=\hfill,fonttitle=\bfseries}
\begin{tcolorbox}[title=$\ell_0^*$-minimization (C-inf -- \yellow{MMT}), width=3.1in]
\vspace{-.15in}
\begin{eqnarray}\label{eq:genmcl0posmmt}
\min_{X} & & \ell_0^*(X) \nonumber \\
\hspace{-.0in} \mbox{subject to} & & Y=M\circ X.\hspace{.4in}
\end{eqnarray}
\vspace{-.32in}
\end{tcolorbox}
\begin{tcolorbox}[title=$\ell_1^*$-minimization (C-inf -- \yellow{MMT}), width=3.2in]
\vspace{-.15in}
\begin{eqnarray}\label{eq:genmcl1posmmt}
\min_{X} & & \ell_1^*(X) \nonumber \\
 \mbox{subject to} & & Y=M\circ X, \hspace{.5in}
\end{eqnarray}
\vspace{-.32in}
\end{tcolorbox}
\end{center}
where $\circ$ stands for the component-wise multiplication. Namely, keeping in mind that $\ell_0^*(X)$ effectively counts the number of the nonzero singular values of $X$, the optimization problem in (\ref{eq:genmcl0posmmt}) is exactly the recovery of the C-inf targeted low rank matrix $X$ from the linear observations $Y$ obtained through a masking via $M$. Moreover, the problem in (\ref{eq:genmcl0posmmt}) (with a generic $M$) is a standard matrix completion setup which on the other hand is a special case of the LRR problems (expressed in the ``masking matrix terminology" (MMT)). On the other hand, the optimization problem in (\ref{eq:genmcl1posmmt}) is the tightest convex relaxation heuristic typically utilized in the matrix completion literature for solving NP-hard problem (\ref{eq:genmcl0posmmt}). For more on the origin of these two problems and their connection within the LRR and MC context we refer to the introductory LRR/MC papers \cite{RFPrank,CR09matcomp,SAT05}. More on their importance and different related algorithmic considerations can be found in many papers that followed (see, e.g. \cite{CT10matcomp,KMO10matcomp,KMO10matcomp1,Klopp14matcomp,KLT11matcomp,NW11matcomp,NW12matcomp,RT11matcomp,MHT10}).

Here though, we would particularly like to point out reference \cite{ABDIK21} where the very same C-inf context was considered and the very same \prp{\textbf{\emph{C-inf $\leftrightarrow$ MC}}} connection recognized. Considerations from \cite{ABDIK21} are in fact especially convenient to properly understand in what C-inf contexts the block structure of the matrix $M$ might appear. To see that one can connect it to the so-called counterfactuals and the units/treatments terminology employed in \cite{ABDIK21}.

First we note that $M$ can be alternatively defined as
\begin{equation}
M_{i,j}=\begin{cases}
          1, & \mbox{$(i,j)$-th element of $X_{sol}$ is observed}  \\
          0, & \mbox{otherwise}.
        \end{cases}  \label{eq:mc2}
\end{equation}
It is then rather clear that ones in $M$ allow reading out the corresponding elements of $X_{sol}$ while zeros block (mask) them. Then the context of \cite{ABDIK21} is roughly as follows. One first assumes that the matrix $X$ contains observations about a certain set of, say, $n$ units {(e.g. individuals, subpopulations, and geographic regions)} over a period of say, $n$, time instances. After that the rows of $X$ are allocated to the units and the columns to the time instances and one would like to estimate the effects that a certain treatment may have on the treated units. A subset of the units (say those that correspond to the rows $i>l$) is then at time $l$ exposed to an irreversible treatment. {Examples of treatments include health therapies, socio-economic policies, and taxes.} To ensure an appropriate assessment of the resulting treatment effects, in addition to having the values of $X$ after the treatment, one would need to have the access to the so-called \bl{\textbf{\emph{counterfactuals}}} -- the values of the treated units -- had the treatment not been applied. Relating back to the  matrix completion terminology, one would basically need to estimate (a presumably low rank) $X$ while not having access to its portion covered by the block-mask $M=M^{(l)}$. In other words, one would need to solve (\ref{eq:genmcl0posmmt}) with $M=M^{(l)}$.

The above describes the C-inf via counterfactuals and the underlying role of matrix $M$. Moreover, if one views things in the time domain, i.e. if the columns of $M$ represent time axis, then the observations in ceratin rows will not be available after a fixed point in time. In the block scenario this point is fixed across the affected rows. However, it does not necessarily need to be fixed (for more in this direction we refer to \cite{ADHsynth10} (in particular, the California tobacco example), \cite{XPlatfac10} (in particular, the latent factor modeling in the context of the simultaneous/staggered treatment adoption), and to \cite{AthImb18,AthSte02,ShaTou19} (in particular, the health care applications) as excellent references for understanding the need of various C-inf scenarios). As this and \cite{Cinfidealwc22} are the introductory papers, where we present the overall methodology, we selected the block causal inference scenario as probably the most representative and well-known one. In some of our companion papers we will show how the methodology that we are introducing here can be utilized to handle other C-inf scenarios as well.

\section{$\ell_0^*-\ell_1^*$ equivalence}
\label{sec:cinfreleqv}

As mentioned earlier, solving the generic LRR (and consequently the C-inf as its a special case) might be difficult due to a highly non-convex objective function in (\ref{eq:genmcl0posmmt}). Various heuristics can be employed depending on the practical scenarios that one can face. In the mathematically most challenging so-called linear regime, the above mentioned $\ell_1^*$-minimization relaxation heuristic (often called nuclear norm minimization) is typically viewed as the best known provably polynomial one. We adopt the same view in what follows and take it as a current benchmark for the algorithmic handling of the C-inf. As mentioned above, a rather remarkable feature of this heuristic is that sometimes it can actually solve the underlying problems exactly. When that happens we say that the following $\ell_0^*-\ell_1^*$-equivalence phenomenon occurs.

\begin{center}
\tcbset{beamer,lower separated=false, fonttitle=\bfseries,
coltext=black , colback=yellow!70!orange!40!white ,title style={left color=black, right color=red!70!orange!30!white}}
\begin{tcolorbox}[title=$\ell_0^*-\ell_1^*$-equivalence (C-inf): \yellow{$\ell_0^*\Longleftrightarrow\ell_1^*$}, width=6.43in]
\vspace{-.0in}
Let $X_{sol}$ be the solution of (\ref{eq:genmcl0posmmt}) and let $\hat{X}$ be a solution of (\ref{eq:genmcl1posmmt}) and set
\begin{eqnarray*}\label{eq:genmcl1poseqvrmse}
\mathbf{RMSE}\triangleq\|\vecw(\hat{X})-\vecw(X_{sol})\|_2.
\end{eqnarray*}
\vspace{-.3in}
\begin{eqnarray}\label{eq:genmcl1poseqv}
\mbox{If and only if } \prp{(\hat{X}=X_{sol} \mbox{ and } \mathbf{RMSE}=0)} \quad \mbox{then} \quad \prp{(\ell_0^{*}-\mbox{minimization} \Longleftrightarrow \ell_1^{*}-\mbox{minimization})}.\quad
\end{eqnarray}
\vspace{-.3in}
\end{tcolorbox}
\end{center}

The above basically means that when the $\ell_0^*-\ell_1^*$-equivalence happens the optimization problems in (\ref{eq:genmcl0posmmt}) and (\ref{eq:genmcl1posmmt}) are equivalent and as such replaceable by each other. We denote such a phenomenon as $\ell_0^*\Longleftrightarrow\ell_1^*$. That would, of course, be an ideal scenario where it would be basically possible to replace the non-convex optimization problem with the convex one without losing anything in terms of the accuracy of the obtained solutions. Since the mere existence of such a phenomenon is rather remarkable we will in this paper be interested in uncovering the underlying intricacies that enable for it ro happen. Moreover, as it will turn out that its occurrence is not an anomaly but rather a consequence of a generic property, we will then raise the bar accordingly and attempt to provide not only the proof of its existence but also its a complete analytical characterization. This will include a full characterization as to how often and in what scenarios it might happen. To do so we will combine the Random Duality Theory (RDT) tools from \cite{StojnicCSetam09,StojnicUpper10,StojnicICASSP10var,StojnicICASSP10knownsupp,StojnicICASSP09,StojnicISIT2010binary,StojnicRegRndDlt10,StojnicGenLasso10} and several advanced sophisticated probabilistic concepts that we will introduce along the way in the sections that follow below.

We start with some algebraic $\ell_0^*-\ell_1^*$-equivalence preliminaries which are borrowed from the RDT. The first one is a generic LRR $\ell_0^*-\ell_1^*$-equivalence result (the result is basically an adaptation of the corresponding CS equivalence condition from \cite{StojnicCSetam09,StojnicUpper10,StojnicICASSP09} (similar adaptation can also be found in \cite{OH10})).

\begin{theorem}(\cite{Cinfidealwc22} \textbf{\bl{$\ell_0^*-\ell_1^*$-equivalence condition (LRR)}} -- \textbf{general}  $X$)
Consider a $\bU\in\mR^{n\times k}$ such that $\bU^T\bU=I_{k\times k}$ and a $\bV\in\mR^{n\times k}$ such that $\bV^T\bV=I_{k\times k}$ and a  $\rankw-k$  matrix $X_{sol}=X\in\mR^{n\times n}$  with all of its columns belonging to the span of $\bU$ and all of its rows belonging to the span of $\bV^T$. Also, let the orthogonal spans $\bU^{\perp}\in\mR^{n\times (n-k)}$ and $\bV^{\perp}\in\mR^{n\times (n-k)}$ be such that $U\triangleq \begin{bmatrix}
    \bU & \bU^{\perp}
   \end{bmatrix}$ and $V\triangleq \begin{bmatrix}
    \bV & \bV^{\perp}
   \end{bmatrix}$ and
\begin{equation}\label{eq:cinfthm0}
U^TU\triangleq \begin{bmatrix}
    \bU & \bU^{\perp}
   \end{bmatrix}^T\begin{bmatrix}
    \bU & \bU^{\perp}
   \end{bmatrix}=I_{n\times n} \quad \mbox{and} \quad
 V^TV \triangleq\begin{bmatrix}
    \bV & \bV^{\perp}
   \end{bmatrix}^T\begin{bmatrix}
    \bV & \bV^{\perp}
   \end{bmatrix}=I_{n\times n}.
 \end{equation}
For a given matrix $A\in\mR^{m\times n^2}$ ($m\leq n^2$) assume that $\y=A\vecw(X)=A\vecw(X_{sol})\in \mR^m$ and let $\hat{X}$ be the solution of (\ref{eq:genmcl1posmmt}). If
\begin{equation}
(\forall W\in \mR^{n\times n} | A\vecw(W)=\0_{m\times 1},W\neq \0_{n\times n}) \quad  -\tr(\bU^TW\bV)< \ell_1^*((\bU^{\perp})^TW\bV^{\perp}),
\label{eq:cinfthm1}
\end{equation}
then
\begin{equation}
\ell_0^*\Longleftrightarrow \ell_1^* \quad \mbox{and}\quad  \textbf{\emph{RMSE}}=\|\mbox{\emph{vec}}(\hat{X})-\mbox{\emph{vec}}(X_{sol})\|_2=0,\label{eq:cinfthm1a}
\end{equation}
and the solutions of (\ref{eq:genmcl0posmmt}) and (\ref{eq:genmcl1posmmt})  coincide. Moreover, if
\begin{equation}
(\exists  W\in \mR^{n\times n} | A\vecw(W)=\0_{m\times 1},W\neq \0_{n\times n}) \quad  -\tr(\bU^TW\bV)\geq \ell_1^*((\bU^{\perp})^TW\bV^{\perp}),
\label{eq:cinfthm2}
\end{equation}
then there is an $X$ from the above set of matrices with columns belonging to the span of $\bU$ and rows belonging to the span of $\bV$ such that the solutions of (\ref{eq:genmcl0posmmt})  and (\ref{eq:genmcl1posmmt})  are different.
\label{thm:cinfthm1}
\end{theorem}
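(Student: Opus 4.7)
The overall strategy is to reduce the claim to the standard subdifferential characterization of the nuclear norm at a rank-$k$ matrix with prescribed left/right singular subspaces, and then match the resulting directional-derivative condition against the affine feasible set carved out by $A\vecw(\cdot)=\y$. Throughout I identify $\bU,\bV$ with the compact SVD factors of $X_{sol}$, so that $X_{sol}=\bU\Sigma\bV^T$ with $\Sigma\in\mR^{k\times k}$ diagonal and positive.

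For the sufficient direction $(\ref{eq:cinfthm1})\Rightarrow(\ref{eq:cinfthm1a})$, every feasible point of $(\ref{eq:genmcl1posmmt})$ writes as $\hat X=X_{sol}+W$ with $A\vecw(W)=\0_{m\times 1}$. At such an $X_{sol}$ the subdifferential of $\ell_1^*$ is the well-known set
\begin{equation*}
\partial\ell_1^*(X_{sol})=\{\bU\bV^T+\bU^{\perp}Q_0(\bV^{\perp})^T:\ Q_0\in\mR^{(n-k)\times(n-k)},\ \|Q_0\|\le 1\},
\end{equation*}
with $\|\cdot\|$ the operator norm. Applying the subgradient inequality $\ell_1^*(\hat X)\ge \ell_1^*(X_{sol})+\langle G,W\rangle$ for every such $G$ and taking the supremum over $Q_0$ via the operator/nuclear duality $\sup_{\|Q_0\|\le 1}\langle Q_0,M\rangle=\ell_1^*(M)$ collapses the bound to
\begin{equation*}
\ell_1^*(\hat X)\ge \ell_1^*(X_{sol})+\tr(\bU^T W\bV)+\ell_1^*((\bU^{\perp})^T W\bV^{\perp}).
\end{equation*}
Hypothesis $(\ref{eq:cinfthm1})$ makes the last two terms strictly positive for every nonzero $W$ in the null space of $A\vecw(\cdot)$, so $\ell_1^*(\hat X)>\ell_1^*(X_{sol})$ unless $W=\0_{n\times n}$. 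Optimality of $\hat X$ therefore forces $W=\0_{n\times n}$, and $(\ref{eq:cinfthm1a})$ follows.

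For the converse, I fix any admissible $X=\bU\Sigma\bV^T$ (any positive diagonal $\Sigma$) and the $W$ furnished by $(\ref{eq:cinfthm2})$. The same subdifferential yields the one-sided directional derivative
\begin{equation*}
(\ell_1^*)'(X;W)=\sup_{G\in\partial\ell_1^*(X)}\langle G,W\rangle=\tr(\bU^T W\bV)+\ell_1^*((\bU^{\perp})^T W\bV^{\perp})\le 0.
\end{equation*}
Since $W$ lies in the null space, $t\mapsto X+tW$ remains feasible for $(\ref{eq:genmcl1posmmt})$, and when $(\ref{eq:cinfthm2})$ holds strictly the limit definition of the directional derivative furnishes an arbitrarily small $t>0$ with $\ell_1^*(X+tW)<\ell_1^*(X)$, so the minimizer $\hat X$ cannot coincide with $X$.

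The main obstacle I anticipate is the boundary equality case in $(\ref{eq:cinfthm2})$: then $(\ell_1^*)'(X;W)=0$ and convexity alone yields only $\ell_1^*(X+tW)\ge\ell_1^*(X)$. I would handle it by perturbing $W$ inside the null space of $A\vecw(\cdot)$, exploiting the continuity of $h(W)\triangleq\tr(\bU^T W\bV)+\ell_1^*((\bU^{\perp})^T W\bV^{\perp})$ to restore a strictly negative derivative in a nearby direction, or, should $h$ vanish on a neighbourhood in the null space, by exhibiting a flat segment of $\ell_1^*(X+tW)$ which directly provides a distinct optimizer. The remaining ingredients — the explicit $\partial\ell_1^*$ formula and the operator/nuclear duality — are classical and form the algebraic backbone of both directions.
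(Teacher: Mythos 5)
Your sufficient direction is correct and is effectively the same argument as the paper's, only expressed more compactly through the classical subdifferential of the nuclear norm. The paper instead writes $\ell_1^*$ as a maximum over the operator-norm unit ball and restricts to a block-diagonal subset of dual certificates to obtain the same lower bound $\ell_1^*(\hat X)\ge\ell_1^*(X_{sol})+\tr(\bU^T W\bV)+\ell_1^*((\bU^{\perp})^TW\bV^{\perp})$, so the two presentations coincide.

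The gap is in your necessity direction, precisely at the boundary case you flag, and neither of your two proposed fixes actually closes it. If $\min_{W\in\ker(A\vecw(\cdot)),W\neq 0}h(W)=0$ (with $h(W)\triangleq\tr(\bU^T W\bV)+\ell_1^*((\bU^{\perp})^TW\bV^{\perp})$), then any perturbation of $W$ inside the null space can only push $h$ up to or above zero, so there is no ``nearby direction'' with strictly negative derivative to exploit. Your alternative — $h$ vanishing on a neighbourhood and hence a flat segment — is not implied by the boundary hypothesis, and more importantly a vanishing directional derivative of a convex function does not give flatness: one can have $(\ell_1^*)'(X;W)=0$ and still $\ell_1^*(X+tW)>\ell_1^*(X)$ for all $t>0$. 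What the paper does, and what your approach misses, is to exploit the freedom in choosing $X$ so that the subgradient inequality you correctly wrote down becomes an \emph{equality} for small $t$. Concretely: take $X=\bU\Sigma\bV^T$ with $\Sigma$ large enough that $\bU^T(X+W)\bV$ remains positive (so the $\ell_1^*$ on that block collapses to a trace), and then show that the restriction of the dual problem to block-diagonal certificates $\Lambda_*$ is tight. The latter is exactly the content of the paper's Lemmas~\ref{lemma:genmclemma3} and~\ref{lemma:genmclemma4}: once the $\bU$-block of the optimal certificate has diagonal entries $\geq 1$, the certificate is forced to be block diagonal with identity upper-left block, so ${\cal L}_*^0={\cal L}_*$ and $\ell_1^*(X+W)=\ell_1^*(X)+h(W)\le\ell_1^*(X)$. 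When some diagonal entry is $<1$ the paper instead drives one singular value of $X$ to infinity to get a strict drop. Without an argument of this kind, your directional-derivative reduction proves only the contrapositive of the sufficient direction (``if $X$ is a strict local minimizer for every admissible $X$ then $h>0$ on the null space''), which is weaker than the theorem's ``moreover'' clause.
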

\begin{proof}
  The proof is a trivial adaptation of the proof for symmetric matrices given in Appendix \ref{sec:appA}.
\end{proof}

Continuing further in the spirit of the RDT the following corollary is a matrix completion specific variant of the above theorem.
\begin{corollary}(\cite{Cinfidealwc22} \textbf{\bl{$\ell_0^*-\ell_1^*$-equivalence condition via masking matrix (MC/C-inf)}} -- \textbf{general}  $X$)
Assume the setup of Theorem \ref{thm:cinfthm1} with $X_{sol}$ being the unique solution of (\ref{eq:genmcl0posmmt}). Let the masking matrix $M\in\mR^{n\times n}$ have $m$ ones and $(n^2-m)$ zeros and let $A$ be generated via $M$, i.e. let $A$ be the matrix obtained after removing all the zero rows from $\diag^{-1}(\vecw(M))I_{n^2\times n^2}$. If and only if
\begin{equation}
\min_{W,W^TW=1,M\circ W=\0_{n\times n}}  \tr(\bU^TW\bV)+\ell_1^*((\bU^{\perp})^TW\bV^{\perp})\geq 0,
\label{eq:cinfcor1}
\end{equation}
then
\begin{equation}
\ell_0^*\Longleftrightarrow \ell_1^* \quad \mbox{and}\quad  \textbf{\emph{RMSE}}=\|\mbox{\emph{vec}}(\hat{X})-\mbox{\emph{vec}}(X_{sol})\|_2=0,\label{eq:cinfcor1a}
\end{equation}
and the solutions of (\ref{eq:genmcl0posmmt})  and (\ref{eq:genmcl1posmmt}) coincide.
 \label{cor:cinfcor1}
\end{corollary}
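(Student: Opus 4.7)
The plan is to specialize Theorem \ref{thm:cinfthm1} to the matrix-completion choice of $A$ determined by $M$, and then rewrite its condition in a scale-normalized compact form.

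First, I would match the feasibility sets. By construction $A$ is the submatrix of the $n^2\times n^2$ identity obtained by keeping exactly those rows indexed by positions $(i,j)$ with $M_{i,j}=1$. Hence for every $W\in\mR^{n\times n}$, the equation $A\vecw(W)=\0_{m\times 1}$ is equivalent to requiring $W_{i,j}=0$ at each such position, i.e.\ to the Hadamard identity $M\circ W=\0_{n\times n}$. This turns the feasibility cone of Theorem \ref{thm:cinfthm1} into the feasibility set appearing in (\ref{eq:cinfcor1}).

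Next, I would leverage positive homogeneity. Both $\tr(\bU^TW\bV)$ and $\ell_1^*((\bU^{\perp})^TW\bV^{\perp})$ are homogeneous of degree one in $W$, while the constraint $M\circ W=\0_{n\times n}$ is scale invariant. Thus the ``$\forall W\neq \0_{n\times n}$'' quantifier in (\ref{eq:cinfthm1}) can be restricted, via rescaling, to the compact normalization set used in (\ref{eq:cinfcor1}). The continuous objective attains its minimum on this compact set, and the sign of that minimum encodes the condition of Theorem \ref{thm:cinfthm1}.

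The remaining step is to reconcile the strict inequality appearing in (\ref{eq:cinfthm1}) with the non-strict $\geq 0$ appearing in (\ref{eq:cinfcor1}). The ``only if'' half of Theorem \ref{thm:cinfthm1} (the contrapositive of (\ref{eq:cinfthm2})) handles the case of a strictly negative minimum, ruling out the $\ell_0^*$--$\ell_1^*$ equivalence. For the boundary situation --- minimum equal to zero, attained at some nonzero admissible $W^\star$ --- I would argue that perturbing $X_{sol}$ along $\pm W^\star$ preserves both the observations and the nuclear norm, producing a competing $\ell_1^*$-minimizer distinct from $X_{sol}$; invoking the standing hypothesis that $X_{sol}$ is the \emph{unique} $\ell_0^*$-minimizer then shows such a boundary $W^\star$ cannot coexist with the stated equivalence. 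Pasting the two directions together yields the desired ``if and only if''.

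The main obstacle is precisely this strict-versus-non-strict bookkeeping together with the careful use of the uniqueness of $X_{sol}$. Everything else is a direct translation between the generic RDT condition of Theorem \ref{thm:cinfthm1} and its matrix-completion instantiation via (\ref{eq:mc2}) and (\ref{eq:cinfanl2a}).
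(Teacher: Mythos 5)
Your first two steps are correct and essentially settle the reduction: identifying $\{W:A\vecw(W)=\0_{m\times 1}\}$ with $\{W:M\circ W=\0_{n\times n}\}$ follows immediately from the construction of $A$ as a sub-block of the identity, and degree-one homogeneity of $\tr(\bU^TW\bV)+\ell_1^*((\bU^{\perp})^TW\bV^{\perp})$ together with scale-invariance of the feasibility constraint lets you pass from the ``$\forall\,W\neq\0$'' quantifier of (\ref{eq:cinfthm1})--(\ref{eq:cinfthm2}) to a minimum over the normalized slice of (\ref{eq:cinfcor1}). So far so good.

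The boundary discussion, however, does not hold up. Look at what Theorem~\ref{thm:cinfthm1} actually splits on: (\ref{eq:cinfthm1}) is a \emph{strict} inequality, so the forward half gives equivalence when the normalized minimum is strictly positive; (\ref{eq:cinfthm2}) is a \emph{non-strict} inequality, so the reverse half already covers the case where the minimum is exactly zero. Thus a minimizing $W^\star$ with value $0$ falls, per Theorem~\ref{thm:cinfthm1}, on the \emph{failure} side, not the success side, and there is nothing for a new argument to patch. The mechanism you invoke also does not function as claimed: the inequality chain that Theorem~\ref{thm:cinfthm1}'s proof uses (see (\ref{eq:genmcproof1})--(\ref{eq:genmcproof2}) in the appendix, transposed from $\bU$ to $(\bU,\bV)$) gives only the one-sided bound
\begin{equation*}
\ell_1^*(X_{sol}+W)\;\ge\;\ell_1^*(X_{sol})+\tr(\bU^TW\bV)+\ell_1^*((\bU^{\perp})^TW\bV^{\perp}),
\end{equation*}
so a $W^\star$ achieving the zero value yields $\ell_1^*(X_{sol}\pm W^\star)\ge\ell_1^*(X_{sol})$ and nothing more; it does \emph{not} show that the nuclear norm is preserved along $\pm W^\star$, hence you cannot manufacture a competing $\ell_1^*$-minimizer from it. Moreover, the uniqueness of $X_{sol}$ as an $\ell_0^*$-minimizer is a hypothesis about ranks and carries no information about multiplicity of $\ell_1^*$-minimizers; it cannot be used to exclude the boundary $W^\star$ in the way you describe.

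The correct and shorter route is to stop after your steps~1 and~2: the corollary is a direct restatement of Theorem~\ref{thm:cinfthm1} under the $A\leftrightarrow M$ translation, with (\ref{eq:cinfcor1}) encoding (\ref{eq:cinfthm1}) on one side of zero and (the negation of) (\ref{eq:cinfthm2}) on the other, and with the sign convention in (\ref{eq:cinfcor1}) to be read against (\ref{eq:cinfthm2})'s $\ge$. No additional perturbation or uniqueness argument is needed, and as given yours is not sound.
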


Finally, the following spectral oriented corollary was proven in \cite{Cinfidealwc22} as well.

\begin{corollary}(\cite{Cinfidealwc22} \textbf{\bl{$\ell_0^*-\ell_1^*$-equivalence condition via mask-modified bases spectra (C-inf)}} -- \textbf{general}  $X$)
Assume the setup of Theorem \ref{thm:cinfthm1} with $k\leq l$. Let $M\triangleq M^{(l)}\in\mR^{n\times n}$ and $I^{(l)}\in\mR^{n\times (n-l)}$ be as defined in (\ref{eq:cinfanl2a}). Set
\begin{eqnarray}
\Lambda_V & \triangleq & ((I^{(l)})^T \bV^{\perp})^{-1} (I^{(l)})^T \bV \nonumber \\
\Lambda_U & \triangleq & ((I^{(l)})^T \bU^{\perp})^{-1} (I^{(l)})^T \bU \nonumber \\
 Q & = &   \lp (I^{(l)})^T\bV^{\perp}(\bV^{\perp})^T I^{(l)}\rp^{-1} -I \nonumber \\
  Q_1^{\perp} & = &   \lp (I^{(l)})^T\bU^{\perp}(\bU^{\perp})^T I^{(l)}\rp^{-1} -I. \label{eq:cinfanl9}
\end{eqnarray}
\begin{center}
 \begin{tcolorbox}[beamer,title=\textbf{C-inf \yellow{perfectly succeeds:  $\ell_0^*\Longleftrightarrow \ell_1^* \quad \mbox{and}\quad  \textbf{\emph{RMSE}}=\|\mbox{\emph{vec}}(\hat{X})-\mbox{\emph{vec}}(X_{sol})\|_2=0$}},lower separated=false, colback=yellow!95!green!40!white,
colframe=red!75!blue!60!black,fonttitle=\bfseries,width=6in]
\begin{equation}
 \mbox{If and only if} \quad \lambda_{max}(\Lambda_V^T\Lambda_V\Lambda_U^T\Lambda_U)\leq 1.\label{eq:cinfcor3eq1}
\end{equation}
 \end{tcolorbox}
\end{center}
Moreover, if
\begin{eqnarray}
\lambda_{max} \lp Q\rp
\lambda_{max} \lp Q_1^{\perp}\rp \leq 1,\label{eq:cinfcor3eq2}
\end{eqnarray}
then again $\ell_0^*\Longleftrightarrow \ell_1^*$ and $\textbf{\emph{RMSE}}=\|\mbox{\emph{vec}}(\hat{X}-\mbox{\emph{vec}}(X_{sol})\|_2=0$ and the C-inf perfectly succeeds as well.
\label{cor:cinfcor3}
\end{corollary}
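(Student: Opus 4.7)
The plan is to start from the equivalence condition in Corollary \ref{cor:cinfcor1} and convert it, via nuclear-norm duality, into a dual-certificate problem whose minimum operator norm can be computed in closed form. Because $M^{(l)}$ vanishes precisely on the lower-right $(n-l)\times(n-l)$ block, the constraint $M\circ W=\0$ forces $W=I^{(l)}\tilde{W}(I^{(l)})^T$ for an arbitrary $\tilde{W}\in\mR^{(n-l)\times(n-l)}$. Setting $A=(I^{(l)})^T\bU$, $B=(I^{(l)})^T\bU^{\perp}$, $C=(I^{(l)})^T\bV$, $D=(I^{(l)})^T\bV^{\perp}$ and using cyclicity of the trace, (\ref{eq:cinfcor1}) reduces to the requirement
\[
\tr(A^T\tilde{W}C)+\ell_1^*(B^T\tilde{W}D)\geq 0 \quad \forall\,\tilde{W}\in\mR^{(n-l)\times(n-l)}.
\]
The orthogonality $U^TU=V^TV=I$ yields the key identities $AA^T+BB^T=I$ and $CC^T+DD^T=I$, and together with the hypothesis $k\leq l$ they make $BB^T$ and $DD^T$ invertible.

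Next I would use the duality $\ell_1^*(B^T\tilde{W}D)=\sup_{\|Y\|_{op}\leq 1}\langle \tilde{W},BYD^T\rangle$ and apply Sion's minimax on the compact convex sets $\{\|\tilde{W}\|_F\leq 1\}$ and $\{\|Y\|_{op}\leq 1\}$. Swapping sup and inf and evaluating the inner infimum over $\tilde{W}$ as $-\|AC^T+BYD^T\|_F$, the condition collapses to the dual-certificate form: \emph{there exists $Y$ with $\|Y\|_{op}\leq 1$ and $BYD^T=-AC^T$}. Equivalence (\ref{eq:cinfcor3eq1}) is therefore the statement that the minimum-operator-norm solution of this affine matrix equation has norm at most $1$.

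To identify this minimum in closed form, I would decompose $\mR^{n-k}$ orthogonally as $\mbox{Im}(B^T)\oplus\ker(B)$ on the left and as $\mbox{Im}(D^T)\oplus\ker(D)$ on the right. Only the block of $Y$ mapping $\mbox{Im}(D^T)\to\mbox{Im}(B^T)$ enters $BYD^T$, while $\|Y\|_{op}$ is bounded below by the operator norm of any restriction of $Y$ to a subspace; hence the minimizer keeps only this ``$(1,1)$-block''. A direct verification then shows that it equals
\[
Y_0=-B^T(BB^T)^{-1}A\,C^T(DD^T)^{-1}D=-\Lambda_U\Lambda_V^T,
\]
whose operator norm obeys $\|\Lambda_U\Lambda_V^T\|_{op}^2=\lambda_{max}(\Lambda_V^T\Lambda_V\Lambda_U^T\Lambda_U)$ because the nonzero eigenvalues of $XY$ and $YX$ coincide. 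This delivers the iff statement (\ref{eq:cinfcor3eq1}). For the sufficient condition (\ref{eq:cinfcor3eq2}), a Sherman--Morrison--Woodbury manipulation yields $\Lambda_U^T\Lambda_U=A^T(I-AA^T)^{-1}A=(I-A^TA)^{-1}-I$, whose nonzero spectrum coincides with that of $Q_1^{\perp}=(I-AA^T)^{-1}-I$; symmetrically $\lambda_{max}(\Lambda_V^T\Lambda_V)=\lambda_{max}(Q)$. PSD submultiplicativity $\lambda_{max}(PQ)\leq\lambda_{max}(P)\lambda_{max}(Q)$ then gives
\[
\lambda_{max}(\Lambda_V^T\Lambda_V\Lambda_U^T\Lambda_U)\leq\lambda_{max}(Q)\lambda_{max}(Q_1^{\perp}),
\]
so (\ref{eq:cinfcor3eq2}) implies (\ref{eq:cinfcor3eq1}).

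I expect the main obstacle to be the closed-form identification of the minimum-norm dual certificate. Because the columns of $B$ and $D$ are not orthonormal, the ``$(1,1)$-block'' must be extracted with some care through the orthogonal projections onto $\mbox{Im}(B^T)$ and $\mbox{Im}(D^T)$, and one must separately verify that (i) the constraint $BYD^T=-AC^T$ depends only on this block, and (ii) the operator norm of the full $Y$ dominates the operator norm of the block (a monotonicity-under-restriction argument). The remaining steps (the minimax exchange, the Woodbury algebra, and the submultiplicativity bound) are then routine.
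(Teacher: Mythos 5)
Your proposal is mathematically sound and, since the paper itself does not reproduce a proof of Corollary~\ref{cor:cinfcor3} (it is attributed to the companion paper \cite{Cinfidealwc22}), I can only evaluate it on its own merits and against the surrounding framework. The route you take---reduce $M\circ W=\0$ to the lower-right block, rewrite Corollary~\ref{cor:cinfcor1} in terms of $A=(I^{(l)})^T\bU$, $B=(I^{(l)})^T\bU^{\perp}$, $C=(I^{(l)})^T\bV$, $D=(I^{(l)})^T\bV^{\perp}$, dualize $\ell_1^*$ against the operator-norm ball, swap $\min/\max$ by Sion, and collapse to a dual-certificate feasibility problem $BYD^T=-AC^T$, $\|Y\|_{op}\leq 1$---is consistent with the RDT viewpoint of the paper and is, as far as I can tell, the natural way to turn (\ref{eq:cinfcor1}) into the spectral criterion (\ref{eq:cinfcor3eq1}). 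All the individual pieces check out: $AA^T+BB^T=I$ and $CC^T+DD^T=I$ from $U^TU=V^TV=I$; the bilinear payoff satisfies Sion's hypotheses on the two compact convex balls; the affine constraint depends on $Y$ only through the $\mathrm{Im}(D^T)\to\mathrm{Im}(B^T)$ block and that block is determined uniquely because $D^T$ and $B$ restrict to bijections onto $\mathrm{Im}(D^T)$ and $\mR^{n-l}$ respectively; $\|P_BYP_D\|_{op}\leq\|Y\|_{op}$ pins $Y_0=-B^T(BB^T)^{-1}AC^T(DD^T)^{-1}D=-\Lambda_U\Lambda_V^T$ as the minimizer; and $\|\Lambda_U\Lambda_V^T\|_{op}^2=\lambda_{max}(\Lambda_V^T\Lambda_V\Lambda_U^T\Lambda_U)$ by cyclicity. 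The push-through/Woodbury step $\Lambda_U^T\Lambda_U=A^T(I-AA^T)^{-1}A=(I-A^TA)^{-1}-I$, identification of nonzero spectra with $Q_1^{\perp}=(I-AA^T)^{-1}-I$, and the PSD inequality $\lambda_{max}(PR)\leq\lambda_{max}(P)\lambda_{max}(R)$ cleanly give (\ref{eq:cinfcor3eq2}) as sufficient.

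Two small points to tighten. First, when $k<l$ the matrix $(I^{(l)})^T\bU^{\perp}$ is a strictly fat $(n-l)\times(n-k)$ matrix, so the ``inverse'' in the definition of $\Lambda_U$ (and $\Lambda_V$) must be read as the right Moore--Penrose pseudoinverse $B^T(BB^T)^{-1}$; you use this implicitly, but it is worth stating up front, since $B^{-1}$ as an ordinary inverse only exists when $k=l$. Second, invertibility of $BB^T=I-AA^T$ and $DD^T=I-CC^T$ requires all singular values of $A$ and $C$ to be strictly less than $1$; this holds for Haar bases almost surely but is a genuine genericity assumption (it is the same assumption that makes $\Lambda_U,\Lambda_V$ well defined) and deserves a one-line acknowledgment. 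Neither affects the validity of your argument.
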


Since we will be working in the mathematically most challenging large $n$ linear regime, we find it useful to introduce the following large dimensional scalings
\begin{eqnarray}
\beta\triangleq \lim_{n\rightarrow \infty}\frac{k}{n}\quad \mbox{and} \quad \eta\triangleq \lim_{n\rightarrow \infty}\frac{l}{n} \quad \mbox{and} \quad \alpha\triangleq \lim_{n\rightarrow \infty}\frac{m}{n^2}=\lim_{n\rightarrow \infty}\frac{n^2-(n-l)^2}{n^2}=1-(1-\eta)^2. \label{eq:typwcanl21}
\end{eqnarray}

The key highlight result of \cite{Cinfidealwc22} is the following theorem obtained through an analysis that relied on the above corollary and a combination of the Random duality theory (RDT) and Free probability theory (FPT). It basically establishes the \emph{worst case} phase-transition (PT) that $\ell_1^*$, tightest convex relaxation heuristic, exhibits when used for solving C-inf in a \emph{typical} statistical scenario.

\begin{theorem}(\textbf{\bl{$\ell_1^*$ -- phase transition -- C-inf (typical \prp{\underline{worst case}})}})
  Consider a rank-$k$ matrix  $X_{sol}=X\in\mR^{n\times n}$ with the Haar distributed (\emph{not necessarily independent}) bases of its orthogonal row and column spans $\bU^{\perp}\in\mR^{n\times (n-k)}$ and $\bV^{\perp}\in\mR^{n\times (n-k)}$ ($X_{sol}^T\bU^{\perp}=X_{sol}\bV^{\perp}=\0_{n\times (n-k)}$). Let $M\triangleq M^{(l)}\in\mR^{n\times n}$ be as defined in (\ref{eq:cinfanl2a}). Assume a large $n$ linear regime with $\beta\triangleq\lim_{n\rightarrow\infty}\frac{k}{n}$ and $\eta\triangleq\lim_{n\rightarrow\infty}\frac{l}{n}$ and let $\beta_{wc}$ and $\eta$ satisfy the  following
\begin{center}
 \begin{tcolorbox}[beamer,title=\textbf{C-inf $\ell_1^*$ \yellow{worst case} phase transition (PT)} characterization,lower separated=false, colback=yellow!95!green!40!white,
colframe=red!75!blue!60!black,fonttitle=\bfseries,width=5in]
  \begin{equation}\label{eq:typwcthm1eq1}
    \xi_{\eta}^{(wc)}(\beta) \triangleq \beta-\frac{1}{2}+\sqrt{\eta-\eta^2}=0.
  \end{equation}
 \end{tcolorbox}
\end{center}
 \noindent \textbf{If and only if} $\beta\leq \beta_{wc}$
    \begin{equation}\label{eq:typwcthm1eq2}
   \lim_{n\rightarrow\infty} \mP(\ell_0^*\Longleftrightarrow \ell_1^*)=\ \lim_{n\rightarrow\infty} \mP(\mathbf{RMSE}=0)=1,
  \end{equation}
and the solutions of (\ref{eq:genmcl0posmmt}) and (\ref{eq:genmcl1posmmt}) coincide with overwhelming probability.
  \label{thm:typwcthm1}
\end{theorem}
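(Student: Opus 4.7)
The argument begins from Corollary~\ref{cor:cinfcor3}, which supplies a necessary and sufficient spectral condition, $\lambda_{\max}(\Lambda_V^T\Lambda_V\,\Lambda_U^T\Lambda_U)\le 1$, for $\ell_0^*$–$\ell_1^*$ equivalence. The first reduction is to argue that the extremal (worst-case) coupling of the Haar-distributed bases $\bU^\perp,\bV^\perp$ is realized by the symmetric choice $\bU=\bV$. In that case $\Lambda_U=\Lambda_V$, so $\Lambda_V^T\Lambda_V\,\Lambda_U^T\Lambda_U=(\Lambda_V^T\Lambda_V)^2$ and the condition collapses to the single-spectrum requirement $\lambda_{\max}(\Lambda_V^T\Lambda_V)\le 1$. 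This is consistent with the paper's stated fact that the worst-case phase transition is achievable by symmetric C-inf matrices.

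Using the orthogonality identity $(I^{(l)})^T\bV\bV^T I^{(l)}+(I^{(l)})^T\bV^\perp(\bV^\perp)^T I^{(l)}=I_{n-l}$, the next step is to rewrite $\Lambda_V^T\Lambda_V = A^T(I-AA^T)^{-1}A$ with $A=(I^{(l)})^T\bV$; equivalently, $Q=((I^{(l)})^T\bV^\perp(\bV^\perp)^T I^{(l)})^{-1}-I$ has the same nonzero spectrum. In both forms the eigenvalues are exactly $s_i^2/(1-s_i^2)$, where $s_1,\dots,s_k$ are the singular values of the bottom $(n-l)\times k$ block $A$ of a Haar-distributed Stiefel matrix. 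The worst-case spectral condition is therefore a requirement on the top squared singular value $s_{\max}^2$, directly relating the recovery threshold to a classical Haar-block spectrum problem.

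The core of the proof is then a Random-Duality / Free-Probability-Theory (RDT--FPT) computation of the asymptotic right edge of the spectrum of $A^TA$ in the linear regime $k/n\to\beta$, $l/n\to\eta$. Since $A$ is a Haar-orthogonal block, its singular-value law is Jacobi/Wachter, and the corresponding $R$- and $S$-transform manipulations (of the type developed in \cite{Cinfidealwc22}) produce a closed-form equation for the asymptotic edge of $\Lambda_V^T\Lambda_V$. Setting this edge equal to the critical value $1$ and simplifying yields the elegant circle relation $(\beta-\tfrac{1}{2})^2+(\eta-\tfrac{1}{2})^2=\tfrac{1}{4}$, which is exactly the claimed identity $\xi_\eta^{(wc)}(\beta)=0$. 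The converse (\emph{only if}) direction is obtained by the standard RDT dual-certificate construction: strictly above the threshold, one exhibits a nonzero $W$ with $M\circ W=\mathbf{0}$ such that $-\tr(\bU^T W\bV)\ge \ell_1^*((\bU^\perp)^T W \bV^\perp)$, which, via Theorem~\ref{thm:cinfthm1}, certifies failure of $\ell_1^*$-minimization.

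The main obstacle is the FPT step: one needs to pin down the \emph{edge} (not merely the bulk) of the limiting spectral distribution of $\Lambda_V^T\Lambda_V$, which requires Tracy–Widom-type concentration arguments to upgrade bulk convergence to almost-sure convergence of $s_{\max}$, so that $\lim_{n\to\infty}\mathbb{P}(\cdot)=1$ is obtained in both directions. A secondary—but delicate—obstacle is the rigorous justification that $\bU=\bV$ is indeed the extremal coupling among all admissible joint Haar distributions on $(\bU^\perp,\bV^\perp)$; the natural route is a variational/rearrangement comparison showing that correlating the two bases can only enlarge the top eigenvalue of $\Lambda_V^T\Lambda_V\,\Lambda_U^T\Lambda_U$, with equality at the symmetric alignment. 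Once these two ingredients are in place, the rest reduces to the algebraic simplification that brings the Jacobi-edge equation into the stated $(\beta,\eta)$-circle form.
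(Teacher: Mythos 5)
Your reconstruction follows essentially the route the paper's framework sets up — Corollary~\ref{cor:cinfcor3} to reduce $\ell_0^*$--$\ell_1^*$ equivalence to a spectral-edge condition, then FPT (Lemma~\ref{lemma:typwclemma1a}) to locate the Jacobi/Wachter edge of the Haar corner block, and a dual certificate via Theorem~\ref{thm:cinfthm1} for the failure direction — so the plan is sound, and the reductions you write ($\Lambda_V^T\Lambda_V=A^T(I-AA^T)^{-1}A$ with eigenvalues $s_i^2/(1-s_i^2)$, threshold at $s_{\max}^2=1/2$, which algebraically collapses to $\beta=\tfrac12-\sqrt{\eta-\eta^2}$) are correct. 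Two remarks, though. First, the paper itself does not prove Theorem~\ref{thm:typwcthm1} but cites it from~\cite{Cinfidealwc22}; the proper comparison is therefore to the imported apparatus (Corollary~\ref{cor:cinfcor3} and the $G$/$S$-transform machinery), which you track faithfully. Second, you overstate the difficulty of establishing that the symmetric coupling $\bU=\bV$ is extremal and propose a ``variational/rearrangement'' argument that is unnecessary: the sufficient condition~(\ref{eq:cinfcor3eq2}) already gives, for \emph{any} admissible coupling, $\lambda_{\max}(\Lambda_V^T\Lambda_V\,\Lambda_U^T\Lambda_U)\le\lambda_{\max}(Q)\,\lambda_{\max}(Q_1^{\perp})$, and since $Q$ and $Q_1^{\perp}$ share the same Haar marginal their asymptotic top edges coincide, so $\lambda_{\max}(Q)\le 1$ certifies success uniformly over couplings; the equal-bases case is only invoked to \emph{saturate} the bound in the ``only if'' direction, which is immediate. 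The genuinely nontrivial technical ingredient you correctly flag is upgrading bulk convergence of the Jacobi spectrum to almost-sure convergence of the extreme eigenvalue, which is what makes the limiting ``overwhelming probability'' statement in~(\ref{eq:typwcthm1eq2}) valid in both directions.
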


The results obtained based on the above theorem are shown in Figure \ref{fig:cinfspectypwcPTbetaeta}, where one can clearly see that the phase transition curve splits the entire $(\beta,\eta)$ region into two subregions: 1) the first one (below (or to the right of) the curve) where the $\ell_0^*-\ell_1^*$-equivalence phenomenon occurs; and 2) the second one (above (or to the left of) the curve) where the $\ell_0^*-\ell_1^*$-equivalence is lacking. This means that one can recover $X_{sol}$ masked by $M$ as in (\ref{eq:genmcl0posmmt}) via the $\ell_1^*$ heuristic from (\ref{eq:genmcl1posmmt}) with the residual mean square error ($\mathbf{RMSE}$) equal to zero. In other words, for the system parameters $(\beta,\eta)$ that belong to the subregion below the curve one has a perfect recovery with $X_{sol}$ and $\hat{X}$ (the respective solutions of (\ref{eq:genmcl0posmmt}) and (\ref{eq:genmcl1posmmt})) being equal to each other and consequently with $\mathbf{RMSE}=\|\vecw(\hat{X})-\vecw(X_{sol})\|_2=0$. On the other hand, in the subregion above the curve, the $\ell_1^*$ heuristic fails and one can even find an $X_{sol}$ for which $\mathbf{RMSE}\rightarrow\infty$.

\begin{figure}[htb]
\centering
\centerline{\epsfig{figure=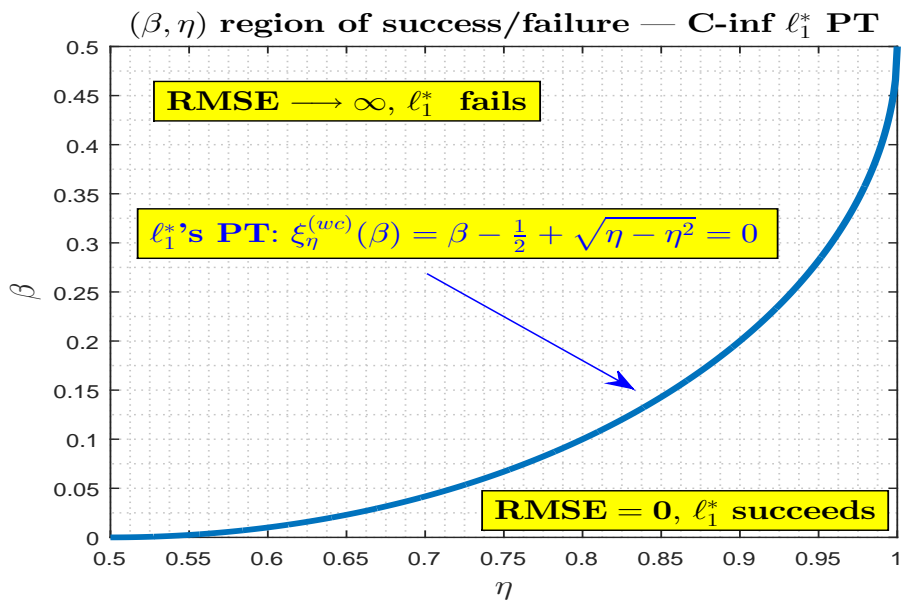,width=13.5cm,height=8cm}}
\caption{Causal inference (\bl{\textbf{C-inf}}) -- typical \emph{\textbf{worst case}} $\ell_1^*$ phase transition}
\label{fig:cinfspectypwcPTbetaeta}
\end{figure}

\section{Analysis of the $\ell_0^*-\ell_1^*$-equivalence -- typical \prp{asymmetric scenario}}
\label{sec:typacanl}

In this section we consider when the conditions given in Corollary \ref{cor:cinfcor3} are met. As in \cite{Cinfidealwc22}, we will be working in a ``\emph{typical}" statistical context. On the other hand, differently from \cite{Cinfidealwc22}, instead of focusing on the \prp{\textbf{\emph{worst case}}} (symmetric) scenario we here consider a typical \prp{\textbf{\emph{asymmetric scenario}}} setup. Practically this means two things: 1) as in \cite{Cinfidealwc22}, both $\bV$ and $\bU$ will be assumed as Haar distributed; and 2) differently from Theorem \ref{thm:typwcthm1} and \cite{Cinfidealwc22}, $\bV$ and $\bU$ will now be assumed as independent of each other. In a way one can view the worst case scenario from \cite{Cinfidealwc22} as an extreme where $\bV$ and $\bU$ are ``\emph{not independent at all}" (or, in other words, equal to each other). Along similar lines, one can then view the scenario that we will consider here as another extreme where $\bV$ and $\bU$ are ``\emph{not dependent at all}" (or, in other words, completely independent). In situations where no particular structure of a low rank nonsymmetric $X_{sol}$ is favored over any other this one would naturally be a most reasonable choice. In other words, it is not only an extreme case, but actually the one that typically might most faithfully describe the performance of the underlying C-inf heuristics.

\subsection{Free probability theory \bl{(FPT)} -- preliminaries}
\label{sec:fptprel}

Below we provide a short preview of the most basic FPT concepts needed for our analysis (we refer to our companion paper \cite{Cinfidealwc22} for a more detail treatment). As is by now well known, the work od Dan Voiculescu on group theories (see, e.g. \cite{Voic86,Voic87,Voic91}) established the foundations of the FPT. As the practical importance of FPT became immediately evident a substantial interest for further studying was generated and, in the years that followed, quite a few nice results appeared that made the whole theory more approachable and ultimately presentable in an easily understandable way. Along the same lines, we follow into the footsteps of \cite{Cinfidealwc22}, leave all the abstractions out and focus on the FPT's key practically applicable components (for further details see also, e.g. \cite{Voic86,Voic87,Voic91,NicaSpeich06,Speich14,Haag97,TulVer04}).

\subsubsection{Basics of FPT -- random matrix variables}
\label{sec:fptprelmatrices}

We assume large $n$ linear regime and consider two symmetric matrices $A=A^T\in\mR^{n\times n}$ and $B=B^T\in\mR^{n\times n}$ with Haar distributed eigenspaces. We also assume that their individual respective spectral laws are $f_A(\cdot)$ and $f_B(\cdot)$. Three different transforms of these spectral densities will be needed. We start with the first one, the so-called Stieltjes (or G) transform
\begin{eqnarray}
    G(z) & \triangleq & \int_{I_f} \frac{f(x)}{z-x} dx, \quad z\in\mC\setminus I_f,  \label{eq:typwcanl6}
\end{eqnarray}
where $I_f$ is the domain of $f(\cdot)$. The following inverse relation is also well known
\begin{eqnarray}
    f(x) =  \lim_{\epsilon\rightarrow 0^+} \frac{G(x-i\epsilon)-G(x+i\epsilon)}{2i\pi}
    \quad \mbox{or} \quad    f(x) =  -\lim_{\epsilon\rightarrow 0^+} \frac{\mbox{imag}(G(x+i\epsilon))}{\pi}.   \label{eq:typwcanl7}
\end{eqnarray}
For the above to hold it makes things easier to implicitly assume that $f(x)$ is continuous. We will, however, utilize it even in discrete (or semi-discrete) scenarios since the obvious asymptotic translation to continuity would make it fully rigorous. A bit later though, when we see some concrete examples where things of this nature may appear, we will say a few more words and explain more thoroughly what exactly can
be discrete and how one can deal with such a discreteness. In the meantime we proceed with general principles not necessarily worrying about all the underlying technicalities that may appear in scenarios deviating from the typically seen ones and potentially requiring additional separate addressing. To that end we continue by considering the $R(\cdot)$- and $S(\cdot)$-transforms that satisfy the following
\begin{eqnarray}
R(G(z))+\frac{1}{G(z)}=z,  \label{eq:typwcanl8}
\end{eqnarray}
and
\begin{eqnarray}
S(z)=\frac{1}{R(zS(z))} \quad \mbox{and}\quad R(z)=\frac{1}{S(zR(z))}.\label{eq:typwcanl9}
\end{eqnarray}
Let $f_A(\cdot)$ and $f_B(\cdot)$ be the spectral distributions of $A$ and $B$ and let $R_A(z)$/$S_A(z)$ and $R_B(z)$/$S_B(z)$ be their associated $R(\cdot)$-/$S(\cdot)$-transforms. One then has the following
\begin{center}
 \begin{tcolorbox}[beamer,title=\textbf{Key Voiculescu's FPT concepts \cite{Voic86,Voic87}:},lower separated=false, colback=yellow!95!green!40!white,
colframe=green!45!blue!60!black,coltext=black,fonttitle=\bfseries,width=5in]
\vspace{-.1in}\begin{eqnarray}
\begin{array}{r c l l r c l}
C & = & A+B & \quad \Longrightarrow \quad $ $ & R_C(z) & = & R_A(z)+R_B(z)\\
C & = & AB  & \quad \Longrightarrow \quad $ $ & S_C(z) & = & S_A(z)S_B(z).
\end{array}\label{eq:typwcanl10}
\end{eqnarray}
 \end{tcolorbox}
\end{center}
\noindent Now it is relatively easy to see that (\ref{eq:typwcanl6})-(\ref{eq:typwcanl10}) are sufficient to determine the spectral distribution of the sum or the product of two independent matrices with given spectral densities and the Haar distributed bases of eigenspaces. The above is of course a generic principle. It can be applied pretty much always as long as one has access to the statistics of the underlying matrices $A$ and $B$. In the following section we will raise the bar a bit higher and show that in the C-inf context one can use all of the above in such a manner that eventually all the quantities of interest are explicitly determined.

\subsubsection{Spectral preliminaries}
\label{sec:fpteqvspecwc}

We start by recalling on $Q$ from (\ref{eq:cinfanl9}) and introducing $Q_1$
\begin{eqnarray}
Q_1 & \triangleq &  \Lambda_V^T\Lambda_V \nonumber \\
Q   & \triangleq &   \lp (I^{(l)})^T\bV^{\perp}(\bV^{\perp})^T I^{(l)}\rp^{-1} -I \nonumber \\
Sp(Q_1) & \Longleftrightarrow_{\setminus 0} & Sp(Q), \label{eq:typwcanl13}
\end{eqnarray}
where $Sp(\cdot)$ stands for the spectrum of the matrix argument and $\Longleftrightarrow_{\setminus 0}$ means the equivalence of the parts of the spectra outside the zero eigenvalues. It is rather obvious that it will then be sufficient to handle the spectrum of
\begin{eqnarray}
D & \triangleq &  (I^{(l)})^T\bV^{\perp}(\bV^{\perp})^T I^{(l)}. \label{eq:typwcanl14}
\end{eqnarray}
Consider Haar distributed $\bU_D^{\perp}\in\mR^{n\times (n-l)}$ with $(\bU_D^{\perp})^T\bU_D^{\perp}=I_{(n-l)\times (n-l)}$ and let
\begin{eqnarray}
U_D & = &  \begin{bmatrix}
             \bU_D & \bU_D^{\perp}
           \end{bmatrix} \quad \mbox{with} \quad U_D^TU_D=I_{n\times n}. \label{eq:typwcanl15}
\end{eqnarray}
Also, we assume that $\bU_D^{\perp}$ (and $U_D$) are independent of $\bV^{\perp}$. After setting
\begin{eqnarray}
\bar{D} & \triangleq &  (I^{(l)})^TU_D^T\bV^{\perp}(\bV^{\perp})^T U_D I^{(l)}, \label{eq:typwcanl16}
\end{eqnarray}
we have that the spectra of $D$ and $\bar{D}$ are statistically identical, i.e.
\begin{eqnarray}
Sp(D) \triangleq  Sp((I^{(l)})^T\bV^{\perp}(\bV^{\perp})^T I^{(l)}) \Longleftrightarrow_\mP
  Sp((I^{(l)})^TU_D^T\bV^{\perp}(\bV^{\perp})^T U_D I^{(l)}) \triangleq Sp(\bar{D}), \label{eq:typwcanl17}
\end{eqnarray}
where $\Longleftrightarrow_\mP$ stands for the statistical/probabilistic equivalence. Two facts enable the above statistical identity: 1) the spectrum of the projector $\bV^{\perp}(\bV^{\perp})^T$ does not change under pre- and post-unitary multiplications on both sides; and 2) the Haar structure of $\bV^{\perp}$ remains preserved. Modulo zero eigenvalues, we then further have
\begin{eqnarray}
  Sp((I^{(l)})^TU_D^T\bV^{\perp}(\bV^{\perp})^T U_D I^{(l)}) \Longleftrightarrow_{\mP\setminus 0}
  Sp(\bV^{\perp}(\bV^{\perp})^T U_D I^{(l)}(I^{(l)})^TU_D^T) \Longleftrightarrow
  Sp(\bV^{\perp}(\bV^{\perp})^T \bU_D^{\perp}(\bU_D^{\perp})^T), \label{eq:typwcanl18}
\end{eqnarray}
where, similarly as above, $\Longleftrightarrow_{\mP\setminus 0}$ stands for the statistical/probabilistic equivalence in the part of the spectrum outside the zero eignevalues (introduced due to the non-square underlying matrices). Clearly, the key object of our interest below will be
\begin{eqnarray}
\tilde{D} & \triangleq & \bV^{\perp}(\bV^{\perp})^T \bU_D^{\perp}(\bU_D^{\perp})^T, \label{eq:typwcanl19}
\end{eqnarray}
where both $\bV^{\perp}$ and $\bU_D^{\perp}$ are Haar distributed and independent of each other. After setting
\begin{eqnarray}
{\cal V} & \triangleq & \bV^{\perp}(\bV^{\perp})^T \nonumber \\
{\cal U} & \triangleq & \bU_D^{\perp}(\bU_D^{\perp})^T, \label{eq:typwcanl20}
\end{eqnarray}
we easily have from (\ref{eq:typwcanl19})
\begin{eqnarray}
\tilde{D} & \triangleq & {\cal V}{\cal U}. \label{eq:typwcanl20a}
\end{eqnarray}
The following lemma proven in \cite{Cinfidealwc22} characterizes the $G$-transform of $\tilde{D}$..
\begin{lemma}(\cite{Cinfidealwc22})
  Let $\bV^{\perp}\in\mR^{n\times (n-k)}$ and $\bU_D^{\perp}\in\mR^{n\times (n-k)}$ be Haar distributed unitary bases of $(n-k)$-dimensional subspaces of $\mR^n$. Let $\calV$ and $\calU$ be as in (\ref{eq:typwcanl20}) and $\tilde{D}$ as in (\ref{eq:typwcanl20a}), i.e.
   \begin{eqnarray}
  {\cal V} & \triangleq & \bV^{\perp}(\bV^{\perp})^T \nonumber \\
    {\cal U} & \triangleq & \bU_D^{\perp}(\bU_D^{\perp})^T\nonumber \\
     \tilde{D} & \triangleq & {\cal V}{\cal U}.\label{eq:typwclemma1aeq1}
\end{eqnarray}
In the large $n$ linear regime, with $\beta\triangleq\lim_{n\rightarrow\infty}\frac{k}{n}$, the $G$-transform of the spectral density of $\tilde{D}$, $f_{\tilde{D}}(\cdot)$, is
\begin{eqnarray}
G_{\tilde{D}}^{\pm}(z)=\frac{z-(\beta+\eta)\pm\sqrt{(z-(\beta+\eta))^2+4\beta\eta(z-1)}}{2(z^2-z)}.\label{eq:typwclemma1aeq2}
\end{eqnarray}\label{lemma:typwclemma1a}
\end{lemma}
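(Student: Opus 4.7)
Since the bases $\bV^\perp$ and $\bU_D^\perp$ are independent and Haar-distributed, the two orthogonal projectors $\calV$ and $\calU$ in (\ref{eq:typwclemma1aeq1}) are asymptotically freely independent in the sense of Voiculescu. Their limiting spectral laws are Bernoulli: $\calV$ has law $\beta\delta_0+(1-\beta)\delta_1$ (rank $n-k$ projector) and $\calU$ has law $\eta\delta_0+(1-\eta)\delta_1$ (rank $n-l$ projector, with the parameterization consistent with the rest of Section~\ref{sec:typacanl}). Because $\tilde D=\calV\calU$ is a product, the natural machinery is Voiculescu's multiplicative rule from~(\ref{eq:typwcanl10}); the plan is to compute $S_\calV$ and $S_\calU$ individually, multiply them to get $S_{\tilde D}$, and then invert the standard $G\leftrightarrow R\leftrightarrow S$ chain of (\ref{eq:typwcanl8})--(\ref{eq:typwcanl9}) to recover $G_{\tilde D}$.

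\textbf{Step 1 (individual $S$-transforms).} For the generic Bernoulli law $p\delta_0+(1-p)\delta_1$, the moment series is $\psi(z)=\sum_{n\geq 1} m_n z^n=(1-p)z/(1-z)$; its functional inverse is $\chi(z)=z/(z+1-p)$, so $S(z)=\frac{1+z}{z}\chi(z)=\frac{1+z}{z+1-p}$. Specializing to $p=\beta$ and $p=\eta$ and then multiplying gives
\begin{equation*}
  S_\calV(z)=\frac{1+z}{z+1-\beta},\qquad S_\calU(z)=\frac{1+z}{z+1-\eta},\qquad S_{\tilde D}(z)=\frac{(1+z)^2}{(z+1-\beta)(z+1-\eta)}.
\end{equation*}

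\textbf{Step 2 (back to the $G$-transform).} From $\chi_{\tilde D}(u)=uS_{\tilde D}(u)/(1+u)=u(1+u)/[(u+1-\beta)(u+1-\eta)]$, setting $\chi_{\tilde D}(u)=w$ produces the quadratic
\begin{equation*}
  (w-1)u^2+(w(2-\beta-\eta)-1)u+w(1-\beta)(1-\eta)=0,
\end{equation*}
whose two roots furnish the two branches of $\psi_{\tilde D}(w)=u$. Substituting $w=1/z$ and using the Laurent identity $\psi(1/z)=zG(z)-1$ (equivalent to (\ref{eq:typwcanl6})) yields $G_{\tilde D}(z)$ once the $1/z$ and $\psi(1/z)/z$ pieces are combined over the common denominator $2z(z-1)=2(z^2-z)$.

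\textbf{Step 3 (discriminant simplification -- the main obstacle).} The one nontrivial piece is showing that the raw discriminant $(w(2-\beta-\eta)-1)^2-4w(w-1)(1-\beta)(1-\eta)$, after substituting $w=1/z$ and multiplying by $z^2$, collapses into the clean form $(z-(\beta+\eta))^2+4\beta\eta(z-1)$ displayed in (\ref{eq:typwclemma1aeq2}). This is straightforward but fiddly algebra: the constant term $(2-\beta-\eta)^2-4(1-\beta)(1-\eta)$ reduces to $(\beta-\eta)^2$, the $z$-linear coefficient equals $-2(\beta+\eta)+4\beta\eta$, and completing the square in $z$ reconstructs $(z-(\beta+\eta))^2+4\beta\eta(z-1)$. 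Carrying out this cancellation and assembling yields exactly (\ref{eq:typwclemma1aeq2}); the two signs $\pm$ correspond to the two roots of the quadratic, selected in practice by the Stieltjes boundary condition $G(z)\sim 1/z$ as $|z|\to\infty$ together with non-negativity of the spectral density via (\ref{eq:typwcanl7}). Conceptually, the only nontrivial input is the asymptotic freeness of $\calV$ and $\calU$, which is standard given the independent Haar-invariance of their ranges.
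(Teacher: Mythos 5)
Your proof is correct and is precisely the computation the paper's FPT preliminaries (Section~\ref{sec:fptprelmatrices}) set up for: one takes the individual $S$-transforms of the two free Bernoulli projectors, applies Voiculescu's multiplicative rule~(\ref{eq:typwcanl10}), and inverts through the $S\to\chi\to\psi\to G$ chain, with the discriminant indeed collapsing to $(z-(\beta+\eta))^2+4\beta\eta(z-1)$ exactly as you verify. You also correctly read past a dimensional typo in the lemma statement --- $\bU_D^{\perp}$ should be $n\times(n-l)$, consistent with~(\ref{eq:typwcanl15}), rather than $n\times(n-k)$ --- which is what makes both $\beta$ and $\eta$ appear in the answer.
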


\subsection{Asymmetric scenario -- \bl{FPT} analysis of the $\ell_0^*-\ell_1^*$-equivalence}
\label{sec:fpteqvac}

As in \cite{Cinfidealwc22}, we will again rely on the free probability theory. This time though things will be a bit more complicated as
we will be determining, so to say, the ``joint spectrum"  of $\lambda_V^T\lambda_V\lambda_U^T\lambda_U$. In other words, based on Corollary \ref{cor:cinfcor3} and (\ref{eq:cinfcor3eq1}), we have
 \begin{eqnarray}
\ell_0^*-\ell_1^*--\mbox{equivalence} \quad \Longleftrightarrow \quad \lambda_{max}(\lambda_V^T\lambda_V\lambda_U^T\lambda_U)\leq 1, \label{eq:typacanl0a0}
\end{eqnarray}
and consequently determining the upper edge of the ``joint spectrum"  of $\lambda_V^T\lambda_V\lambda_U^T\lambda_U$ would be then sufficient to establish $\ell_0^*-\ell_1^*$-equivalence. We recall that in \cite{Cinfidealwc22} we determined only the individual spectra $\lambda_V^T\lambda_V$ and $\lambda_U^T\lambda_U$ (which in the worst case was sufficient to ultimately obtain corresponding C-inf $\ell_1^*$ PT). While the calculations and supporting technicalities might on occasion be a bit heavy the overall methodology will be fairly similar to what we presented in \cite{Cinfidealwc22}. In fact, to make things easier to follow we will try to parallel the presentation from \cite{Cinfidealwc22} as much as possible. We start by recalling on $Q_1$ and introducing $Q_1^{\perp}$, and ${\cal Q}_1$
 \begin{eqnarray}
Q_1 & \triangleq & \lambda_V^T\lambda_V \nonumber \\
Q_1^{\perp}  &\triangleq & \lambda_U^T\lambda_U \nonumber \\
{\cal Q}_1 & \triangleq & Q_1Q_1^{\perp}.\label{eq:typacanl0a}
\end{eqnarray}
We also recall on the definitions of $Q$ and $Q^{\perp}$ and introduce ${\cal Q}$ in the following way
\begin{eqnarray}
Q & \triangleq & \lp (I^{(l)})^T\bV^{\perp}(\bV^{\perp})^T I^{(l)}\rp^{-1} -I \nonumber \\
Q^{\perp}  &\triangleq & \lp (I^{(l)})^T\bU^{\perp}(\bU^{\perp})^T I^{(l)}\rp^{-1} -I \nonumber \\
{\cal Q} & \triangleq & QQ^{\perp}.\label{eq:typacanl0}
\end{eqnarray}

\subsubsection{The spectrum of ${\cal Q}_1 \triangleq \lambda_V^T\lambda_V\lambda_U^T\lambda_U$ -- theoretical considerations}
\label{sec:fpteqvspecac}

Since $(Q_1,Q)$ and $(Q_1^{\perp},Q^{\perp})$ are statistically identical pairs, we will, for the time being, focus on only one of them, say $(Q_1,Q)$. To that end, we first recall the statistical relations within the pairs
\begin{eqnarray}
Q_1 & \triangleq  &\Lambda_V^T\Lambda_V \nonumber \\
Q  & \triangleq &  \lp (I^{(l)})^T\bV^{\perp}(\bV^{\perp})^T I^{(l)}\rp^{-1} -I=D^{-1}-I \nonumber \\
Sp(Q_1) & \Longleftrightarrow_{\setminus 0} & Sp(Q), \label{eq:typacanl1}
\end{eqnarray}
where $\Longleftrightarrow_{\setminus0}$ stands for the spectral equivalence outside the zeros eigenvalues. We will also find it convenient to work with the spectrum of $Q$. Later on we will make the necessary adjustments so that the results fully fit the spectrum of $Q_1$. To start things off we first note
\begin{eqnarray}
G_Q(z) = G_{D^{-1}}(z+1). \label{eq:typacanl2}
\end{eqnarray}
To see that (\ref{eq:typacanl2}) indeed holds, we first observe that the spectral functions of $Q$ and $D^{-1}$, $f_Q(x)$ and $f_{D^{-1}}(x)$, can be connected in the following way
\begin{eqnarray}
f_Q(x) = f_{D^{-1}}(x+1). \label{eq:typacanl3}
\end{eqnarray}
Then from (\ref{eq:typwcanl6}) we have
\begin{equation}
G_Q(z)=\int\frac{f_Q(x)}{z-x}dx = \int\frac{f_{D^{-1}}(x+1)}{z-x}dx
= \int\frac{f_{D^{-1}}(x+1)}{z+1-(x+1)}dx = \int\frac{f_{D^{-1}}(x)}{z+1-x}dx=
G_{D^{-1}}(z+1). \label{eq:typacanl4}
\end{equation}
From (\ref{eq:typacanl1}) and (\ref{eq:typacanl2}) we also have
\begin{eqnarray}
R_Q(z) = R_{D^{-1}}(z)-1. \label{eq:typacanl5}
\end{eqnarray}
Namely, (\ref{eq:typwcanl8}) first gives
\begin{eqnarray}
R_Q(G_Q(z)) =z-\frac{1}{G_Q(z)}, \label{eq:typacanl6}
\end{eqnarray}
and then
\begin{eqnarray}
\begin{array}{rclclcl}
R_Q(z) & = &G_Q^{-1}(z)-\frac{1}{z} & \Longleftrightarrow  & z & = & G_Q\lp R_Q(z) + \frac{1}{z}\rp \\
R_{D^{-1}}(z) & = & G_{D^{-1}}^{-1}(z)-\frac{1}{z} & \Longleftrightarrow  & z & = & G_{D^{-1}}\lp R_{D^{-1}}(z) + \frac{1}{z}\rp.
\end{array} \label{eq:typacanl7}
\end{eqnarray}
Combining (\ref{eq:typacanl2}) and (\ref{eq:typacanl7}) we obtain
\begin{eqnarray}
\begin{array}{lrclcl}
  &  G_Q\lp R_Q(z) + \frac{1}{z} \rp & = & G_{D^{-1}}\lp R_{D^{-1}}(z) + \frac{1}{z}\rp \\
\Longleftrightarrow \quad $ $ & G_Q\lp R_Q(z) + \frac{1}{z} \rp & = & G_{Q}\lp R_{D^{-1}}(z) + \frac{1}{z}-1\rp\\
\Longleftrightarrow \quad $ $ &  R_Q(z) + \frac{1}{z} & = &  R_{D^{-1}}(z) + \frac{1}{z}-1 \\
\Longleftrightarrow \quad $ $ &  R_Q(z) & = &  R_{D^{-1}}(z) -1,
\end{array}\label{eq:typacanl8}
\end{eqnarray}
which is exactly (\ref{eq:typacanl6}). From (\ref{eq:typwcanl9}) we further have
\begin{eqnarray}
S_Q(z) = \frac{1}{R_Q(zS_Q(z))} = \frac{1}{R_{D^{-1}}(zS_Q(z))-1}, \label{eq:typacanl9}
\end{eqnarray}
and
\begin{eqnarray}
 R_{D^{-1}}(zS_Q(z))=\frac{1}{S_Q(z)}+1. \label{eq:typacanl10}
\end{eqnarray}
Relying further on (\ref{eq:typwcanl9}) we also have
\begin{equation}
R_{D^{-1}}(zS_Q(z)) = \frac{1}{S_{D^{-1}}(zS_{Q}(z)R_{D^{-1}}(zS_Q(z))}
=\frac{1}{S_{D^{-1}}\lp zS_{Q}(z)\lp \frac{1}{S_Q(z)}+1 \rp \rp}
=\frac{1}{S_{D^{-1}}\lp z+ zS_{Q}(z)\rp}. \label{eq:typacanl11}
\end{equation}
A combination of (\ref{eq:typacanl10}) and (\ref{eq:typacanl11}) gives a way to connect the $S$-transforms of $D^{-1}$ and $Q$
\begin{eqnarray}
\frac{1}{S_{D^{-1}}\lp z+ zS_{Q}(z)\rp} = \frac{1}{S_Q(z)}+1. \label{eq:typacanl12}
\end{eqnarray}
From (\ref{eq:typacanl0}) and the key FPT principles (\ref{eq:typwcanl10}) we find
\begin{eqnarray}
S_{{\cal Q}}(z) = S_Q(z) S_{Q_{\perp}}(z) = (S_Q(z))^2, \label{eq:typacanl13}
\end{eqnarray}
where we used the fact that $Q$ and $Q^{\perp}$ are statistically identical and as such have the same $S$-transform. One can now rewrite (\ref{eq:typacanl12}) with $z\rightarrow zR_{{\cal Q}}(z)$ and utilize (\ref{eq:typwcanl9}) to obtain
\begin{eqnarray}
\begin{array}{lrcl}
 & \frac{1}{S_{D^{-1}}\lp zR_{{\cal Q}}(z)+ zR_{{\cal Q}}(z)S_{Q}(zR_{{\cal Q}}(z))\rp} & = &  \frac{1}{S_Q(zR_{{\cal Q}}(z))}+1 \\
\Longleftrightarrow \quad $ $ & \frac{1}{S_{D^{-1}}\lp zR_{{\cal Q}}(z)+ zR_{{\cal Q}}(z)\sqrt{S_{{\cal Q}}(zR_{{\cal Q}}(z))}\rp} & = &  \frac{1}{\sqrt{S_Q(zR_{{\cal Q}}(z))}}+1 \\
\Longleftrightarrow \quad $ $  & \frac{1}{S_{D^{-1}}\lp zR_{{\cal Q}}(z)+ z\sqrt{R_{{\cal Q}}(z)}\rp} & = &  \sqrt{R_{{\cal Q}}(z)}+1.
\end{array} \label{eq:typacanl14}
\end{eqnarray}
Replacing $z\rightarrow G_{{\cal Q}}(z)$, (\ref{eq:typacanl14}) can be further rewritten
\begin{eqnarray}
\begin{array}{lrcl}
  & \frac{1}{S_{D^{-1}}\lp zR_{{\cal Q}}(z)+ z\sqrt{R_{{\cal Q}}(z)}\rp} & = &  \sqrt{R_{{\cal Q}}(z)}+1\\
 \Longleftrightarrow \quad $ $  & \frac{1}{S_{D^{-1}}\lp G_{{\cal Q}}(z)R_{{\cal Q}}(G_{{\cal Q}}(z))+ G_{{\cal Q}}(z)\sqrt{R_{{\cal Q}}(G_{{\cal Q}}(z))}\rp} & = &  \sqrt{R_{{\cal Q}}(G_{{\cal Q}}(z))}+1.
\end{array} \label{eq:typacanl15}
\end{eqnarray}
From (\ref{eq:typwcanl8}) we find
\begin{eqnarray}
\begin{array}{lrcl}
  &  R_{\cal Q}(G_{\cal Q}(z)) + \frac{1}{G_{{\cal Q}}(z)} & = & z\\
 \Longleftrightarrow \quad $ $  & G_{{\cal Q}}(z) R_{\cal Q}(G_{\cal Q}(z)) & = & zG_{{\cal Q}}(z)-1.
\end{array} \label{eq:typacanl16}
\end{eqnarray}
Combining further (\ref{eq:typacanl15}) and (\ref{eq:typacanl16}) we also have
\begin{eqnarray}
\begin{array}{lrcl}
  & \frac{1}{S_{D^{-1}}\lp G_{{\cal Q}}(z)R_{{\cal Q}}(G_{{\cal Q}}(z))+ G_{{\cal Q}}(z)\sqrt{R_{{\cal Q}}(G_{{\cal Q}}(z))}\rp} & = &  \sqrt{R_{{\cal Q}}(G_{{\cal Q}}(z))}+1\\
  \Longleftrightarrow \quad $ $  & \frac{1}{S_{D^{-1}}\lp zG_{{\cal Q}}(z)-1+ \sqrt{G_{{\cal Q}}(z)}\sqrt{zG_{{\cal Q}}(z)-1}\rp} & = &  \sqrt{\frac{zG_{{\cal Q}}(z)-1}{G_{{\cal Q}}(z)}}+1.
\end{array} \label{eq:typacanl17}
\end{eqnarray}
As in \cite{Haag97} one has for the connection between the $S$-transforms of the matrix and its inverse
\begin{eqnarray}
S_D(z) = \frac{1}{S_{D^{-1}}(-1-z)}. \label{eq:typacanl18}
\end{eqnarray}
Keeping (\ref{eq:typacanl18}) in mind, one can rewrite (\ref{eq:typacanl17})  in the following way
\begin{eqnarray}
\begin{array}{lrcl}
  & \frac{1}{S_{D^{-1}}\lp G_{{\cal Q}}(z)R_{{\cal Q}}(G_{{\cal Q}}(z))+ G_{{\cal Q}}(z)\sqrt{R_{{\cal Q}}(G_{{\cal Q}}(z))}\rp} & = &  \sqrt{R_{{\cal Q}}(G_{{\cal Q}}(z))}+1\\
  \Longleftrightarrow \quad $ $  & S_{D}\lp -zG_{{\cal Q}}(z)- \sqrt{G_{{\cal Q}}(z)}\sqrt{zG_{{\cal Q}}(z)-1}\rp & = &  \sqrt{\frac{zG_{{\cal Q}}(z)-1}{G_{{\cal Q}}(z)}}+1.
\end{array} \label{eq:typacanl19}
\end{eqnarray}
We will make a $S_D(z)-G_D(z)$ connection below. however, before doing so, we will need to make certain adjustments in the $G_D(z)$ transform itself.

\vspace{.15in}
\noindent \underline{\bl{\textbf{i) Adjusting $G_D(z)$ for the difference between $\tilde{D}$ and $\bar{D}$}}}

\noindent  We now briefly recall on the connection between $\tilde{D}$, $\bar{D}$, and $D$. First, from  (\ref{eq:typwcanl18}) and (\ref{eq:typwcanl21}) we have
\begin{eqnarray}
\tilde{D} &  = &  \bV^{\perp}(\bV^{\perp})^T\bU_D^{\perp}(\bU_D^{\perp})^T \nonumber \\
\bar{D} &  = &  (\bU_D^{\perp})^T\bV^{\perp}(\bV^{\perp})^T\bU_D^{\perp}. \label{eq:typacanl20}
\end{eqnarray}
The spectra of $\tilde{D}$ and $\bar{D}$ are modulo scalings practically identical. Since $\bar{D}$ has all the eigenvalues that $\tilde{D}$ has with $l=\eta n$ zero eigenvalues removed one can connect their $G$-transforms in the following way.
\begin{eqnarray}
G_{\bar{D}}(z) &  = & \frac{1}{1-\eta}  \lp G_{\tilde{D}}(z) - \frac{\eta}{z} \rp. \label{eq:typacanl21}
\end{eqnarray}
To see that (\ref{eq:typacanl21}) is indeed true we first connect the spectral pdfs of $\tilde{D}$ and $\bar{D}$
\begin{eqnarray}
f_{\bar{D}}(x) &  = & \frac{1}{1-\eta}  \lp f_{\tilde{D}}(x) - \eta\delta(x) \rp. \label{eq:typacanl22}
\end{eqnarray}
Then from (\ref{eq:typwcanl6}) we have
\begin{eqnarray}
G_{\bar{D}}(x)=\int \frac{f_{\bar{D}}(x)}{z-x}dx  = \frac{1}{1-\eta}  \lp \int \frac{f_{\tilde{D}}(x)}{z-x}dx - \eta\int\frac{\delta(x)}{z-x}dx \rp
= \frac{1}{1-\eta}  \lp G_{\tilde{D}}(z) - \frac{\eta}{z} \rp. \label{eq:typacanl23}
\end{eqnarray}
Connecting beginning and end in (\ref{eq:typacanl23}) we obtain (\ref{eq:typacanl21}).

\vspace{.15in}
\noindent \underline{\bl{\textbf{ii) Adjusting $G_D(z)$ for the difference between $Q_1$ and $Q$}}}

\noindent  We recall that $Q_1$ has the same eigenvalues as $Q$ minus $n-l-k$ zero eigenvalues (when $n-l-k\leq 0$ that means that $Q_1$ has all the eigenvalues of $Q$ plus $|n-l-k|$ zero eigenvalues). To account for this difference we find it useful to introduce a matrix $D_1$ obtained by removing/adding $|n-(l+k)|$ ones into the spectrum of $D$. As these added ones are inversion invariant they remain in the spectrum after the inversion. This means that after the inversion of $D_1$ and subtraction of the identity matrix they become zeros and basically have an effect on $Q$ as if $|n-(l+k)|$ zeros were added or removed which is exactly what we need to account for the difference between $Q_1$ and $Q$. To put everything in the right mathematical context, let $D_1$ be a matrix with the Haar distributed eigen-space basis and the spectral function defined int he following way
\begin{eqnarray}
f_{D_1}=\frac{1}{1-\eta-(1-(\beta+\eta))}\lp f_{\tilde{D}}-\eta\delta(x)-(1-(\beta+\eta))\delta(x-1)\rp, \label{eq:typacanl23a}
\end{eqnarray}
where we have now taken into the account the above mentioned adjusting between $\tilde{D}$ and $\bar{D}$ ($\bar{D}$ and $D$ have identical spectral functions). Utilizing again (\ref{eq:typwcanl6}) we similarly to (\ref{eq:typacanl23}) have
\begin{eqnarray}
G_{D_1}(z) = \frac{1}{1-\eta-(1-\beta+\eta)}  \lp G_{\tilde{D}}(z) - \frac{\eta}{z}  - \frac{1-(\beta+\eta)}{z-1}\rp. \label{eq:typacanl24}
\end{eqnarray}
Recalling once again on (\ref{eq:typwcanl8}) we have
\begin{eqnarray}
\begin{array}{rrcl}
 & R_{D_1}(G_{D_1}(z))+\frac{1}{G_{D_1}(z)} & = & z \\
\Longleftrightarrow  \quad $ $ & R_{D_1}(z)+\frac{1}{z} & = & G_{D_1}^{-1}(z).
\end{array}\label{eq:typacanl25}
\end{eqnarray}
After taking $z\rightarrow R_{D_1}(z)+\frac{1}{z}$ we can rewrite (\ref{eq:typacanl24}) as
\begin{eqnarray}
G_{D_1}\lp R_{D_1}(z)+\frac{1}{z}\rp
 = \frac{1}{\beta}  \lp G_{\tilde{D}}\lp R_{D_1}(z)+\frac{1}{z}\rp - \frac{\eta}{R_{D_1}(z)+\frac{1}{z}}  - \frac{1-(\beta+\eta)}{R_{D_1}(z)+\frac{1}{z}-1}\rp, \label{eq:typacanl26}
\end{eqnarray}
 and after utilizing (\ref{eq:typacanl25})
\begin{eqnarray}
z = \frac{1}{\beta}  \lp G_{\tilde{D}}\lp R_{D_1}(z)+\frac{1}{z}\rp - \frac{\eta}{R_{D_1}(z)+\frac{1}{z}}  - \frac{1-(\beta+\eta)}{R_{D_1}(z)+\frac{1}{z}-1}\rp. \label{eq:typacanl27}
\end{eqnarray}
After another replacement, $z\rightarrow zS_{D_1}(z)$, (\ref{eq:typacanl27}) becomes
{\small\begin{equation}
 zS_{D_1}(z)= \frac{1}{\beta}  \lp G_{\tilde{D}}\lp R_{D_1}(zS_{D_1}(z))+\frac{1}{zS_{D_1}(z)}\rp - \frac{\eta}{R_{D_1}(zS_{D_1}(z))+\frac{1}{zS_{D_1}(z)}}  - \frac{1-(\beta+\eta)}{R_{D_1}(zS_{D_1}(z))+\frac{1}{zS_{D_1}(z)}-1}\rp. \label{eq:typacanl28}
\end{equation}}
Using (\ref{eq:typwcanl9}) we from (\ref{eq:typacanl28}) further find
\begin{equation}
 zS_{D_1}(z)= \frac{1}{\beta}  \lp G_{\tilde{D}}\lp \frac{1}{S_{D_1}(z)}+\frac{1}{zS_{D_1}(z)}\rp - \frac{\eta}{\frac{1}{S_{D_1}(z)}+\frac{1}{zS_{D_1}(z)}}  - \frac{1-(\beta+\eta)}{\frac{1}{S_{D_1}(z)}+\frac{1}{zS_{D_1}(z)}-1}\rp, \label{eq:typacanl29}
\end{equation}
and
\begin{equation}
 zS_{D_1}(z)= \frac{1}{\beta}  \lp G_{\tilde{D}}\lp \frac{z+1}{zS_{D_1}(z)}\rp - \frac{\eta}{\frac{z+1}{zS_{D_1}(z)}}  - \frac{1-(\beta+\eta)}{\frac{z+1}{zS_{D_1}(z)}-1}\rp. \label{eq:typacanl30}
\end{equation}
Taking $D\rightarrow D_1$ in (\ref{eq:typacanl1}) and correspondingly denoting $Q\rightarrow Q_1$ and ${\cal Q}\rightarrow {\cal Q}_1$, one can repeat all the steps between (\ref{eq:typacanl1}) and (\ref{eq:typacanl19}) to arrive at the following
\begin{eqnarray}
 S_{D_1}\lp -zG_{{\cal Q}_1}(z)- \sqrt{G_{{\cal Q}_1}(z)}\sqrt{zG_{{\cal Q}_1}(z)-1}\rp & = &  \sqrt{\frac{zG_{{\cal Q}_1}(z)-1}{G_{{\cal Q}_1}(z)}}+1. \label{eq:typacanl31}
\end{eqnarray}
Setting
\begin{eqnarray}
  z_1(z) & \triangleq & -zG_{{\cal Q}_1}(z)- \sqrt{G_{{\cal Q}_1}(z)}\sqrt{zG_{{\cal Q}_1}(z)-1} \nonumber \\
  y(z) & \triangleq & \frac{z_1(z)+1}{z_1(z)S_{D_1}(z_1(z))}, \label{eq:typacanl32}
\end{eqnarray}
one has from (\ref{eq:typacanl31})
\begin{eqnarray}
 S_{D_1}(z_1(z)) & = &  \sqrt{\frac{zG_{{\cal Q}_1}(z)-1}{G_{{\cal Q}_1}(z)}}+1. \label{eq:typacanl33}
\end{eqnarray}
After taking $z\rightarrow z_1$ and rewriting (\ref{eq:typacanl30}) one finally obtains
\begin{equation}
 \frac{z_1(z)+1}{y(z)}= \frac{1}{\beta}  \lp G_{\tilde{D}}(y(z)) - \frac{\eta}{y(z)}  - \frac{1-(\beta+\eta)}{y(z)-1}\rp. \label{eq:typacanl34}
\end{equation}

We summarize the above results in the following lemma.

\begin{lemma}
  Let ${\cal Q}_1$ be as in (\ref{eq:typacanl0a}). Then its $G$-transform, $G_{{\cal Q}_1}(z)$, satisfies
\begin{equation}
 \frac{z_1(z)+1}{y(z)}= \frac{1}{\beta}  \lp G_{\tilde{D}}(y(z)) - \frac{\eta}{y(z)}  - \frac{1-(\beta+\eta)}{y(z)-1}\rp. \label{eq:typaclemma1eq1}
\end{equation}
  with $z_1(z)$ and $y(z)$ as in (\ref{eq:typacanl32}),  $S_{D_1}(z_1(z))$ as in (\ref{eq:typacanl33}), and $G_{\bar{D}}(y(z))$ as in Lemma \ref{lemma:typwclemma1a}.
  \label{lemma:typaclemma1}
\end{lemma}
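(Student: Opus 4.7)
My plan is to work entirely at the level of FPT transforms and reduce the claim to the already-characterised $G_{\tilde{D}}$ of Lemma~\ref{lemma:typwclemma1a}. The backbone is Voiculescu's multiplication rule: since $Q_1$ and $Q_1^{\perp}$ are statistically identical and their eigenspaces are independent Haar bases, freeness yields $S_{{\cal Q}_1}(z)=S_{Q_1}(z)^2$, so it suffices to express $S_{Q_1}$ in terms of $S_{D_1}$ and then to link $S_{D_1}$ back to $G_{\tilde{D}}$.

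First I would reduce $Q_1$ to $D_1$. Modulo zero-eigenvalue bookkeeping, $Q_1$ is the spectral translate of $D_1^{-1}$ by $-1$, which at the level of transforms reads $G_{Q_1}(z)=G_{D_1^{-1}}(z+1)$ and hence $R_{Q_1}(z)=R_{D_1^{-1}}(z)-1$. Feeding this into $R(z)=G^{-1}(z)-1/z$ and $S(z)=1/R(zS(z))$, and using $R(z)=1/S(zR(z))$ once more, turns the relation into $1/S_{D_1^{-1}}(z+zS_{Q_1}(z))=1/S_{Q_1}(z)+1$. Applying Haagerup's inversion identity $S_{D_1}(z)S_{D_1^{-1}}(-1-z)=1$ then flips $D_1^{-1}$ to $D_1$, producing an equation purely in $S_{D_1}$ and $S_{Q_1}=\sqrt{S_{{\cal Q}_1}}$. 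Substituting $z\mapsto zR_{{\cal Q}_1}(z)$ and then $z\mapsto G_{{\cal Q}_1}(z)$, and cleaning up via $G_{{\cal Q}_1}(z)R_{{\cal Q}_1}(G_{{\cal Q}_1}(z))=zG_{{\cal Q}_1}(z)-1$, packages everything into the single scalar $z_1(z)$ of (\ref{eq:typacanl32}) and yields (\ref{eq:typacanl33}).

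Second, I would relate $G_{D_1}$ to $G_{\tilde{D}}$ by a pure spectral-measure computation. The spectrum of $\tilde{D}=\calV\calU$ differs from that of $\bar{D}$ (equivalently $D$) by an atom of mass $\eta$ at $0$, arising because $\tilde{D}$ lives on the whole $n$-dimensional space while $\bar{D}$ is the $(n-l)\times(n-l)$ compression. In turn, $D_1$ is obtained from $D$ by removing or adding a $1$-atom of mass $1-(\beta+\eta)$; because this atom is inversion-invariant, after inversion and subtraction of $I$ it becomes a $0$-atom and so correctly absorbs the rank mismatch between $Q_1$ and $Q$. Linearity of the Stieltjes transform in the spectral measure, together with the normalising factor $\beta=1-\eta-(1-(\beta+\eta))$ that restores total mass $1$, then gives
\begin{equation*}
G_{D_1}(z)=\frac{1}{\beta}\lp G_{\tilde{D}}(z)-\frac{\eta}{z}-\frac{1-(\beta+\eta)}{z-1}\rp.
\end{equation*}
Plugging this into $R_{D_1}(G_{D_1}(z))+1/G_{D_1}(z)=z$, substituting $z\mapsto zS_{D_1}(z)$ to eliminate $R_{D_1}$ in favour of $S_{D_1}$ via $R(zS(z))=1/S(z)$, and finally substituting $z\mapsto z_1(z)$ to align with the argument produced in the first step, rewrites the left-hand side as $(z_1(z)+1)/y(z)$ with $y(z)$ as in (\ref{eq:typacanl32}); this is exactly (\ref{eq:typaclemma1eq1}).

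The main obstacle, and what I would spend the most care on, is the atomic bookkeeping: the $0$-atoms (dimensional reduction $\tilde{D}\to\bar{D}$) and the $1$-atoms (the $Q\leftrightarrow Q_1$ rank mismatch) behave very differently under inversion and under the $G\to R\to S$ chain, and the normalisation of the spectral density of $D_1$ must be tracked explicitly so that no spurious factor of $\beta$ contaminates the final identity. Once this accounting is pinned down, the remainder is a sequence of variable substitutions in the defining transform relations.
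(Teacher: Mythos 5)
Your proposal is correct and takes essentially the same route as the paper. You reduce $Q_1$ to the shifted inverse $D_1^{-1}-I$ (giving $G_{Q_1}(z)=G_{D_1^{-1}}(z+1)$ and $R_{Q_1}=R_{D_1^{-1}}-1$), pass through the $R$ and $S$ transforms, apply Haagerup's inversion identity, invoke freeness for the product to get $S_{{\cal Q}_1}=(S_{Q_1})^2$, and handle the $D_1$ versus $\tilde D$ discrepancy by the same atomic bookkeeping ($0$-atom of mass $\eta$ from the compression, $\pm 1$-atom of mass $|1-(\beta+\eta)|$ for the rank mismatch, normalization $\beta$), arriving at exactly the paper's intermediate identity for $G_{D_1}$ and then the claimed equation after the substitutions $z\mapsto zR_{{\cal Q}_1}(z)$, $z\mapsto G_{{\cal Q}_1}(z)$ and $z\mapsto z_1(z)$. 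The only cosmetic difference is that the paper first runs the transform chain for the pair $(Q,D)$ and then remarks that the same steps transport to $(Q_1,D_1)$, whereas you work with $(Q_1,D_1)$ directly; this changes nothing substantive.
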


A combination of Lemma \ref{lemma:typwclemma1a} (where $G_{\tilde{D}}(\cdot)$ is explicitly given) and (\ref{eq:typacanl32})-(\ref{eq:typacanl34}) is then sufficient to determine $G_{{\cal Q}_1}(z)$ . Utilizing (\ref{eq:typwcanl7}) then enables one to fully determine the spectral distribution. This is a generic procedure that in principle works. Below we will move things a step further and provide a more detailed analysis of the edges of the spectrum as they play a critical role in the $\ell_0^*-\ell_1^*$-equivalence. It will turn out that one can provide their a sufficiently explicit characterization so that the explicit closed form for the corresponding C-inf phase transitions can again be obtained. Later on we will return to the above described procedure for determining the entire spectrum of ${\cal Q}_1$ and show what type of results such a procedure actually produces.

\subsubsection{Explicit characterization of ${\cal Q}_1$'s spectral edges}
\label{sec:fpteqvspecac}

As we have seen earlier, the upper edge of the spectrum of ${\cal Q}$ (or ${\cal Q}_1$), $\lambda_{max}({\cal Q})=\lambda_{max}({\cal Q}_1)$ is directly related to the success of the $\ell_1^*$-minimization heuristic in causal inference. More precisely, as Corollary \ref{cor:cinfcor3} states, one will have the $\ell_0^*-\ell_1^*$-equivalence if and only if $\lambda_{max}({\cal Q})=\lambda_{max}({\cal Q}_1)\leq 1$. Clearly, an explicit characterization of $\lambda_{max}({\cal Q})=\lambda_{max}({\cal Q}_1)$ will be sufficient to explicitly characterize the $\ell_0^*-\ell_1^*$-equivalence. That will then be enough to conclude when $\ell_1^*$ can be used reliable to handle the casual inference.

To provide an explicit characterization of $\lambda_{max}({\cal Q})=\lambda_{max}({\cal Q}_1)\leq 1$ we rely on the results that we presented in the previous section. We start by observing that the spectral function of ${\cal Q}_1$, $f_{{\cal Q}_1}(x)$, can be obtained by utilizing (\ref{eq:typwcanl7}) and the above discussed $G_{{\cal Q}_1}(z)$ transform. Moreover, at the edge of the spectrum $G_{{\cal Q}_1}(z)$ should be real (the edge of the spectrum is actually the breaking point where the $G_{{\cal Q}_1}(z)$ becomes complex, i.e. starts having a nonzero imaginary part). That basically means that at the edge of the spectrum one should have (\ref{eq:typaclemma1eq1}) satisfied for a real $G_{{\cal Q}_1}(z)$. Moreover, since our targeted edge of the spectrum is one that means that (\ref{eq:typaclemma1eq1}) needs to be satisfied for a real $G_{{\cal Q}_1}(1)$. Rewriting (\ref{eq:typacanl32})-(\ref{eq:typacanl34}) for $z=1$ gives
\begin{eqnarray}
  z_1(1) & = & -G_{{\cal Q}_1}(1)- \sqrt{G_{{\cal Q}_1}(1)}\sqrt{G_{{\cal Q}_1}(1)-1} \nonumber \\
  y(1) & \triangleq & \frac{z_1(1)+1}{z_1(1)S_{D_1}(z_1(1))}, \label{eq:typacanl35}
\end{eqnarray}
and
\begin{eqnarray}
 S_{D_1}(z_1(1)) & = &  \sqrt{\frac{G_{{\cal Q}_1}(1)-1}{G_{{\cal Q}_1}(1)}}+1 = -\frac{z_1(1)}{G_{{\cal Q}_1}(1)}, \label{eq:typacanl36}
\end{eqnarray}
and
 \begin{equation}
 \frac{z_1(1)+1}{y(1)}= \frac{1}{\beta}  \lp G_{\tilde{D}}(y(1)) - \frac{\eta}{y(1)}  - \frac{1-(\beta+\eta)}{y(1)-1}\rp. \label{eq:typacanl37}
\end{equation}
From (\ref{eq:typacanl35}) one further finds
\begin{eqnarray}
 G_{{\cal Q}_1}(1) & = & -\frac{(z_1(1))^2}{1+2z_1(1)} \nonumber \\
  y(1) & = & -\frac{(z_1(1)+1)G_{{\cal Q}_1}(1)}{(z_1(1))^2}=\frac{z_1(1)+1}{1+2z_1(1)}. \label{eq:typacanl38}
\end{eqnarray}
The second equality then also gives
\begin{eqnarray}
   z_1(1) & = & \frac{y(1)-1}{1-2y(1)}, \label{eq:typacanl39}
\end{eqnarray}
and
\begin{eqnarray}
   z_1(1)+1 & = & \frac{y(1)}{2y(1)-1}. \label{eq:typacanl40}
\end{eqnarray}
Plugging (\ref{eq:typacanl40}) into (\ref{eq:typacanl37}) one has
 \begin{equation}
 \frac{1}{2y(1)-1}= \frac{1}{\beta}  \lp G_{\tilde{D}}(y(1)) - \frac{\eta}{y(1)}  - \frac{1-(\beta+\eta)}{y(1)-1}\rp, \label{eq:typacanl41}
\end{equation}
or
 \begin{equation}
 \zeta_1(y) \triangleq -\frac{1}{2y-1} + \frac{1}{\beta}  \lp G_{\tilde{D}}(y) - \frac{\eta}{y}  - \frac{1-(\beta+\eta)}{y-1}\rp = 0. \label{eq:typacanl41}
\end{equation}
Utilizing $G_{\tilde{D}}(z)$ (with the ``$-$" sign as the lower edge in the bulk of the spectrum of $\tilde{D}$ corresponds to the upper edge in the spectrum of ${\cal Q}$) from Lemma \ref{lemma:typwclemma1a} we further have
 \begin{equation}
 \zeta_1(y) = -\frac{1}{2y-1}
 + \frac{2\beta-1}{2\beta(y-1)}
  +\frac{1}{2\beta y(y-1)}
 \lp -\beta+\eta-\sqrt{(y-(\beta+\eta))^2+4\beta\eta(y-1)} \rp = 0, \label{eq:typacanl42}
\end{equation}
 and
 \begin{eqnarray}
 \zeta_1(y) & = & \frac{-2\beta y(y-1)+y(2\beta-1)(2y-1) + (2y-1) \lp -\beta+\eta-\sqrt{(y-(\beta+\eta))^2+4\beta\eta(y-1)} \rp}{2\beta y(y-1)(2y-1)} \nonumber \\
  & = & \frac{2(\beta-1)y^2+ (1-2\beta+2\eta)y  +\beta-\eta - (2y-1)\sqrt{(y-(\beta+\eta))^2+4\beta\eta(y-1)} }{2\beta y(y-1)(2y-1)}. \label{eq:typacanl43}
\end{eqnarray}
Setting
 \begin{eqnarray}
 \zeta_2(y) & \triangleq & 2(\beta-1)y^2+ (1-2\beta+2\eta)y  +\beta-\eta - (2y-1)\sqrt{(y-(\beta+\eta))^2+4\beta\eta(y-1)} \nonumber \\
  \zeta(y) & \triangleq & (2(\beta-1)y^2+ (1-2\beta+2\eta)y  +\beta-\eta)^2 - ((2y-1)\sqrt{(y-(\beta+\eta))^2+4\beta\eta(y-1)})^2, \label{eq:typacanl44}
\end{eqnarray}
we easily have
 \begin{eqnarray}
 \zeta_1(y) =0 \quad \Longleftrightarrow \quad \zeta_2(y)=0 \quad \Longleftrightarrow \quad \zeta(y)=0.  \label{eq:typacanl45}
\end{eqnarray}
We therefore below focus on $\zeta(y)$. After squaring and grouping the terms we have
 \begin{eqnarray}
   \zeta(y) =4\beta( c_3y^4+c_2y^3+c_1y^2+c_0y+c_{00}), \label{eq:typacanl46}
\end{eqnarray}
with
 \begin{eqnarray}
c_3 & =& \beta-2 \nonumber \\
c_2 & =& 5-2\beta-2\eta \nonumber \\
c_1 & =& \beta-4+3\eta \nonumber \\
c_0 & =& 1-\eta \nonumber \\
c_{00} & =& 0. \label{eq:typacanl47}
\end{eqnarray}
From (\ref{eq:typacanl46}) we then also have
\begin{eqnarray}
   \zeta(y) =4\beta y( c_3y^3+c_2y^2+c_1y+c_0). \label{eq:typacanl48}
\end{eqnarray}
Since we are interested in an edge or a breaking point of the spectrum $\zeta(y)$ should touch zero for certain $y$ which means that it should have a stationary point at such $y$. To find such a stationary point we take the derivative
\begin{eqnarray}
   \frac{d\lp \frac{\zeta(y)}{4\beta y}\rp}{dy} = 3c_3y^2+2c_2y+c_1=0. \label{eq:typacanl49}
\end{eqnarray}
Solving over $y$ gives
\begin{eqnarray}
y=\frac{-c_2+\sqrt{c_2^2-3c_1c_3}}{3c_3}. \label{eq:typacanl50}
\end{eqnarray}
Setting
\begin{eqnarray}
r\triangleq c_2^2-3c_1c_3=1+\beta^2+4\eta^2-2\beta-2\eta-\beta\eta, \label{eq:typacanl51}
\end{eqnarray}
we have from (\ref{eq:typacanl50})
\begin{eqnarray}
y_{opt}=\frac{-c_2+\sqrt{r}}{3c_3}. \label{eq:typacanl52}
\end{eqnarray}
First we set
\begin{eqnarray}
   \zeta_3(y) \triangleq c_3 y^3+c_2 y^2+c_1y+c_0. \label{eq:typacanl53}
\end{eqnarray}
Clearly, from (\ref{eq:typacanl48}) one has
\begin{eqnarray}
   \zeta(y) =4\beta y \zeta_3(y). \label{eq:typacanl53a}
\end{eqnarray}
Then we plug the value for $y_{opt}$ from (\ref{eq:typacanl52}) and after a bit of algebraic transformations obtain
\begin{eqnarray}
   \zeta_3(y_{opt}) = -2(\sqrt{r})^3-c_2^3+3r c_2+27c_3^2c_0. \label{eq:typacanl54}
\end{eqnarray}
From (\ref{eq:typacanl47}) we first have
 \begin{eqnarray}
 c_2 & =& -2c_3-1+2c_0\nonumber \\
c_1 & =& c_3-3c_0+1, \label{eq:typacanl55}
\end{eqnarray}
and then from (\ref{eq:typacanl51})
\begin{eqnarray}
r = c_3^2+1+4c_0^2+c_3-4c_0+c_0c_3. \label{eq:typacanl56}
\end{eqnarray}
Combining (\ref{eq:typacanl54})-(\ref{eq:typacanl56}) after a bit of additional algebraic transformations gives
\begin{eqnarray}
   \zeta_3(y_{opt}) =
   -2(\sqrt{r})^3+2c_3^3-3 c_3+6c_3c_0^2+3c_3^2+3c_3c_0+3c_0c_3^2-2-24 c_0^2+12 c_0+16 c_0^3. \label{eq:typacanl57}
\end{eqnarray}
Below we show that
\begin{eqnarray}
 c_3^2=-1-2c_3+4c_0-4c_0^2 \quad \Longleftrightarrow \quad  \zeta_3(y_{opt}) =0. \label{eq:typacanl58}
\end{eqnarray}
We first use (\ref{eq:typacanl58}) to systematically linearize $\zeta_3(y_{opt})$ in $c_3$ and obtain
\begin{eqnarray}
   \zeta_3(y_{opt}) =
-2(\sqrt{r})^3+(-3c_3+5c_3c_0-2c_3c_0^2-1-8c_0^2+5c_0+4c_0^3). \label{eq:typacanl59}
\end{eqnarray}
Transforming further we also have
\begin{eqnarray}
   \zeta_3(y_{opt}) & = & -2(\sqrt{r})^3+(-3c_3+5c_3c_0-2c_3c_0^2-1-8c_0^2+5c_0+4c_0^3)\nonumber \\
   & = & -2(\sqrt{r})^3+(c_3(c_0-1)(3-2c_0)+(-4c_0+1+4c_0^2)(c_0-1)) \nonumber \\
   & = & 2(\sqrt{c_3}\sqrt{c_0-1})^3+(c_3(c_0-1)(3-2c_0)+(-4c_0+1+4c_0^2)(c_0-1)) \nonumber \\
   & = & \lp 2(\sqrt{c_3})^3\sqrt{c_0-1}+(c_3(3-2c_0)+(-4c_0+1+4c_0^2))\rp(c_0-1). \label{eq:typacanl60}
\end{eqnarray}
where the third equality follows after noting that with condition (\ref{eq:typacanl58}) in place $r$ in \ref{eq:typacanl56}) becomes
\begin{eqnarray}
 c_3^2=-1-2c_3+4c_0-4c_0^2 \quad \Longrightarrow \quad r =-c_3+c_0c_3. \label{eq:typacanl61}
\end{eqnarray}
We find it useful to rewrite (\ref{eq:typacanl60}) as
\begin{eqnarray}
   \zeta_3(y_{opt}) & = &   \zeta_3^{(1)}(y_{opt})+   \zeta_3^{(2)}(y_{opt}), \label{eq:typacanl61a}
\end{eqnarray}
where
\begin{eqnarray}
   \zeta_3^{(1)}(y_{opt}) & \triangleq &   2(\sqrt{c_3})^3\sqrt{c_0-1} \nonumber \\
   \zeta_3^{(2)}(y_{opt}) &\triangleq &    c_3(3-2c_0)+(-4c_0+1+4c_0^2). \label{eq:typacanl62}
   \end{eqnarray}
We then look at the squared values of these quantities. First we start with $\zeta_3^{(1)}(y_{opt})$
\begin{eqnarray}
   (\zeta_3^{(1)}(y_{opt}))^2  =   4c_3^3(c_0-1), \label{eq:typacanl63}
   \end{eqnarray}
and utilize the condition (\ref{eq:typacanl58}) to systematically linearize in $c_3$. First we remove the cubic $c_3$ term to arrive at the following
 \begin{eqnarray}
   (\zeta_3^{(1)}(y_{opt}))^2 &  = & 4c_3^3(c_0-1) \nonumber \\
      &  = & 4c_3(-1-2c_3+4c_0-4c_0^2)(c_0-1) \nonumber \\
      &  = &  -8c_3^2c_0+32c_3c_0^2-16c_3c_0^3-8-12c_3+32c_0-32c_0^2-20c_3c_0 , \label{eq:typacanl64}
   \end{eqnarray}
and apply the same procedure again to arrive at a fully linearized form
 \begin{eqnarray}
   (\zeta_3^{(1)}(y_{opt}))^2  = 4(c_3((-2c_0^2+c_0+1)(2c_0-3))-2(1-c_0)(2c_0-1)^2). \label{eq:typacanl65}
   \end{eqnarray}
Then we turn to $\zeta_3^{(2)}(y_{opt})$
\begin{eqnarray}
   (\zeta_3^{(2)}(y_{opt}))^2  =  (c_3(3-2c_0)+(-4c_0+1+4c_0^2))^2, \label{eq:typacanl66}
   \end{eqnarray}
and again utilize the condition (\ref{eq:typacanl58}) to  linearize in $c_3$. This time the procedure is simpler as there is only a quadratic term in $c_3$ and there is no need to apply the procedure from above in two steps. Instead only one step suffices and we have
 \begin{eqnarray}
   (\zeta_3^{(2)}(y_{opt}))^2 &  = & (c_3(3-2c_0)+(-4c_0+1+4c_0^2))^2 \nonumber \\
      &  = & c_3^2(3-2c_0)^2+(-4c_0+1+4c_0^2)^2+2(-4c_0+1+4c_0^2)c_3(3-2c_0) \nonumber \\
      &  = &  (-1-2c_3+4c_0-4c_0^2)(3-2c_0)^2+(-4c_0+1+4c_0^2)^2+2(-4c_0+1+4c_0^2)c_3(3-2c_0) \nonumber \\
      &  = & -2c_3(9-12c_0+4c_0^2-(-4c_0+1+4c_0^2)(3-2c_0))-(-4c_0+1+4c_0^2)(8-8c_0)  \nonumber \\
      & =  & 4(c_3((-2c_0^2+c_0+1)(2c_0-3))-2(1-c_0)(2c_0-1)^2). \label{eq:typacanl67}
   \end{eqnarray}
Comparing (\ref{eq:typacanl65}) and (\ref{eq:typacanl67}) we have
 \begin{eqnarray}
   (\zeta_3^{(1)}(y_{opt}))^2 &  = &    (\zeta_3^{(2)}(y_{opt}))^2. \label{eq:typacanl68}
   \end{eqnarray}
Now we will show that one also has $(\zeta_3^{(1)}(y_{opt}))^2=-  (\zeta_3^{(2)}(y_{opt}))^2$. We again look at the condition in (\ref{eq:typacanl58}) and replace the values for $c_0$ and $c_3$ from (\ref{eq:typacanl48}) to obtain
\begin{eqnarray}
\begin{array}{rrcl}
 &  c_3^2 & = & -1-2c_3+4c_0-4c_0^2 \\
 \Longleftrightarrow \quad $ $ & (\beta-2)^2 & = & -1 - 2(\beta-2)+4(1-\eta)-4(1-\eta)^2 \\
  \Longleftrightarrow \quad $ $ & (\beta-2)^2  + 2(\beta-2) +1& = & 4(1-\eta)-4(1-\eta)^2 \\
  \Longleftrightarrow \quad $ $ & (\beta-2+1)^2 & = & 4\eta(1-\eta) \\
   \Longleftrightarrow \quad $ $ & \beta & = & 1-2\sqrt{\eta(1-\eta)}.
 \end{array} \label{eq:typacanl69}
\end{eqnarray}
From (\ref{eq:typacanl62}) we then also have
\begin{eqnarray}
   \zeta_3^{(1)}(y_{opt}) & \triangleq &   2(\sqrt{c_3})^3\sqrt{c_0-1}
    = 2(\sqrt{\beta-2})^3\sqrt{-\eta} = 2(2-\beta)\sqrt{\eta(2-\beta)}\geq 0. \label{eq:typacanl70}
   \end{eqnarray}
Similarly, we have
\begin{eqnarray}
    \zeta_3^{(2)}(y_{opt}) &\triangleq &   c_3(3-2c_0)+(-4c_0+1+4c_0^2) \nonumber \\
     & = &    (\beta-2)(1+2\eta)+(1-2\eta)^2  \nonumber \\
     & = &    (-1-2\sqrt{\eta(1-\eta)})(1+2\eta)+(1-2\eta)^2  \nonumber \\
     & = &    -2\sqrt{\eta(1-\eta)}(1+2\eta)-6\eta+4\eta^2  \nonumber \\
     & \leq &    -2\sqrt{\eta(1-\eta)}(1+2\eta)-6\eta+4\eta  \nonumber \\
     & = &    -2\sqrt{\eta(1-\eta)}(1+2\eta)-2\eta \nonumber \\
     & \leq &   0. \label{eq:typacanl71}
   \end{eqnarray}
A combination of (\ref{eq:typacanl61a}), (\ref{eq:typacanl62}), (\ref{eq:typacanl68}), (\ref{eq:typacanl70}), and (\ref{eq:typacanl71}) finally gives
\begin{eqnarray}
\begin{array}{rrcl}
 &  (\zeta_3^{(1)}(y_{opt}))^2 & \overset{{\tiny(\ref{eq:typacanl68})}}{=} & (\zeta_3^{(2)}(y_{opt}))^2 \\
 \overset{{\tiny(\ref{eq:typacanl70}),(\ref{eq:typacanl71})}}{\Longleftrightarrow} \quad $ $ & \zeta_3^{(1)}(y_{opt})  & = & -\zeta_3^{(2)}(y_{opt}) \\
  \Longleftrightarrow \hspace{.26in} $ $ & \zeta_3^{(1)}(y_{opt})+\zeta_3^{(2)}(y_{opt}) & = & 0 \\
 \overset{{\tiny(\ref{eq:typacanl61a})}}{\Longleftrightarrow} \hspace{.13in}  \quad $ $ &  \zeta_3(y_{opt}) & = & 0.
 \end{array} \label{eq:typacanl72}
\end{eqnarray}
Moreover, a combination of (\ref{eq:typacanl58}), (\ref{eq:typacanl69}), and (\ref{eq:typacanl72}) gives
\begin{eqnarray}
\beta  =  1-2\sqrt{\eta(1-\eta)}\quad \Longleftrightarrow \quad  c_3^2=-1-2c_3+4c_0-4c_0^2 \quad \Longleftrightarrow \quad  \zeta_3(y_{opt}) =0. \label{eq:typacanl73}
\end{eqnarray}
After combining (\ref{eq:typacanl53a}) and (\ref{eq:typacanl73}) one then also has
\begin{eqnarray}
\beta  =  1-2\sqrt{\eta(1-\eta)}\quad \Longleftrightarrow \quad  c_3^2=-1-2c_3+4c_0-4c_0^2 \quad \Longleftrightarrow \quad  \zeta_3(y_{opt}) =0
\quad \Longleftrightarrow \quad  \zeta(y_{opt}) =0. \label{eq:typacanl74}
\end{eqnarray}
From (\ref{eq:typacanl45}) one then has that for $y_{opt}$
\begin{eqnarray}
   \zeta_1(y_{opt})=\zeta_2(y_{opt}) =0, \label{eq:typacanl75}
\end{eqnarray}
which means that $y=y_{opt}$ is indeed a choice for $y$ that ensures that functional equation used to determine $G_{{\cal Q}_1}(z)$ is satisfied. Moreover, since the derivative condition is met as well, i.e. since  $\zeta(y_{opt})=0$, one has that not only is $y_{opt}$ a point where $\zeta(y)$ crosses zero, it is actually a point where it touches zero. That is exactly what is needed to determine an edge of the spectrum. Since we operated using the ``$-$" sign in the definition of $G_{\tilde{D}}(z)$  that means (based on the considerations from \cite{Cinfidealwc22}) that we have determined the lower edge in the corresponding spectrum of $\tilde{D}$ (or any of $\bar{D}$ and $D$) which after the inversion means that we have determined the upper edge in the spectrum of ${\cal Q}_1$ or ${\cal Q}$.

One can even explicitly determine $y_{opt}$. From (\ref{eq:typacanl47}), (\ref{eq:typacanl52}), (\ref{eq:typacanl55}), and (\ref{eq:typacanl61}) we obtain
\begin{eqnarray}
y_{opt}=\frac{-c_2+\sqrt{r}}{3c_3}=\frac{-(5-2\beta-2\eta)+\sqrt{\eta(2-\beta)}}{3(\beta-2)}. \label{eq:typacanl76}
\end{eqnarray}

In Figure \ref{fig:yoptofeta} we show $y_{opt}$ as a function of $\eta$. The whole mechanism of ``touching zero" as $\beta$ decreases is shown in Figure \label{fig:zeta1ofy} for $\eta=0.9$. As can be seen from the figure, for $\beta> 1-2\sqrt{\eta(1-\eta)}=0.4$ $\zeta_1(y)$ remains below zero one therefore can not be a part of the spectrum. On the other hand, for $\beta\leq 1-2\sqrt{\eta(1-\eta)}=0.4$ $\zeta_1(y)$ does intersect zero line which implies  that one is now in the spectrum (there is $y=y(11)$ and consequently a real $G_{{\cal Q}_1}(1)$ such that $\zeta_1(y)=0$). The borderline or the breaking point happens exactly when the $\zeta_1(y)$ curve touches the zero line. As figure indicates that happens for $y=y_{opt}=0.25$, exactly as the theory predicts.

\begin{figure}[htb]
\centering
\centerline{\epsfig{figure=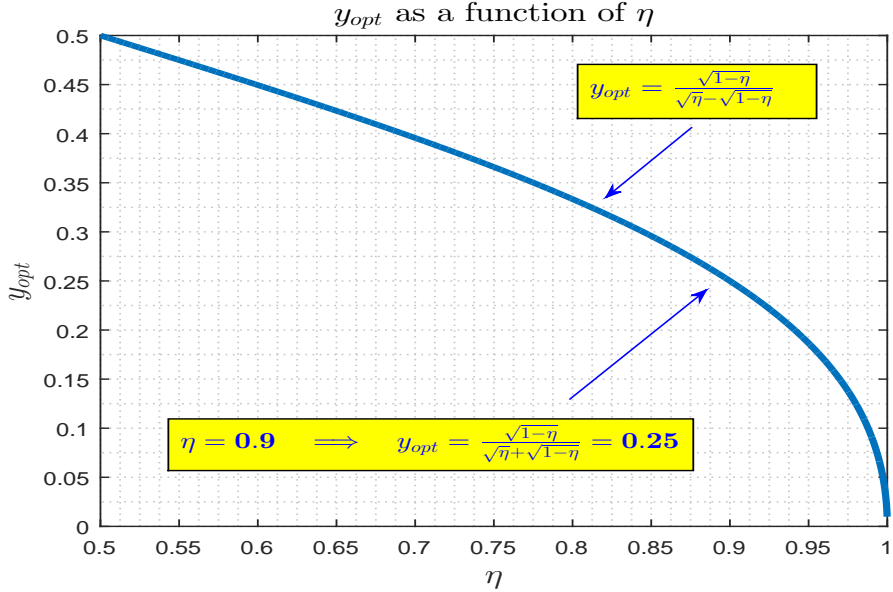,width=13.5cm,height=8cm}}
\caption{$y_{opt}$ as a function of $\eta$}
\label{fig:yoptofeta}
\end{figure}

\begin{figure}[htb]
\centering
\centerline{\epsfig{figure=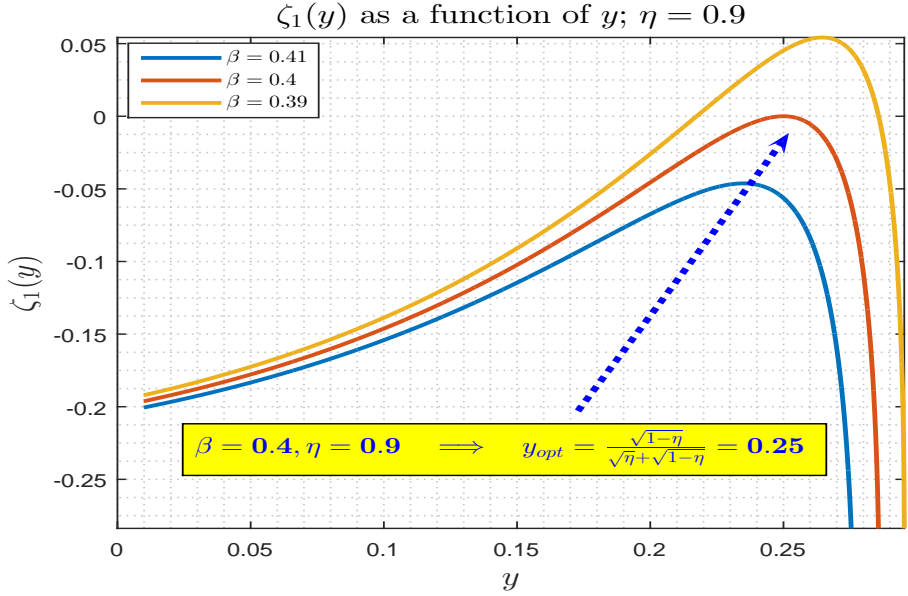,width=13.5cm,height=8cm}}
\caption{$\zeta_1{y}$ as a function of $y$}
\label{fig:zeta1ofy}
\end{figure}

We summarize the above results in the following lemma.

\begin{lemma}
  Assume the setup of Lemmas \ref{lemma:typwclemma1a} and \ref{lemma:typaclemma1} with ${\cal Q}_1$ as in (\ref{eq:typacanl0a}). Then we have for the upper edge of the ${\cal Q}_1$'s spectrum
\begin{equation}
 \beta = 1-2\sqrt{\eta(1-\eta)} \quad \Longleftrightarrow \quad \lambda_{max}({\cal Q}_1) = 1. \label{eq:typaclemma2eq1}
\end{equation}
Moreover,
\begin{equation}
\beta\leq 1-2\sqrt{\eta(1-\eta)} \quad \Longleftrightarrow \quad \lambda_{max}({\cal Q}_1)\leq 1. \label{eq:typaclemma2eq2}
 \end{equation}
   \label{lemma:typaclemma2}
\end{lemma}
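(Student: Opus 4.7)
The plan is to derive the edge characterization by working out the consequences of Lemma \ref{lemma:typaclemma1} at $z=1$ and identifying when $G_{{\cal Q}_1}(1)$ ceases to be real. Since the upper edge of the spectrum of ${\cal Q}_1$ equals $1$ exactly when $z=1$ is the breaking point beyond which $G_{{\cal Q}_1}$ picks up a nonzero imaginary part, the argument reduces to an algebraic condition on $(\beta,\eta)$ so that the functional equation for $G_{{\cal Q}_1}(1)$ still admits a real solution right at threshold but not beyond.

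First, I would specialize the relations in Lemma \ref{lemma:typaclemma1} to $z=1$, obtaining the coupled system for $z_1(1)$, $y(1)$, $S_{D_1}(z_1(1))$, and $G_{{\cal Q}_1}(1)$. Solving these algebraically (as in equations \eqref{eq:typacanl35}--\eqref{eq:typacanl40}) allows me to eliminate $z_1(1)$ and $G_{{\cal Q}_1}(1)$ in favor of $y(1)$, and then plug into \eqref{eq:typacanl37} to obtain a single scalar equation $\zeta_1(y)=0$. Substituting the explicit form of $G_{\tilde D}(y)$ from Lemma \ref{lemma:typwclemma1a} (with the ``$-$'' branch, which corresponds via inversion to the \emph{upper} edge of ${\cal Q}_1$) converts $\zeta_1(y)=0$ into $\zeta(y)=0$, a polynomial identity modulo the squaring done in \eqref{eq:typacanl44}. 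This recasts the edge condition as the requirement that the quartic $\zeta(y) = 4\beta y \zeta_3(y)$ has a double root.

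The double-root condition splits into the pair $\zeta_3(y)=0$ and $\zeta_3'(y)=0$. The derivative condition is a simple quadratic in $y$ whose positive root is $y_{opt}$ in \eqref{eq:typacanl52}. The core of the proof is then to show
\[
\zeta_3(y_{opt}) = 0 \quad\Longleftrightarrow\quad \beta = 1 - 2\sqrt{\eta(1-\eta)}.
\]
I would prove this by rewriting everything in the variables $c_0,c_3$ (with $c_2,c_1,r$ expressed in terms of these via \eqref{eq:typacanl55}--\eqref{eq:typacanl56}), showing that the condition $c_3^2 = -1 - 2c_3 + 4c_0 - 4c_0^2$ translates exactly to $\beta = 1-2\sqrt{\eta(1-\eta)}$ by \eqref{eq:typacanl69}, and then verifying that $\zeta_3(y_{opt})$ vanishes under this condition. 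The clean way is to split $\zeta_3(y_{opt}) = \zeta_3^{(1)}(y_{opt}) + \zeta_3^{(2)}(y_{opt})$ as in \eqref{eq:typacanl62}, compute the squares $(\zeta_3^{(i)}(y_{opt}))^2$ and systematically linearize them in $c_3$ using the quadratic relation repeatedly; the two squares agree, and a sign check (positivity of $\zeta_3^{(1)}$ and negativity of $\zeta_3^{(2)}$ under the allowed range of $\eta$) forces $\zeta_3^{(1)} = -\zeta_3^{(2)}$, which yields vanishing of the sum.

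The main obstacle will be the algebraic bookkeeping in the linearization step: the cubic appearance of $c_3$ in $(\zeta_3^{(1)})^2$ requires two successive applications of the quadratic relation $c_3^2 = -1 - 2c_3 + 4c_0 - 4c_0^2$, and keeping track of signs across the $\sqrt{r} = \sqrt{-c_3(1-c_0)}$ substitution is where a careless manipulation would silently lose the equivalence. Once \eqref{eq:typacanl72}--\eqref{eq:typacanl74} are established, the equivalence \eqref{eq:typaclemma2eq1} follows, and the monotonicity/touching-zero behavior of $\zeta_1(y)$ as $\beta$ decreases past the critical value (illustrated in Figure \ref{fig:zeta1ofy}) immediately upgrades this to the inequality \eqref{eq:typaclemma2eq2}: for $\beta > 1 - 2\sqrt{\eta(1-\eta)}$ the curve $\zeta_1(y)$ lies strictly below zero so $y(1)$ cannot be real, meaning $1$ is outside the spectrum of ${\cal Q}_1$ (hence $\lambda_{max}({\cal Q}_1) < 1$ is violated in the wrong direction), while for $\beta \leq 1 - 2\sqrt{\eta(1-\eta)}$ the curve crosses zero, placing $1$ at or below the upper edge and giving $\lambda_{max}({\cal Q}_1) \leq 1$.
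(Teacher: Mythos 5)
Your proposal reproduces the paper's own derivation essentially step-for-step: specializing Lemma \ref{lemma:typaclemma1} at $z=1$, reducing to the polynomial $\zeta(y)=4\beta y\,\zeta_3(y)$, imposing the double-root (touching) condition via the stationary point $y_{opt}$, splitting $\zeta_3(y_{opt})$ into $\zeta_3^{(1)}+\zeta_3^{(2)}$, linearizing the squares in $c_3$, performing the sign check to force $\zeta_3^{(1)}=-\zeta_3^{(2)}$, and finally appealing to the monotone touching-zero behavior of $\zeta_1(y)$ in $\beta$ to upgrade the equality \eqref{eq:typaclemma2eq1} to the inequality \eqref{eq:typaclemma2eq2}. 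This is precisely the argument laid out in the discussion preceding Lemma \ref{lemma:typaclemma2}, so the proposal is correct and follows the same route as the paper.
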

\begin{proof}
  Follows from the above discussion.
\end{proof}

\subsubsection{The spectrum of ${\cal Q}_1 \triangleq \lambda_V^T\lambda_V\lambda_U^T\lambda_U$ -- practical evaluations}
\label{sec:fpteqvspecacprac}

Now that we have fully characterized the upper edge of the ${\cal Q}_1$'s spectrum we can return to the consideration of the entire spectrum. Relying on the above presented machinery we can establish the following lemma.

\begin{lemma}
  Assume the setup of Lemmas \ref{lemma:typwclemma1a} and \ref{lemma:typaclemma1} with ${\cal Q}_1$ as in (\ref{eq:typacanl0a}). Let $G_{{\cal Q}_1}(z)$ be the solution of the following system of equations:
{\small \begin{eqnarray}
  y(z) & = & \frac{\sqrt{zG_{{\cal Q}_1}(z)-1}}{\sqrt{zG_{{\cal Q}_1}(z)-1} + z\sqrt{G_{{\cal Q}_1}(z)}  } \nonumber \\
G_{\tilde{D}}(y(z)) & = & \frac{y(z)-(\beta+\eta) \pm \sqrt{(y(z)-(\beta+\eta))^2+4\beta\eta(y(z)-1)}}{2((y(z))^2-y(z))} \nonumber \\
  \frac{1}{\beta}  \lp G_{\tilde{D}}(y(z)) - \frac{\eta}{y(z)}  - \frac{1-(\beta+\eta)}{y(z)-1}\rp
  & = & - (\sqrt{zG_{{\cal Q}_1}(z)-1} + \sqrt{G_{{\cal Q}_1}(z)})(\sqrt{zG_{{\cal Q}_1}(z)-1}
+ z\sqrt{G_{{\cal Q}_1}(z)}).\nonumber \\ \label{eq:typaclemma3eq1}\end{eqnarray}}Then the spectral function of ${\cal Q}_1$, $f_{{\cal Q}_1}(x)$, is obtained as
\begin{eqnarray}
f_{{\cal Q}_1}(x)= - \lim_{\epsilon\rightarrow 0^+} \frac{\text{\mbox{\emph{imag}}}(G_{{\cal Q}_1}(x+i\epsilon))}{\pi}.
   \label{eq:typaclemma3eq2}
     \end{eqnarray}
   \label{lemma:typaclemma3}
\end{lemma}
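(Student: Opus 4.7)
The plan is to recognize that Lemma \ref{lemma:typaclemma3} is simply a consolidation of the machinery already built up in Lemma \ref{lemma:typaclemma1}, with the auxiliary variables $z_1(\cdot)$ and $S_{D_1}(\cdot)$ eliminated so that the system involves only $G_{{\cal Q}_1}(z)$, $G_{\tilde D}(y(z))$, and $y(z)$. Once this elimination is carried out, the spectral density follows immediately from the standard Stieltjes inversion (\ref{eq:typwcanl7}).

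First, I would do the pure algebra that removes $z_1$ and $S_{D_1}$. Writing $G:=G_{{\cal Q}_1}(z)$ for brevity, the defining identity in (\ref{eq:typacanl32}) factors cleanly, since $1-zG=-(zG-1)$, into
\begin{equation*}
z_1(z)+1=-\sqrt{zG-1}\bigl(\sqrt{zG-1}+\sqrt{G}\bigr),\qquad z_1(z)=-\sqrt{G}\bigl(z\sqrt{G}+\sqrt{zG-1}\bigr).
\end{equation*}
Equation (\ref{eq:typacanl33}) rewrites $S_{D_1}(z_1(z))=\bigl(\sqrt{zG-1}+\sqrt{G}\bigr)/\sqrt{G}$, so the product $z_1(z)S_{D_1}(z_1(z))$ collapses to $-\bigl(z\sqrt{G}+\sqrt{zG-1}\bigr)\bigl(\sqrt{zG-1}+\sqrt{G}\bigr)$. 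Dividing $z_1(z)+1$ by this product cancels the factor $\sqrt{zG-1}+\sqrt{G}$ and yields precisely the first equation of the lemma, $y(z)=\sqrt{zG-1}/\bigl(\sqrt{zG-1}+z\sqrt{G}\bigr)$.

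Second, the right-hand side of the third equation is obtained by dividing $z_1(z)+1$ by $y(z)$ using the two factorizations just established: the $\sqrt{zG-1}$ in the numerator of $z_1(z)+1$ cancels with the $\sqrt{zG-1}$ in the numerator of $y(z)$, leaving $-\bigl(\sqrt{zG-1}+\sqrt{G}\bigr)\bigl(\sqrt{zG-1}+z\sqrt{G}\bigr)$. Substituting this into (\ref{eq:typacanl34}) gives the third equation of the lemma verbatim. The second equation is just Lemma \ref{lemma:typwclemma1a} applied at the argument $y(z)$. Together the three equations form an implicit definition of $G_{{\cal Q}_1}(z)$, so the Stieltjes inversion formula (\ref{eq:typwcanl7}) produces $f_{{\cal Q}_1}(x)$ as claimed in (\ref{eq:typaclemma3eq2}).

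The only real subtlety, and the step I would be most careful about, is the choice of branches. The square roots $\sqrt{G}$, $\sqrt{zG-1}$ introduced in the factorization and the $\pm$ in $G_{\tilde D}$ from Lemma \ref{lemma:typwclemma1a} must be selected so that $G_{{\cal Q}_1}$ is the Stieltjes transform of a probability measure on $[0,\infty)$: it must be analytic off the real axis, map the upper half-plane to the lower half-plane, and satisfy $G_{{\cal Q}_1}(z)\sim 1/z$ as $|z|\to\infty$. The same branch discipline was used implicitly in the edge analysis of Section 4.2.2, where the ``$-$'' branch of $G_{\tilde D}$ was identified with the upper edge of $\mathcal Q_1$; reusing that convention here pins down the correct branches globally and makes the Stieltjes inversion unambiguous.
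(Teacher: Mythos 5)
Your proof is correct and follows essentially the same route as the paper's: both eliminate $z_1$ and $S_{D_1}$ from (\ref{eq:typacanl32})--(\ref{eq:typacanl34}) by exploiting the factorizations $z_1(z)+1=-\sqrt{zG-1}\,(\sqrt{zG-1}+\sqrt{G})$ and $z_1(z)S_{D_1}(z_1(z))=-(z\sqrt{G}+\sqrt{zG-1})(\sqrt{zG-1}+\sqrt{G})$, then invoke the Stieltjes inversion (\ref{eq:typwcanl7}). Your remark on branch selection is a welcome addition the paper leaves implicit; otherwise the two arguments coincide. (Incidentally, your computation also reveals a small typo in the paper's displayed chain, which writes $S_{D}(z_1(z))/z_1(z)$ where $z_1(z)S_{D_1}(z_1(z))$ is meant and drops a minus sign in its last line; your version has the signs right.)
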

\begin{proof}
  Follows from Lemma \ref{lemma:typaclemma1} through a combination of the results of  Lemma \ref{lemma:typwclemma1a} (where $G_{\tilde{D}}(\cdot)$ is explicitly given) and (\ref{eq:typacanl32})-(\ref{eq:typacanl34}). The following two sequences of identities are then sufficient to prove the lemma
  \begin{eqnarray}
  y(z) & = & \frac{z_1(z)+1}{z_1(z) S_{D}(z_1(z))} \nonumber \\
       & = & \frac{((-zG_{{\cal Q}_1}(z)- \sqrt{G_{{\cal Q}_1}(z)}\sqrt{zG_{{\cal Q}_1}(z)-1})+1) \sqrt{G_{{\cal Q}_1}(z)}}
       { (-zG_{{\cal Q}_1}(z)- \sqrt{G_{{\cal Q}_1}(z)}\sqrt{zG_{{\cal Q}_1}(z)-1} ) (\sqrt{zG_{{\cal Q}_1}(z)-1} + \sqrt{G_{{\cal Q}_1}(z)} )} \nonumber \\
 & = & \frac{\sqrt{zG_{{\cal Q}_1}(z)-1}}{\sqrt{zG_{{\cal Q}_1}(z)-1} + z\sqrt{G_{{\cal Q}_1}(z)}  },
  \label{eq:typaclemma3preq1}\end{eqnarray}
and
  \begin{eqnarray}
  \frac{z_1(z)+1}{y(z)} & = & \frac{S_{D}(z_1(z))}{z_1(z)} \nonumber \\
       & = &
       \frac{ (-zG_{{\cal Q}_1}(z)- \sqrt{G_{{\cal Q}_1}(z)}\sqrt{zG_{{\cal Q}_1}(z)-1} ) (\sqrt{zG_{{\cal Q}_1}(z)-1} + \sqrt{G_{{\cal Q}_1}(z)} )}
       {\sqrt{G_{{\cal Q}_1}(z)}} \nonumber \\
 & = & (\sqrt{zG_{{\cal Q}_1}(z)-1} + z\sqrt{G_{{\cal Q}_1}(z)} )(\sqrt{zG_{{\cal Q}_1}(z)-1} + \sqrt{G_{{\cal Q}_1}(z)} ).
  \label{eq:typaclemma3preq2}\end{eqnarray}
\end{proof}

In Figure \ref{fig:fcalQ1beta04eta09} we show the entire spectrum of $f_{{\cal Q}_1}(x)$. We chose $\beta=0.4$ and $\eta=0.9$ and ran the experiments with $n=4000$. As can be seen from the figure, the obtained numerical results are in a strong agreement with what the theory predicts.

\begin{figure}[htb]
\centering
\centerline{\epsfig{figure=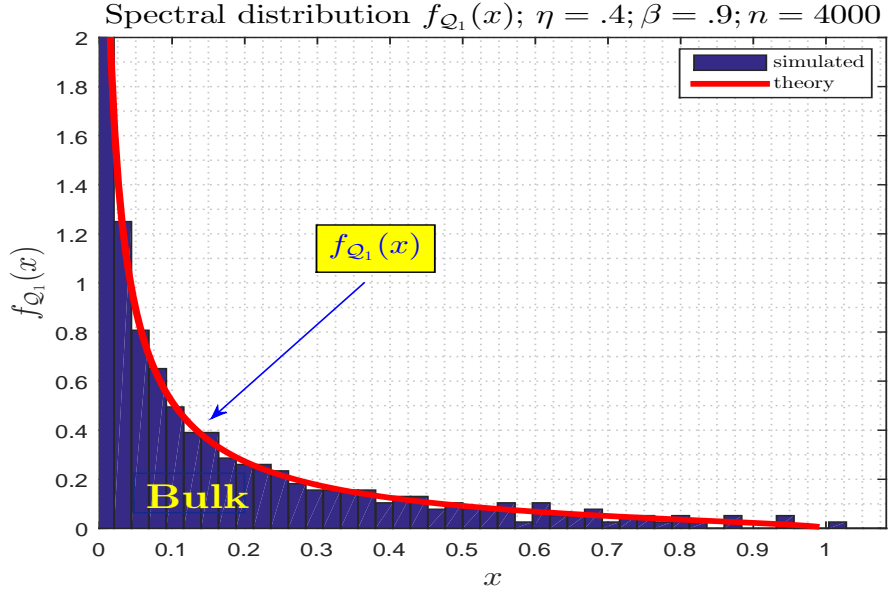,width=13.5cm,height=8cm}}
\caption{$f_{{\cal Q}_1}(x)$ -- spectral function of ${\cal Q}_1$; $\beta=0.4$ and $\eta=0.9$}
\label{fig:fcalQ1beta04eta09}
\end{figure}

\subsubsection{$\ell_0^*-\ell_1^*$-equivalence via the spectral limit -- asymmetric scenario}
\label{sec:fpteqvsplimac}

From Corollary \ref{cor:cinfcor3}, (\ref{eq:cinfcor3eq1}), and (\ref{eq:typacanl0a0}) one has in the \emph{asymmetric scenario}
 \begin{eqnarray}
\ell_0^*-\ell_1^*-\mbox{equivalence} \quad \Longleftrightarrow \quad \lambda_{max}(\lambda_V^T\lambda_V\lambda_U^T\lambda_U)\leq 1
\quad\Longleftrightarrow \quad \lambda_{max}({\cal Q}_1)\leq 1. \label{eq:typacanl77}
\end{eqnarray}
From (\ref{eq:typaclemma2eq2}) and (\ref{eq:typacanl77}) we finally have
\begin{equation}\label{eq:typacanl78}
  \ell_0^*-\ell_1^*-\mbox{equivalence} \quad\Longleftrightarrow \quad \beta  \leq  1 - 2\sqrt{\eta-\eta^2}.
 \end{equation}
Analogously to Theorem \ref{thm:typwcthm1} we can now establish  a precise \emph{asymmetric scenario} location of the phase transition in a typical statistical context.

\begin{theorem}(\textbf{\bl{$\ell_1^*$ -- phase transition -- C-inf (typical \prp{\underline{asymmetric scenario}})}})
   Assume the setup of Theorem \ref{thm:typwcthm1} with rank-$k$ matrix  $X_{sol}=X\in\mR^{n\times n}$ that now has Haar distributed \textbf{independent} bases of its orthogonal row and column spans $\bU^{\perp}\in\mR^{n\times (n-k)}$ and $\bV^{\perp}\in\mR^{n\times (n-k)}$ ($X_{sol}^T\bU^{\perp}=X_{sol}\bV^{\perp}=\0_{n\times (n-k)}$). Let $M\triangleq M^{(l)}\in\mR^{n\times n}$ be as defined in (\ref{eq:cinfanl2a}). Let $\beta_{ac}$ and $\eta$ satisfy the  following
\begin{center}
 \begin{tcolorbox}[beamer,title=\textbf{C-inf $\ell_1^*$ \yellow{asymmetric scenario} phase transition (PT)} characterization,lower separated=false, colback=yellow!95!green!40!white,
colframe=red!75!blue!60!black,fonttitle=\bfseries,width=5.5in]
  \begin{equation}\label{eq:typacthm1eq1}
    \xi_{\eta}^{(ac)}(\beta) \triangleq \beta-1+2\sqrt{\eta-\eta^2}=0.
  \end{equation}
 \end{tcolorbox}
\end{center}
 \noindent \textbf{If and only if} $\beta\leq \beta_{ac}$
    \begin{equation}\label{eq:typacthm1eq2}
   \lim_{n\rightarrow\infty} \mP(\ell_0^*\Longleftrightarrow \ell_1^*)=\ \lim_{n\rightarrow\infty} \mP(\mathbf{RMSE}=0)=1,
  \end{equation}
and the solutions of (\ref{eq:genmcl0posmmt}) and (\ref{eq:genmcl1posmmt}) coincide with overwhelming probability.
  \label{thm:typacthm1}
\end{theorem}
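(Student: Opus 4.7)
The plan is to chain together the three principal results already established in the excerpt. First I would invoke the spectral characterization of the $\ell_0^*-\ell_1^*$-equivalence provided by Corollary \ref{cor:cinfcor3}: under the setup of Theorem \ref{thm:cinfthm1} with the block mask $M = M^{(l)}$ and $k \leq l$, the C-inf recovery succeeds perfectly (i.e., $\ell_0^* \Longleftrightarrow \ell_1^*$ and $\mathbf{RMSE}=0$) if and only if $\lambda_{max}(\Lambda_V^T\Lambda_V\Lambda_U^T\Lambda_U) \leq 1$. By the definitions in (\ref{eq:typacanl0a}) this quantity is exactly $\lambda_{max}({\cal Q}_1)$. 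Hence, for a fixed realization of the random orthogonal bases $\bU^{\perp},\bV^{\perp}$ of the row/column complements of $X_{sol}$, the equivalence reduces to the deterministic spectral inequality $\lambda_{max}({\cal Q}_1) \leq 1$.

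Next, because $\bU^{\perp}$ and $\bV^{\perp}$ are Haar distributed and, critically, independent of one another (this is where the asymmetric hypothesis enters and where the FPT framework of Section \ref{sec:fpteqvac} applies), I would invoke Lemma \ref{lemma:typaclemma2}, which provides the exact free-probabilistic characterization of ${\cal Q}_1$'s upper spectral edge in the large-$n$ linear regime: $\lambda_{max}({\cal Q}_1) \leq 1$ asymptotically if and only if $\beta \leq 1 - 2\sqrt{\eta(1-\eta)} = \beta_{ac}$, with equality at the critical threshold. Combined with the preceding step, this gives the chain
\begin{equation*}
\ell_0^* \Longleftrightarrow \ell_1^* \;\;\Longleftrightarrow\;\; \lambda_{max}({\cal Q}_1) \leq 1 \;\;\Longleftrightarrow\;\; \beta \leq 1 - 2\sqrt{\eta - \eta^2},
\end{equation*}
which is exactly (\ref{eq:typacthm1eq1})--(\ref{eq:typacthm1eq2}) modulo passing from asymptotic/typical statements to the overwhelming-probability assertion.

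The main obstacle is precisely this final concentration step: Lemma \ref{lemma:typaclemma2} delivers only the limiting upper edge, whereas the theorem requires $\mP(\mathbf{RMSE}=0) \to 1$. I would handle this by observing that the map $(\bU^{\perp},\bV^{\perp}) \mapsto \lambda_{max}({\cal Q}_1)$ is Lipschitz (in Frobenius norm) on the product of two Stiefel manifolds, and that Haar measure on the orthogonal group satisfies a Gaussian-type concentration inequality. Standard Stiefel concentration then upgrades the almost-sure convergence of the top eigenvalue supplied by the FPT analysis to exponential concentration around the limiting edge, which is what ``overwhelming probability'' demands. This ingredient simultaneously handles both directions of the ``if and only if'': for $\beta < \beta_{ac}$ it forces $\lambda_{max}({\cal Q}_1) < 1$ w.o.p.\ and hence equivalence via Corollary \ref{cor:cinfcor3}; for $\beta > \beta_{ac}$ it forces $\lambda_{max}({\cal Q}_1) > 1$ w.o.p., activating the converse clause of Corollary \ref{cor:cinfcor3} (or equivalently the existence clause (\ref{eq:cinfthm2}) of Theorem \ref{thm:cinfthm1}) and producing, w.o.p., an $X_{sol}$ with those spans for which the two solutions disagree and $\mathbf{RMSE}$ is bounded away from $0$. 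Putting everything together yields the stated phase transition at $\beta_{ac} = 1 - 2\sqrt{\eta-\eta^2}$, and immediately exhibits the ``doubling low-rankness'' phenomenon relative to the worst-case threshold $\tfrac{1}{2} - \sqrt{\eta-\eta^2}$ of Theorem \ref{thm:typwcthm1}.
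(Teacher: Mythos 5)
Your proposal follows the same route as the paper: combine Corollary \ref{cor:cinfcor3} (which reduces $\ell_0^*$--$\ell_1^*$-equivalence to $\lambda_{max}({\cal Q}_1)\leq 1$) with Lemma \ref{lemma:typaclemma2} (the FPT-derived edge criterion $\lambda_{max}({\cal Q}_1)\leq 1\Longleftrightarrow\beta\leq 1-2\sqrt{\eta-\eta^2}$), exactly as the paper's one-line proof and the surrounding discussion do. Your supplementary concentration argument addresses a gap the paper leaves implicit, but note that the map $(\bU^{\perp},\bV^{\perp})\mapsto\lambda_{max}({\cal Q}_1)$ is not globally Lipschitz --- $\Lambda_V$ and $\Lambda_U$ involve inverses of $(I^{(l)})^T\bV^{\perp}$ and $(I^{(l)})^T\bU^{\perp}$, which can degenerate --- so the Stiefel-concentration step would need a localization (or truncation) argument to be made rigorous; this is no worse than what the paper itself glosses over.
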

\begin{proof}
Follows from Lemma \ref{lemma:typaclemma2} and the above discussion.
\end{proof}

The results related to the use of the $\ell_1^*$-minimization heuristic for solving the causal inference problems obtained based on the above theorem are shown in Figure \ref{fig:cinfspectypacPTbetaeta}. As in the worst case scenario, the phase transition (PT) curve splits the $(\beta,\eta)$ region into two separate subregions where the $\ell_0^*-\ell_1^*$-equivalence phenomenon either occurs or fails to occur. Basically, below the curve one has a perfect recovery with the residual $\mathbf{RMSE}=\|\vecw(\hat{X})-\vecw(X_{sol})\|_2=0$. Contrary to that, above the curve though, there is an $X_{sol}$ for which $\mathbf{RMSE}\rightarrow\infty$ and $\ell_1^*$ fails.

\begin{figure}[htb]
\centering
\centerline{\epsfig{figure=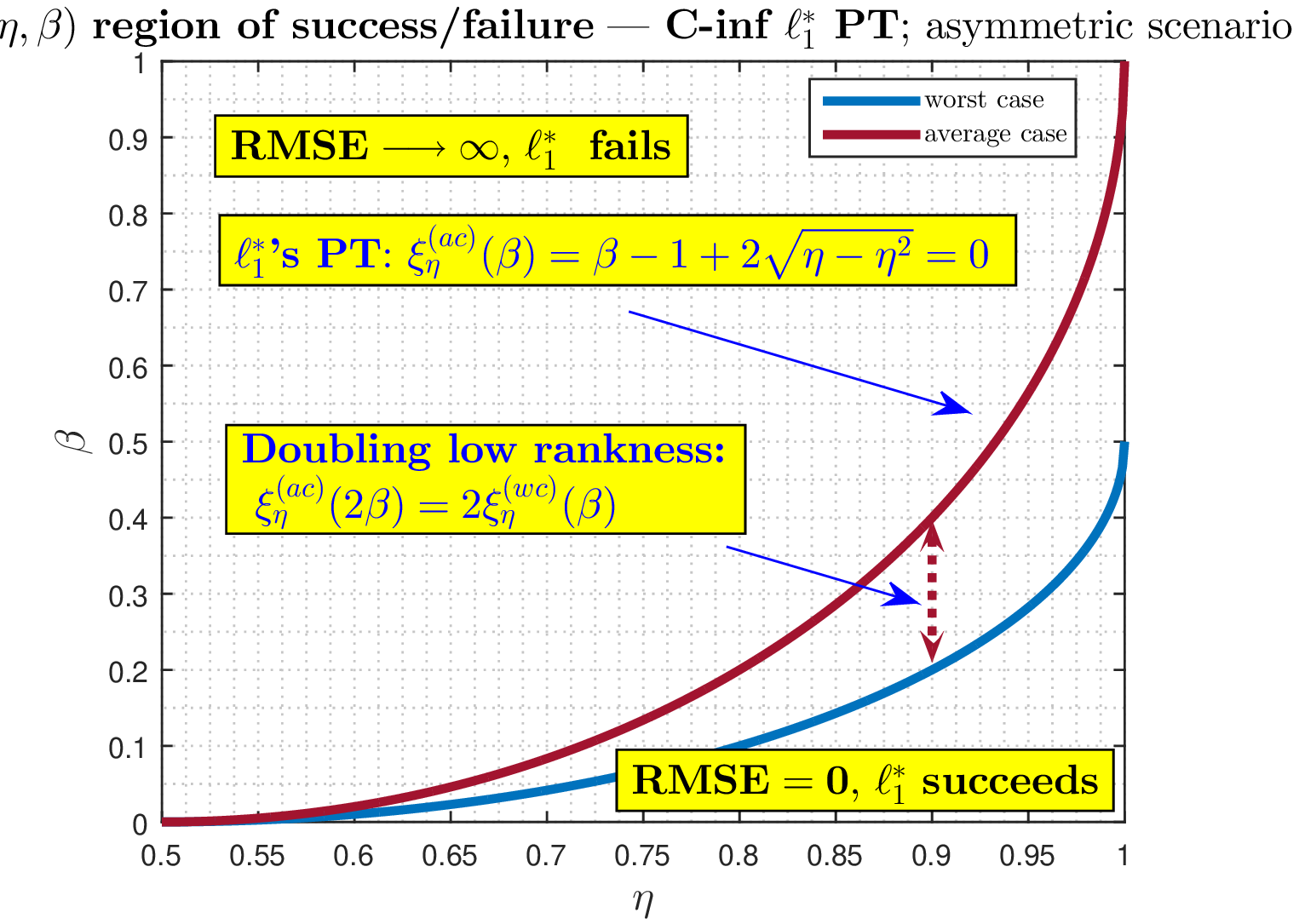,width=13.5cm,height=8cm}}
\caption{Causal inference (\bl{\textbf{C-inf}}) -- typical \emph{\textbf{asymmetric scenario}} $\ell_1^*$ phase transition}
\label{fig:cinfspectypacPTbetaeta}
\end{figure}

The following corollary adapts the above results so that they fit the standard $(\alpha,\beta)$ representation typically used in the compressed sensing (CS), low rank recovery (LRR), and matrix completion (MC) literature.

\begin{corollary}(\textbf{\bl{$\ell_1^*$ -- phase transition -- C-inf (typical \prp{\underline{asymmetric scenario}}; standard $(\alpha,\beta)$ representation)}})
   Assume the setup of Theorem \ref{thm:typacthm1}. Let $m$ be the total number of ones in matrix $M$ and let $\alpha\triangleq\lim_{n\rightarrow\infty}\frac{m}{n^2}$. Let $\beta$ and $\alpha_w$ satisfy the
\begin{center}
 \begin{tcolorbox}[beamer,title=\textbf{C-inf $\ell_1^*$ \yellow{asymmetric scenario} PT (standard $(\alpha,\beta)$ representation)},lower separated=false, colback=yellow!95!green!40!white,
colframe=red!75!blue!60!black,coltext=black,fonttitle=\bfseries,width=5in]
  \begin{equation}\label{eq:typaccor2eq1}
    \xi_{\beta}^{(wc,s)}(\alpha) \triangleq \beta-1+2\sqrt{\sqrt{1-\alpha}-1+\alpha}=0.
  \end{equation}
 \end{tcolorbox}
\end{center}
\noindent \textbf{If and only if} $\alpha\geq \alpha_{w}$
    \begin{equation}\label{eq:typaccor2eq2}
   \lim_{n\rightarrow\infty} \mP(\ell_0^*\Longleftrightarrow \ell_1^*)=\ \lim_{n\rightarrow\infty} \mP(\mathbf{RMSE}=0)=1,
  \end{equation}
and the solutions of (\ref{eq:genmcl0posmmt}) and (\ref{eq:genmcl1posmmt}) coincide with overwhelming probability.
  \label{cor:typaccor2}
\end{corollary}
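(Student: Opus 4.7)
The plan is to deduce Corollary~\ref{cor:typaccor2} directly from Theorem~\ref{thm:typacthm1} by a straightforward change of variable from $\eta$ to $\alpha$. The scaling relation is already in hand: equation~(\ref{eq:typwcanl21}) records that $\alpha = 1 - (1-\eta)^2$, since the block mask $M^{(l)}$ contains exactly $m = n^2 - (n-l)^2$ ones. Inverting on the physical branch $\eta \in [0,1]$, where $1-\eta \geq 0$, gives $1-\eta = \sqrt{1-\alpha}$ and hence $\eta = 1 - \sqrt{1-\alpha}$.

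Next I would substitute this into the asymmetric PT curve $\beta_{ac} = 1 - 2\sqrt{\eta(1-\eta)}$ from (\ref{eq:typacthm1eq1}). A one-line computation yields
\begin{equation*}
\eta(1-\eta) \;=\; \bigl(1-\sqrt{1-\alpha}\bigr)\sqrt{1-\alpha} \;=\; \sqrt{1-\alpha} - 1 + \alpha,
\end{equation*}
so that $\beta_{ac} = 1 - 2\sqrt{\sqrt{1-\alpha}-1+\alpha}$, which is precisely the defining relation (\ref{eq:typaccor2eq1}) for the PT boundary $\xi_{\beta}^{(wc,s)}(\alpha) = 0$. This algebraic step is entirely routine and replaces the $\eta$-parameter by the more standard fraction-of-observations $\alpha$ used in the CS/LRR/MC literature.

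The only nontrivial point, and therefore the step I expect to require the most care, is translating Theorem~\ref{thm:typacthm1}'s condition $\beta \leq \beta_{ac}(\eta)$ into the corollary's condition $\alpha \geq \alpha_w$ as an \emph{if and only if}. Because $\alpha = 1-(1-\eta)^2$ is strictly increasing in $\eta$ on $[0,1]$, while $\beta_{ac}(\eta) = 1 - 2\sqrt{\eta(1-\eta)}$ is \emph{not} monotone in $\eta$ globally (it decreases on $[0,1/2]$ and increases on $[1/2,1]$), one must restrict to the practically relevant branch $\eta \geq 1/2$, equivalently $\alpha \geq 3/4$. On that branch $\beta_{ac}$, viewed as a function of $\alpha$, is strictly increasing, so for fixed $\beta$ the recovery condition $\beta \leq \beta_{ac}$ is equivalent to $\alpha \geq \alpha_w$, with $\alpha_w$ the unique solution of (\ref{eq:typaccor2eq1}) on this branch. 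Invoking Theorem~\ref{thm:typacthm1} then transfers the probability-one conclusions (\ref{eq:typacthm1eq2}) verbatim into (\ref{eq:typaccor2eq2}), completing the proof.
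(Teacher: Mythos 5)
Your proof follows essentially the same route as the paper, which simply notes $m=n^2-(n-l)^2$, hence $\alpha=1-(1-\eta)^2$, and substitutes into Theorem~\ref{thm:typacthm1}; your algebraic identity $\eta(1-\eta)=\sqrt{1-\alpha}-1+\alpha$ is exactly the required change of variables. Your additional observation about the non-monotonicity of $\beta_{ac}$ in $\eta$ and the restriction to the branch $\eta\geq 1/2$ (equivalently $\alpha\geq 3/4$) is a genuine refinement that the paper's one-line proof does not spell out, and it is needed to make the ``if and only if $\alpha\geq\alpha_w$'' formulation unambiguous when the PT curve is read as a function of $\alpha$.
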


\begin{proof}
  Follows as a direct consequence of Theorem \ref{thm:typacthm1} after noting that $m=n^2-(n-l)^2$ and consequently $\alpha=1-(1-\eta)^2$.
\end{proof}

 Figure \ref{fig:cinfspectypacPTalphabeta} shows the results obtained based on the above corollary in the standard $(\alpha,\beta)$ region format. As usual in the PT considerations, the entire $(\alpha,\beta)$ region is split in the part below the curve where $\mathbf{RMSE}=\|\vecw(\hat{X})-\vecw(X_{sol})\|_2=0$ and the part above the curve where even $\mathbf{RMSE}\rightarrow\infty$ is achievable.

We should point out an interesting similarity between what we observed here in the above corollary and in Figure \ref{fig:cinfspectypacPTalphabeta} on the one side and what is known to hold in generic LRR. Namely, as Corollary \ref{cor:typaccor2} states (and as is emphasized in  Figure \ref{fig:cinfspectypacPTalphabeta}), for the same value of $\alpha$ one achieves exactly two times larger $\beta$ in the asymmetric case than in the worst case. As the worst case is basically symmetric, one has that the PTs of the symmetric and the nonsymmetric scenarios are distinguished by a factor of two. Similar observation was in place when it comes to the comparison between the LRR of the symmetric and the general (nonsymmetric) matrices. However, one should keep in mind a fundamental difference as well. In LRR the underlying symmetry is \emph{a priori known} and can be utilized in the algorithms design whereas here it is just the choice of the worst case problem instance and is not assumed to be known to the algorithm itself. Of course, given the properties of the LRR, such a choice is not necessarily very surprising.

\begin{figure}[htb]
\centering
\centerline{\epsfig{figure=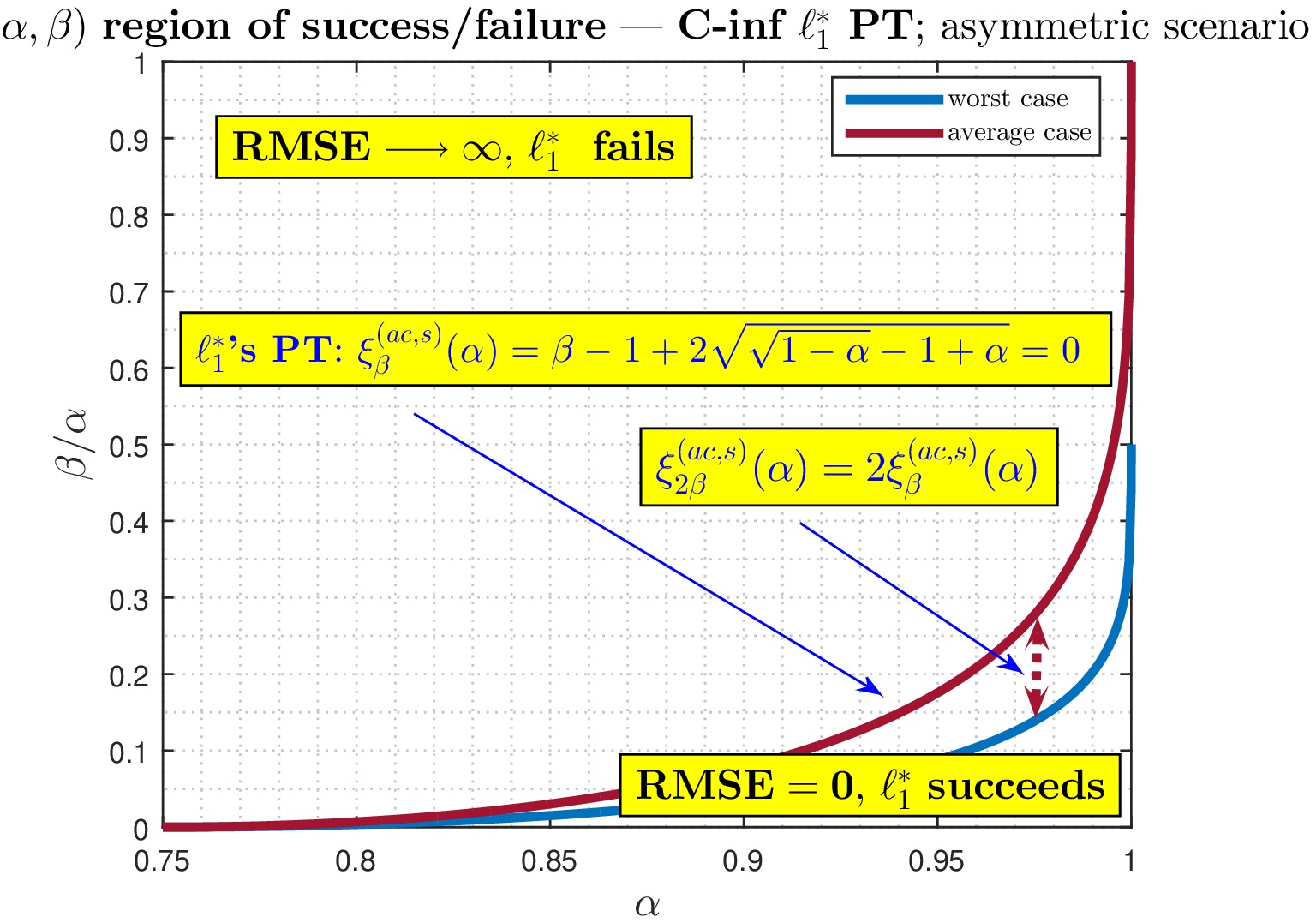,width=13.5cm,height=8cm}}
\caption{Causal inference (\bl{\textbf{C-inf}}) -- typical \emph{\textbf{asymmetric scenario}} $\ell_1^*$ phase transition ($(\alpha,\beta)$ region)}
\label{fig:cinfspectypacPTalphabeta}
\end{figure}

\subsection{Numerical results}
\label{sec:cpmcnumres}

To complement the above theoretical findings and see how successful in characterizing the utilization of the $\ell_1^*$-minimization in C-inf problems they indeed are, we conducted a set of numerical experiments and show the obtained results in Figure \ref{fig:numrestypacbetaetaptsim}.  As in \cite{Cinfidealwc22}, we again observe both the PT's existence and a solid agreement between its theoretical prediction and the results obtained through the simulations.

\begin{figure}[htb]
\centering
\centerline{\epsfig{figure=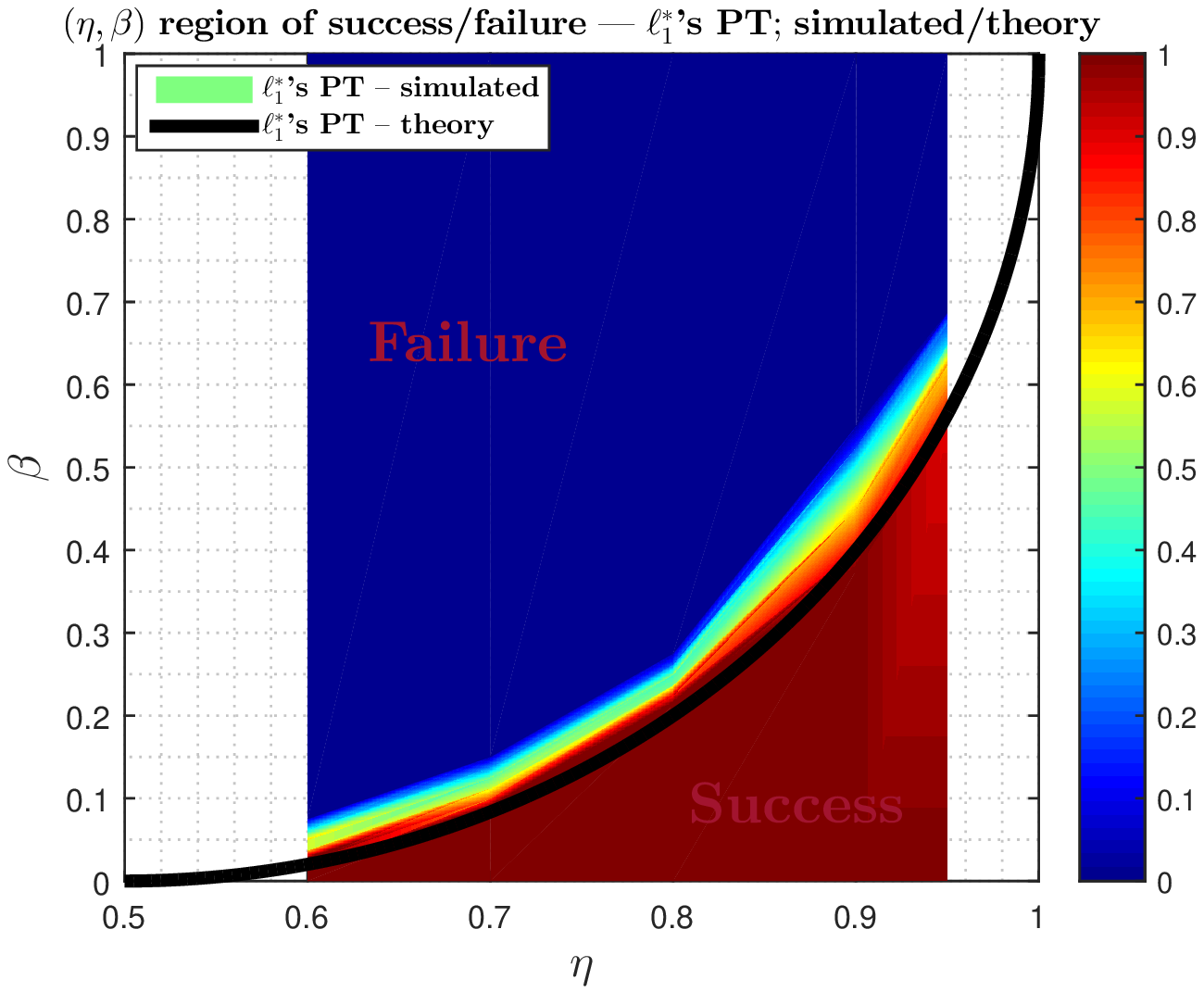,width=13.5cm,height=8cm}}
\caption{C-inf $\ell_1^{*}$'s asymmetric scenario phase transition (PT)}
\label{fig:numrestypacbetaetaptsim}
\end{figure}

In the conducted numerical experiments we chose $n=80$ and $\eta$ in the range $[0.6,0.95]$. Clearly, such fairly small matrix sizes correspond to the settings quite opposite from the ones that we used in the theoretical analysis. Still, even though the theory is predicated on the large $n$ assumption, it is not impossible that its conclusions remain valid for smaller values of $n$ as well. The results form Figure \ref{fig:numrestypacbetaetaptsim} confirm that this is indeed the case. Moreover, one can then say that the large $n$ regime, needed for the theoretical consideration, practically may start ro kick in already for rather small (of order of a few tens!) values of $n$. This ultimately means that the presented results, although theoretical in nature, have in themselves a strong practical component as well. Finally, we should also add that for larger values of $n$ an even better agreement between the theoretical and the simulated results is to be expected.

A few additional points regarding the simulations setup might be useful. First, one should emphasize, that in order to be in an agreement with the theoretical considerations, we, in all numerical experiments, considered the so-called \emph{typical} behavior. Following further into the footsteps of the theoretical considerations, the presented simulations results were obtained for the square matrices. As was the case in \cite{Cinfidealwc22}, all theoretical considerations can be repeated assuming the non-square scenarios as well. We, however, (as in \cite{Cinfidealwc22}) prioritized the clarity of the presentations over simple generalizations and opted for the square scenarios which are substantially easier to present. Also, all the simulations needed for Figure \ref{fig:numrestypacbetaetaptsim} were done with the singular values of the unknown targeted  matrices equal to one. While we refer to \cite{Cinfidealwc22} for a bit more complete discussion regarding such a choice,
we here briefly mention that choices of this type are known to serve as the worst case examples in establishing the reversal $\ell_0-\ell_1$-equivalence conditions. As in \cite{Cinfidealwc22}, we also ran the simulations where the singular values were randomly chosen with results either identically matching or improving on the ones shown in Figure \ref{fig:numrestypacbetaetaptsim}.

\section{Conclusion}
\label{sec:conc}

In this paper, we have built on  the mathematical \prp{\textbf{\emph{Causal inference (C-inf) $\leftrightarrow$  \textbf{low-rank recovery (LRR)}}}} connection established  in \cite{Cinfidealwc22} to deal with asymmetric PTs phenomena. The results of \cite{Cinfidealwc22} proved that the nuclear norm ($\ell_1^*$) minimization heuristic, when used for solving the low rank recovery C-inf problems, exhibits the so-called phase transition (PT) phenomenon. Moreover, in a typical statistical scenario, \cite{Cinfidealwc22} characterized the exact location of the \emph{worst case} PT. This effectively meant that there are problem instances where the $\ell_1^*$  predicated behavior might be improved upon. Here we showed that this is indeed true. Considering an \emph{asymmetric scenario} (in contrast with the symmetric worst case one from \cite{Cinfidealwc22}) we determined the underlying exact phase transitions locations. Moreover, we uncovered a \prp{\textbf{\emph{doubling low rankness}}} phenomenon, which means that, throughout the entire region of allowed system parameters, matrices of exactly two times larger rank can be recovered when compared to the worst case scenario from \cite{Cinfidealwc22}. Such a phenomenon also ensures that the simplicity of the worst case PTs from \cite{Cinfidealwc22} is preserved in the asymmetric scenarios as well. Consequently, one is again able to elegantly pin down the relation between the low rankness of the target C-inf matrix and the time when the treatment is applied.

Throughout the process of creating the theoretical phase transitions characterizations we also established several mathematical results that are of independent interest. All of our theoretical findings we supplemented with the results obtained from the corresponding numerical experiments. Moreover, in all cases we observed a rather overwhelming agreement between what the theory predicts and what the simulations provide.

To achieve the desired phase transition results we relied on a combination of the ideas from the Random duality theory (\bl{\textbf{RDT}}) and the Free probability theory (\bl{\textbf{FPT}}). As a result, we obtained a very powerful and generic mathematical apparatus that will serve as a theoretical platform in further explorations. As this and the companion paper \cite{Cinfidealwc22} are of an introductory nature we stopped short of showcasing how the created theory fairs when utilized for handling more complex problem instances (these among others include the, practically very relevant, noisy and approximately low-rank corresponding ones). Our companion papers will establish results along these directions relying on the mathematical framework presented here and in \cite{Cinfidealwc22}.

\begin{singlespace}
\bibliographystyle{plain}
\bibliography{cinfspecidealRefs1}

\begin{thebibliography}{10}

\bibitem{Abadsynth19}
A.~Abadie.
\newblock Using synthetic controls: Feasibility, data requirements, and
  methodological aspects.
\newblock {\em Journal of Economic Literature}, 2019.

\bibitem{ADHsynth10}
A.~Abadie, A.~Diamond, and J.~Hainmueller.
\newblock Synthetic control methods for comparative case studies: Estimating
  the effect of california’s tobacco control program.
\newblock {\em Journal of the American Statistical Association}, 105:493--505,
  2010.

\bibitem{Agarwal2021}
A.~Agarwal, M.~Dahel, D.~Shah, and D.~Shen.
\newblock Causal matrix completion.
\newblock available online at arxiv.

\bibitem{ABDIK21}
S.~Athey, M.~Bayati, N.~Doudchenko, G.~Imbens, and K.~Khosravi.
\newblock Matrix completion methods for causal panel data models.
\newblock {\em Journal of the American Statistical Association},
  116(536):1716--1730, 2021.

\bibitem{AthImb18}
S.~Athey and G.~W. Imbens.
\newblock Design-based analysis in differencein- differences settings with
  staggered adoption.
\newblock {\em Technical Report, National Bureau of Economic Research}, 2018.

\bibitem{AthSte02}
S.~Athey and S.~Stren.
\newblock The impact of information technology on emergency health care
  outcomes.
\newblock {\em The RAND Journal of Economics}, 33:399--432, 2002.

\bibitem{CR09matcomp}
E.~J. Candes and B.~Recht.
\newblock Exact matrix completion via convex optimization.
\newblock {\em Foundations of Computational Mathematics}, (9):717, 2009.

\bibitem{CT10matcomp}
E.~J. Candes and T.~Tao.
\newblock The power of convex relaxation: Near-optimalmatrix completion.
\newblock {\em IEEE Transactions on Information Theory}, 56:2053--2080, 2010.

\bibitem{CPmatcomp10}
E.~J. Candès and Y.~Plan.
\newblock Matrix completion with noise.
\newblock {\em Proceedings of the IEEE}, 98:925--936, 2010.

\bibitem{Cinfidealwc22}
A.~Capponi and M.~Stojnic.
\newblock Causal inference ({C}-inf) --- closed form worst case typical phase
  transitions.
\newblock available online at arxiv.

\bibitem{DoudImb16}
N.~Doudchenko and G.~W. Imbens.
\newblock Balancing, regression, difference-in-differences and synthetic
  control methods: A synthesis.
\newblock {\em Technical Report, National Bureau of Economic Research}, 2016.

\bibitem{Haag97}
U.~Haagerup.
\newblock On {V}oiculescu{’}s {R}- and {S}-transforms for free non-commuting
  random variables.
\newblock {\em In: Free Probability Theory, Fields Institute Communications},
  12:127--148, 1997.

\bibitem{HerRob10}
M.~A. Hernan and J.~M. Robins.
\newblock {\em Causal Inference}.
\newblock Boca Raton, FL: CRC Press, 2010.

\bibitem{ImbRub15}
G.~W. Imbens and D.~B. Rubin.
\newblock {\em Causal Inference in Statistics, Social, and Biomedical
  Sciences}.
\newblock New York: Cambridge University Press, 2015.

\bibitem{KallusNIPS}
N.~Kallus, X.~Mao, and M.~Udell.
\newblock Causal inference with noisy and missing covariates via matrix
  factorization.
\newblock {\em Proceedings of the 32rd International Conference on Neural
  Information Processing}, pages 6921–--6932, December 2018.
\newblock available online.

\bibitem{KMO10matcomp}
R.~H. Keshavan, A.~Montanari, and S.~Oh.
\newblock Matrix completion from a few entries.
\newblock {\em IEEE Transactions on Information Theory}, 56:2980--2998, 2010.

\bibitem{KMO10matcomp1}
R.~H. Keshavan, A.~Montanari, and S.~Oh.
\newblock Matrix completion from noisy entries.
\newblock {\em Journal of Machine Learning Research}, 11:2057--2078, 2010.

\bibitem{Klopp14matcomp}
O.~Klopp.
\newblock Noisy low-rank matrix completion with general sampling distribution.
\newblock {\em Bernoulli}, 20:282--303, 2014.

\bibitem{KLT11matcomp}
V.~Koltchinskii, K.~Lounici, and A.~B. Tsybakov.
\newblock Nuclear-norm penalization and optimal rates for noisy low-rank matrix
  completion.
\newblock {\em The Annals of Statistics}, 39:2302--2329, 2011.

\bibitem{MHT10}
R.~Mazumder, T.~Hastie, and R.~Tibshirani.
\newblock Spectral regularization algorithms for learning large incomplete
  matrices.
\newblock {\em Journal of Machine Learning Research}, 11:2287--2322, 2010.

\bibitem{NW11matcomp}
S.~N. Negahban and M.~J. Wainwright.
\newblock Estimation of (near) low-rank matriceswith noise and high-dimensional
  scaling.
\newblock {\em The Annals of Statistics}, 39:1069--1097, 2011.

\bibitem{NW12matcomp}
S.~N. Negahban and M.~J. Wainwright.
\newblock Restricted strong convexity and weighted matrix completion: Optimal
  bounds with noise.
\newblock {\em Journal of Machine Learning Research}, 13:1685--1697, 2012.

\bibitem{NicaSpeich06}
A.~Nica and R.~Speicher.
\newblock {\em London Mathematical Society Lecture Note Series, vol. 335}.
\newblock Cambridge University Press, 2006.

\bibitem{OH10}
S.~Oymak and B.~Hassibi.
\newblock New null space results and recovery thresholds for matrix rank
  minimization.
\newblock Nov 2010.
\newblock available online at \bl{\url{http://arxiv.org/abs/1011.6326}}.

\bibitem{PearlSurv09}
J.~Pearl.
\newblock Causal inference in statistics: An overview.
\newblock {\em Statistics Surveys}, 3:96--146, 2009.

\bibitem{PearlCausBook09}
J.~Pearl.
\newblock {\em Causality: Models, Reasoning, and Inference}.
\newblock 2nd. Cambridge University Press, New York, 2009.

\bibitem{PearlBar19}
J.~Pearl and E.~Bareinboim.
\newblock A note on {‘}generalizability of study results{’}.
\newblock {\em Journal of Epidemiology}, 30:186--188, 2019.

\bibitem{PearlSMack18}
J.~Pearl and D.~Mackenzie.
\newblock {\em The Book of Why}.
\newblock Basic Books, New York, 2018.

\bibitem{Rechtmatcomp11}
B.~Recht.
\newblock A simpler approach to matrix completion.
\newblock {\em Journal of Machine Learning Research}, 12:3413--3430, 2011.

\bibitem{RFPrank}
B.~Recht, M.~Fazel, and P.~A. Parrilo.
\newblock Guaranteed minimum-rank solution of linear matrix equations via
  nuclear norm minimization.
\newblock 2007.
\newblock available online at \bl{\url{http://www.dsp.ece.rice.edu/cs/}}.

\bibitem{RT11matcomp}
A.~Rohde and A.~B. Tsybakov.
\newblock Estimation of high-dimensional low-rank matrices.
\newblock {\em The Annals of Statistics}, 39:887--930, 2011.

\bibitem{RoseRub83}
P.~R. Rosenbaum and D.~B. Rubin.
\newblock Thecentral role of the propensity score in observational studies for
  causal effects.
\newblock {\em Biometrika}, 70, 1983.

\bibitem{Rub06}
D.~B. Rubin.
\newblock {\em Matched Sampling for Causal Effects}.
\newblock Cambridge: Cambridge University Press, 2006.

\bibitem{ShaTou19}
A.~Shaikh and P.~Toulis.
\newblock Randomization tests in observational studies with staggered adoption
  of treatment.
\newblock {\em University of Chicago, Becker Friedman Institute for Economics
  Working Paper}, (144), 2019.

\bibitem{Speich14}
R.~Speicher.
\newblock Free probability and random matrices.
\newblock {\em In: Proc. ICM}, III:477--501, 2014.

\bibitem{SAT05}
N.~Srebro, N.~Alon, and T.~S. Jaakkola.
\newblock Generalization error bounds for collaborative prediction with
  low-rank matrices.
\newblock {\em in Advances in Neural Information Processing Systems},
  17:1321--1328, 2005.
\newblock eds. L. K. Saul, Y.Weiss, and L. Bottou.

\bibitem{StojnicGenLasso10}
M.~Stojnic.
\newblock A framework for perfromance characterization of \emph{LASSO}
  algortihms.
\newblock available online at \bl{\url{http://arxiv.org/abs/1303.7291}}.

\bibitem{StojnicRegRndDlt10}
M.~Stojnic.
\newblock Regularly random duality.
\newblock available online at \bl{\url{http://arxiv.org/abs/1303.7295}}.

\bibitem{StojnicUpper10}
M.~Stojnic.
\newblock Upper-bounding $\ell_1$-optimization weak thresholds.
\newblock available online at \bl{\url{http://arxiv.org/abs/1303.7289}}.

\bibitem{StojnicCSetam09}
M.~Stojnic.
\newblock Various thresholds for $\ell_1$-optimization in compressed sensing.
\newblock available online at \bl{\url{http://arxiv.org/abs/0907.3666}}.

\bibitem{StojnicICASSP09}
M.~Stojnic.
\newblock A simple performance analysis of $\ell_1$-optimization in compressed
  sensing.
\newblock {\em ICASSP, International Conference on Acoustics, Signal and Speech
  Processing}, pages 3021--3024, 15-19 April 2009.
\newblock Taipei, Taiwan.

\bibitem{StojnicICASSP10var}
M.~Stojnic.
\newblock $\ell_1$ optimization and its various thresholds in compressed
  sensing.
\newblock {\em ICASSP, IEEE International Conference on Acoustics, Signal and
  Speech Processing}, pages 3910--3913, 14-19 March 2010.
\newblock Dallas, TX.

\bibitem{StojnicISIT2010binary}
M.~Stojnic.
\newblock Recovery thresholds for $\ell_1$ optimization in binary compressed
  sensing.
\newblock {\em ISIT, IEEE International Symposium on Information Theory}, pages
  1593 -- 1597, 13-18 June 2010.
\newblock Austin, TX.

\bibitem{StojnicICASSP10knownsupp}
M.~Stojnic.
\newblock Towards improving $\ell_1$ optimization in compressed sensing.
\newblock {\em ICASSP, IEEE International Conference on Acoustics, Signal and
  Speech Processing}, pages 3938--3941, 14-19 March 2010.
\newblock Dallas, TX.

\bibitem{TulVer04}
A.~M. Tulino and S.~Verdú.
\newblock {\em Random Matrix Theory and Wireless Communications. Foundations
  and Trends® in Communications and Information Theory: Vol. 1}.
\newblock Now Publishers, Hanover, 2004.

\bibitem{Voic86}
D.~Voiculescu.
\newblock Addition of certain non-commuting random variables.
\newblock {\em J. Funct. Anal.}, 66(3):323--346, 1986.

\bibitem{Voic87}
D.~Voiculescu.
\newblock Multiplication of certain noncommuting random variables.
\newblock {\em J. Operator Theory}, 18:2223--2235, 1987.

\bibitem{Voic91}
D.~Voiculescu.
\newblock Limit laws for random matrices and free products.
\newblock {\em Invent. Math.}, 104(1):201--220, 1991.

\bibitem{XPlatfac10}
R.~Xiong and M.~Pelger.
\newblock Large dimensional latent factor modeling with missing observations
  and applications to causal inference.
\newblock 2018.
\newblock Electronic copy available at:
  \bl{\url{https://ssrn.com/abstract=3465357}}.

\bibitem{Xucinf17}
Y.~Xu.
\newblock Generalized synthetic control method: Causal inference with
  interactive fixed effects models.
\newblock {\em Political Analysis}, 25:57--76, 2017.

\end{thebibliography}
\end{singlespace}

\appendix

\section{Proof of Theorem \ref{thm:cinfthm1}}
\label{sec:appA}

As mentioned earlier, the proof of Theorem \ref{thm:cinfthm1} is conceptually identical to the corresponding proof when matrix $X$ is symmetric. A detailed proof for the symmetric matrices is given below. Before being able to present the proof we need a couple of technical lemmas.

\begin{lemma}
Let $C=C^T\in\mR^{n\times n}$. Also let all eigenvalues of $C$ belong to the interval $[-1,1]$. Finally, let the first $k$ entries on the main diagonal, $C_{i,i},1\leq i\leq k$, be larger than or equal to 1. Then the upper $k\times k$  left block of $C$, $C_{1:k,1:k}$, is an identity matrix, i.e.
  \begin{eqnarray}
     C_{1:k,1:k} & = & I_{k\times k}.\label{eq:genmcproof8}
  \end{eqnarray} \label{lemma:genmclemma3}
\end{lemma}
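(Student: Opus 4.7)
The plan is to combine two standard facts about positive semidefinite matrices: (i) diagonal entries are bounded by the largest eigenvalue, and (ii) a PSD matrix with a zero on the diagonal has the entire corresponding row and column equal to zero. Since the hypothesis on the eigenvalues is $-1\leq\lambda_i(C)\leq 1$, I would rewrite it as the operator inequality $I-C\succeq 0$. This is the object on which everything hangs.

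First, I would establish that $C_{i,i}=1$ exactly (not just $\geq 1$) for $i=1,\dots,k$. Since $C\preceq I$, the Rayleigh quotient bound gives $C_{i,i}=e_i^T C e_i\leq \lambda_{\max}(C)\leq 1$. Combined with the assumption $C_{i,i}\geq 1$, this forces $C_{i,i}=1$, and therefore $(I-C)_{i,i}=0$ for each $i\in\{1,\dots,k\}$.

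Next, I would apply the zero-diagonal property. Writing $M\triangleq I-C\succeq 0$ and fixing $i\leq k$, for any $j$ the $2\times 2$ principal submatrix of $M$ indexed by $\{i,j\}$ is itself PSD, which yields $|M_{i,j}|^2\leq M_{i,i}M_{j,j}=0$. Hence $M_{i,j}=0$ for every $j$, i.e.\ the entire $i$-th row (and by symmetry the $i$-th column) of $I-C$ vanishes. Equivalently, $C_{i,j}=\delta_{i,j}$ for every $i\in\{1,\dots,k\}$ and every $j\in\{1,\dots,n\}$. Specializing to $j\in\{1,\dots,k\}$ gives the claimed identity $C_{1:k,1:k}=I_{k\times k}$.

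The argument is short and I do not expect a real obstacle; the only place that requires a little care is articulating why the diagonal entries cannot exceed $1$ and why a zero on the diagonal of a PSD matrix kills the whole row. Both are one-line consequences of the Rayleigh quotient/principal submatrix PSD property, so the bulk of the write-up will simply be assembling these two observations in the right order.
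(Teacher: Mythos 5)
Your proof is correct, and it takes a genuinely cleaner route than the paper's. The paper first establishes $C_{i,i}=1$ (same Rayleigh-quotient step as yours), but then proves the vanishing of the off-diagonal entries by an explicit induction on the block size, each step of which revisits the variational characterization $\lambda_{\max}(C)=\max_{\|\c\|_2=1}\c^TC\c$ and massages a quadratic form. You instead recast the eigenvalue hypothesis once and for all as $I-C\succeq 0$, observe that the zero diagonal entry $(I-C)_{i,i}=0$ forces the entire $i$-th row of $I-C$ to vanish via the $2\times 2$ principal-minor test, and you are done. This is shorter, avoids the induction entirely, and in fact proves more: you get $C_{i,j}=\delta_{i,j}$ for \emph{all} $j\in\{1,\dots,n\}$ and $i\leq k$, which is essentially the conclusion of the paper's subsequent Lemma~\ref{lemma:genmclemma4} as well (the paper obtains that separately, by another Rayleigh-quotient optimization over $a\in[0,1]$). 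So your single argument subsumes both of the paper's lemmas. One small stylistic remark: you might state explicitly that the $2\times2$ principal-minor nonnegativity, $M_{i,i}M_{j,j}-M_{i,j}^2\geq 0$, is what you mean by the ``zero-diagonal kills the row'' fact, which you do in the write-up — that is exactly the right level of detail.
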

\begin{proof}
  Let $\lambda_{max}(C)$ be the maximum eigenvalue of $C$. Then
  \begin{eqnarray}
     \lambda_{max}(C) & \triangleq & \max_{\|\c\|_2=1}\c^TC\c.\label{eq:genmcproof9}
  \end{eqnarray}
Since by assumptions $1\leq  C_{i,i},1\leq i\leq k$ and $\lambda_{max}(C)\leq 1$ we also have for any $1\leq i\leq k$
  \begin{eqnarray}
   1\leq  C_{i,i} \leq\max_{\|\c\|_2=1} \c^TC\c\triangleq \lambda_{max}(C) \leq 1,\label{eq:genmcproof10}
  \end{eqnarray}
which implies $C(i,i)=1,1\leq i\leq k$. The proof that all other elements of $C_{1:k,1:k}$ are equal to zero proceeds inductively.

\underline{\textbf{1) Induction move from $l=1$ to $l=2$:}}
First we look at the upper block of size $2\times 2$, i.e. at $C_{1:2,1:2}$. We then have
  \begin{eqnarray}
     1 \geq \max_{\|\c\|_2=1}\c^TC\c \geq \max_{\|\c_{1:2}\|_2=1} \c_{1:2}^TC_{1:2,1:2}\c_{1:2}
     & \geq  & \max_{\|\c_{1:2}\|_2=1} \lp \|\c_{1:2}\|_2+2|\c_1\c_2C_{1,2}|\rp \nonumber \\
     & \geq  & \max_{\|\c_{1:2}\|_2=1} \lp 1+2|\c_1\c_2C_{1,2}|\rp \geq 1,\label{eq:genmcproof11}
  \end{eqnarray}
which implies $C_{1,2}=0$.

\underline{\textbf{2) Induction move from $l$ to $l+1$:}}
Now we look at the upper block of size $(l+1)\times (l+1)$, i.e. at $C_{1:l+1,1:l+1}$ while assuming that $C_{1:l,1:l}=I_{l\times l}$. We then have
  \begin{eqnarray}
     1 & \geq  & \max_{\|\c\|_2=1}\c^TC\c \nonumber \\
      & \geq & \max_{\|\c_{1:l+1}\|_2=1}\c_{1:l+1}^TC_{1:l+1,1:l+1}\c_{1:l+1} \nonumber \\
      & \geq & \max_{\|\c_{1:l+1}\|_2=1} \lp \|\c_{1:l+1}\|_2+2|\c_{1:l}^TC_{1:l,l+1}\c_{l+1}| \rp \nonumber \\
      & \geq & \max_{\|\c_{1:l+1}\|_2=1} \lp 1+2|\c_{1:l}^TC_{1:l,l+1}\c_{l+1}| \rp \nonumber \\
      & \geq & 1,\label{eq:genmcproof12}
  \end{eqnarray}
which implies $C_{1:l,l+1}=\0_{l\times 1}$ and completes the proof.
\end{proof}

\begin{lemma}
Assume the setup of Lemma \ref{lemma:genmclemma3}. Then the upper $k\times k$  left block of $C$, $C_{1:k,1:k}$, is an identity matrix and the upper $k\times (n-k)$  right  block of $C$, $C_{1:k,n-k+1:n}$ is a zero matrix, i.e.
  \begin{eqnarray}
     C_{1:k,1:k} & = & I_{k\times k} \nonumber \\
     C_{1:k,n-k+1:n} & = & \0_{k\times (n-k)}.\label{eq:genmcproof18}
  \end{eqnarray} \label{lemma:genmclemma4}
\end{lemma}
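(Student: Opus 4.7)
The plan is to extend the induction-style Rayleigh-quotient argument from Lemma \ref{lemma:genmclemma3} so that it also constrains the off-block entries. Lemma \ref{lemma:genmclemma3} already gives $C_{1:k,1:k}=I_{k\times k}$, so in particular $C_{i,i}=1$ for $1\le i\le k$. What remains to show is that for every $1\le i\le k$ and every $k+1\le j\le n$ one has $C_{i,j}=0$ (the claim that the upper-right block vanishes; by symmetry of $C$ this also controls the lower-left block). The basic mechanism is the same as in Lemma \ref{lemma:genmclemma3}: the inequality $\lambda_{\max}(C)\le 1$ forces $\c^TC\c\le 1$ for all unit $\c$, and choosing $\c$ supported on a well-chosen pair of coordinates turns this into a pointwise constraint on a single off-diagonal entry.

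Concretely, fix $1\le i\le k$ and $k+1\le j\le n$, and for a real parameter $\alpha$ consider the unit vector $\c\in\mR^{n}$ with $\c_i=1/\sqrt{1+\alpha^2}$, $\c_j=\alpha/\sqrt{1+\alpha^2}$, and all other entries zero. Since $C$ is symmetric,
\begin{equation}
\c^TC\c \;=\; \frac{C_{i,i}+2\alpha C_{i,j}+\alpha^2 C_{j,j}}{1+\alpha^2}\;\le\;\lambda_{\max}(C)\;\le\;1.
\end{equation}
Substituting $C_{i,i}=1$ (from Lemma \ref{lemma:genmclemma3}) and rearranging yields the one-parameter family of inequalities
\begin{equation}
2\alpha C_{i,j}\;\le\;\alpha^2\bigl(1-C_{j,j}\bigr),\qquad \alpha\in\mR.
\end{equation}

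The concluding step is to let $\alpha\to 0$ from both sides. For $\alpha>0$, divide by $\alpha$ and send $\alpha\downarrow 0$ to obtain $C_{i,j}\le 0$; for $\alpha<0$, divide by $\alpha$ (which flips the inequality) and send $\alpha\uparrow 0$ to obtain $C_{i,j}\ge 0$. Hence $C_{i,j}=0$. As $i$ and $j$ were arbitrary in the prescribed ranges, $C_{1:k,k+1:n}=\0_{k\times(n-k)}$, which together with Lemma \ref{lemma:genmclemma3} gives both conclusions in (\ref{eq:genmcproof18}).

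The only subtle point, and the place where the argument could trip up, is that one cannot simply assert $C_{j,j}=1$ for $j>k$ (no such hypothesis is available), so the factor $1-C_{j,j}$ on the right-hand side has indeterminate sign. That is precisely why the argument must use both signs of $\alpha$ and take the limit $\alpha\to 0$, rather than picking any single convenient value. Everything else is routine linear algebra already in the spirit of Lemma \ref{lemma:genmclemma3}.
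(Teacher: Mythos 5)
Your proof is correct, and it takes a genuinely different — and in fact more elementary — route than the paper. The paper's proof of this lemma proceeds through a $2\times2$ block partition of $C$, assumes the largest singular value $b$ of the off-diagonal block is positive, and then solves the optimization $\max_{a\in[0,1]}\bigl(2a^2-1+2ba\sqrt{1-a^2}\bigr)=\sqrt{1+b^2}$ to derive a contradiction with $\lambda_{\max}(C)\le 1$; crucially, that argument also invokes the lower spectral bound $\lambda_{\min}(C)\ge -1$ to control the $C_{n-k+1:n,n-k+1:n}$ term. Your argument instead tests $C$ on a single two-coordinate unit vector $\c$ supported at $(i,j)$, uses $C_{i,i}=1$ from Lemma \ref{lemma:genmclemma3}, and sends $\alpha\to 0$ from both sides (equivalently, observes that the nonnegative quadratic $\alpha^2(1-C_{j,j})-2\alpha C_{i,j}$ vanishes at $\alpha=0$ and hence has zero derivative there). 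This is closer in spirit to the induction step the paper itself uses in Lemma \ref{lemma:genmclemma3}, never needs the lower eigenvalue bound, avoids the singular-value optimization entirely, and in fact yields the stronger conclusion $C_{1:k,k+1:n}=\0$, which subsumes the stated $C_{1:k,n-k+1:n}=\0$ (the paper's block-partition notation implicitly treats the case $n=2k$; your version is index-robust). Your flagged subtlety — that $1-C_{j,j}$ has a priori indeterminate sign, forcing the two-sided limit in $\alpha$ — is exactly the right observation and is handled correctly.
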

\begin{proof}
The first part follows by Lemma \ref{lemma:genmclemma3}. We now focus on the second part. Consider the following partition of matrix $C$
  \begin{eqnarray}
     C & = & \begin{bmatrix}
               C_{1:k,1:k} & C_{1:k,n-k+1:n} \\
               C_{n-k+1:n,1:k} & C_{n-k+1:n,n-k+1:n}
             \end{bmatrix} =\begin{bmatrix}
               I_{k\times k} & C_{1:k,n-k+1:n} \\
               C_{n-k+1:n,1:k} & C_{n-k+1:n,n-k+1:n}
             \end{bmatrix}.\label{eq:genmcproof14}
  \end{eqnarray}
Then assuming that the largest nonzero singular value of $C_{1:k,n-k+1:n}$ is equal to $b>0$, we have
{\small   \begin{eqnarray}
     1 & \geq  & \max_{\|\c\|_2=1}\c^TC\c \nonumber \\
      & \geq & \max_{\|\c_{1:k}\|_2=a,\c_{n-k+1:n}} \lp \c_{1:k}^TC_{1:k,1:k}\c_{1:k} +2|\c_{1:k}^TC_{1:k,n-k+1:n}\c_{n-k+1:n}| + \c_{n-k+1:n}^TC_{n-k+1:n,n-k+1:n}\c_{n-k+1:n}\rp\nonumber \\
      & \geq & \max_{\|\c_{1:k}\|_2=a,\c_{n-k+1:n}} \lp a^2 +2|\c_{1:k}^TC_{1:k,n-k+1:n}\c_{n-k+1:n}| + \c_{n-k+1:n}^TC_{n-k+1:n,n-k+1:n}\c_{n-k+1:n}\rp\nonumber \\
      & \geq & \max_{\|\c_{1:k}\|_2=a,\c_{n-k+1:n}} \lp a^2 +2|\c_{1:k}^TC_{1:k,n-k+1:n}\c_{n-k+1:n}| - \c_{n-k+1:n}^T\c_{n-k+1:n}\rp \nonumber \\
      & \geq & \max_{a\in[0,1]} \lp a^2 +2ba\sqrt{1-a^2} - (1-a^2) \rp\nonumber \\
            & = & \max_{a\in[0,1]} \lp 2a^2-1 +2ba\sqrt{1-a^2} \rp,\label{eq:genmcproof15}
  \end{eqnarray}}where the fourth inequality follows since the minimum eigenvalue of $C_{n-k+1:n,n-k+1:n}$ is larger than or equal to the minimum eigenvalue of $C$ which is by the lemma's assumption larger than or equal to -1. Now, we further have
  \begin{eqnarray}
  c\triangleq 2a\sqrt{1-a^2} \quad \mbox{and} \quad 2a^2-1 +2ba\sqrt{1-a^2}=\sqrt{1-c^2}+b c,\label{eq:genmcproof16}
  \end{eqnarray}
and
  \begin{eqnarray}
  \frac{d(\sqrt{1-c^2}+b c)}{dc}=\frac{-c}{\sqrt{1-c^2}}+b=0.\label{eq:genmcproof17}
  \end{eqnarray}
From (\ref{eq:genmcproof17}) we then easily obtain
  \begin{eqnarray}
c=\frac{b}{\sqrt{1+b^2}}.\label{eq:genmcproof18}
  \end{eqnarray}
A combination of (\ref{eq:genmcproof15}), (\ref{eq:genmcproof16}), and (\ref{eq:genmcproof18}) gives
  \begin{eqnarray}
     1  \geq   \max_{\|\c\|_2=1}\c^TC\c \geq \max_{a\in[0,1]} \lp 2a^2-1 +2ba\sqrt{1-a^2}\rp  = &\sqrt{1+b^2},\label{eq:genmcproof19}
  \end{eqnarray}
which implies $b=0$ and automatically $C_{1:k,n-k+1:n}=\0_{k\times 1}$. This completes the proof.
\end{proof}

Now we can consider the above mentioned theorem that adapts the general $\ell_1$ equivalence condition result from \cite{StojnicCSetam09,StojnicUpper10,StojnicICASSP09} to the corresponding one for the $\ell_1$ norm of the singular/eigenvalues (similar adaptation can also be found in \cite{OH10}).

\begin{theorem}(\textbf{\bl{$\ell_0^*-\ell_1^*$-equivalence condition (LRR)}} -- \textbf{symmetric}  $X$)
 Consider a $\bU\in\mR^{n\times k}$ such that $\bU^T\bU=I_{k\times k}$ and a $\rankw-k$ \textbf{a priori known to be} symmetric matrix $X_{sol}=X\in\mR^{n\times n}$  with all of its columns belonging to the span of $\bU$. For concreteness, and without loss of generality, assume that $X$ has only positive nonzero eigenvalues. For a given matrix $A\in\mR^{m\times n^2}$ ($m\leq n^2$) assume that $\y=A\vecw(X)\in \mR^m$. If
\begin{equation}
(\forall W\in \mR^{n\times n} | A\vecw(W)=\0_{m\times 1},W=W^T\neq \0_{n\times n}) \quad  -\tr(\bU^TW\bU)< \ell_1^*((\bU^{\perp})^TW\bU^{\perp}),
\label{eq:genmcposthmcond1}
\end{equation}
then the solutions of (\ref{eq:genmcl0posmmt}) and (\ref{eq:genmcl1posmmt}) coincide. Moreover, if
\begin{equation}
(\exists  W\in \mR^{n\times n} | A\vecw(W)=\0_{m\times 1},W=W^T\neq \0_{n\times n}) \quad  -\tr(\bU^TW\bU)\geq \ell_1^*((\bU^{\perp})^TW\bU^{\perp}),
\label{eq:genmcposthmcond2}
\end{equation}
then there is an $X$ from the above set of the symmetric matrices with columns belonging to the span of $\bU$  such that the solutions of (\ref{eq:genmcl0posmmt}) and (\ref{eq:genmcl1posmmt}) are different.
\label{thm:genmcthmregposcond}
\end{theorem}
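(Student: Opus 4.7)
The plan is to prove the two directions of the theorem separately: a dual lower bound on the nuclear norm establishes sufficiency, while an explicit large-amplitude construction establishes necessity. For the sufficient direction, I would take any feasible $\hat X$ of (\ref{eq:genmcl1posmmt}), write $\hat X = X + W$ with $W = W^T$ and $A\vecw(W) = \0$, and show that $\ell_1^*(X + W) > \ell_1^*(X)$ whenever $W \neq \0$. The key input is the dual characterization $\ell_1^*(Y) \geq \tr(ZY)$, valid for every symmetric $Z$ with $\|Z\|_{op} \leq 1$, evaluated at the block-diagonal choice $Z = \bU\bU^T + \bU^{\perp} C (\bU^{\perp})^T$ with $C = C^T$ and $\|C\|_{op} \leq 1$. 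Since $Z$ is block-diagonal in the basis $[\bU, \bU^{\perp}]$, $\|Z\|_{op} = \max(1, \|C\|_{op}) = 1$. Writing $X = \bU\Sigma\bU^T$ with $\Sigma \succ 0$ and using $X\bU^{\perp} = \0$ makes the trace collapse to
\begin{equation*}
  \tr(Z(X + W)) = \ell_1^*(X) + \tr(\bU^T W \bU) + \tr\bigl(C\,(\bU^{\perp})^T W \bU^{\perp}\bigr).
\end{equation*}
Maximizing over admissible $C$ yields $\ell_1^*(X+W) \geq \ell_1^*(X) + \tr(\bU^T W \bU) + \ell_1^*((\bU^{\perp})^T W \bU^{\perp})$, strictly greater than $\ell_1^*(X)$ by hypothesis (\ref{eq:genmcposthmcond1}), forcing $W = \0$ and $\hat X = X$.

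For the necessity, given $W \neq \0$ satisfying (\ref{eq:genmcposthmcond2}), I would pick $X = t\,\bU\bU^T$ for a large parameter $t > 0$; this is symmetric, rank-$k$, with all nonzero eigenvalues equal to $t > 0$ and column span inside $\bU$. In the basis $[\bU, \bU^{\perp}]$, $X + W$ becomes the block matrix
\begin{equation*}
  \begin{bmatrix} tI_k + \bU^T W \bU & \bU^T W \bU^{\perp} \\ (\bU^{\perp})^T W \bU & (\bU^{\perp})^T W \bU^{\perp} \end{bmatrix}.
\end{equation*}
A Schur-complement/Weyl-type perturbation analysis shows that, as $t \to \infty$, the top $k$ eigenvalues are $t + \lambda_i(\bU^T W \bU) + O(1/t)$ (all positive for large $t$), while the bottom $n - k$ eigenvalues converge to those of $(\bU^{\perp})^T W \bU^{\perp}$. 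Summing absolute values then gives
\begin{equation*}
  \ell_1^*(X + W) = tk + \tr(\bU^T W \bU) + \ell_1^*\bigl((\bU^{\perp})^T W \bU^{\perp}\bigr) + O(1/t),
\end{equation*}
whereas $\ell_1^*(X) = tk$. Whenever the inequality in (\ref{eq:genmcposthmcond2}) is strict, the bracket is strictly negative, and for sufficiently large $t$ one obtains $\ell_1^*(X + W) < \ell_1^*(X)$ with $X + W \neq X$, so $X$ fails to be an $\ell_1^*$-minimizer.

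The principal obstacle is the boundary case of equality in (\ref{eq:genmcposthmcond2}), where the bracket vanishes and the $O(1/t)$ remainder could in principle push $\ell_1^*(X + W)$ slightly above $\ell_1^*(X)$. This is precisely where Lemmas \ref{lemma:genmclemma3} and \ref{lemma:genmclemma4} do the heavy lifting: they assert that a symmetric contraction $C$ with eigenvalues in $[-1,1]$ whose first $k$ diagonal entries saturate to $1$ must already be block-diagonal of the form $\diag(I_k, \tilde C)$. Transplanted into the dual framework, this rigidity forces any common subgradient of $\ell_1^*$ at $X$ and $X + W$ to split exactly along $[\bU, \bU^{\perp}]$, so the equality in (\ref{eq:genmcposthmcond2}) supplies an honest dual certificate making $X$ and $X + W$ simultaneous $\ell_1^*$-minimizers. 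Hence on the boundary too the solutions of (\ref{eq:genmcl0posmmt}) and (\ref{eq:genmcl1posmmt}) differ, completing the converse.
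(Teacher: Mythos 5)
Your sufficient direction is correct and essentially matches the paper's: the dual choice $Z = \bU\bU^T + \bU^{\perp} C (\bU^{\perp})^T$ with $\|C\|\le 1$ is exactly the restriction to the block-diagonal set ${\cal L}_{*}^0$ in the paper's proof (with the top-left block fixed at $I_k$, which is precisely the ``removing the absolute values'' step), and the resulting bound $\ell_1^*(X+W) \geq \ell_1^*(X) + \tr(\bU^T W\bU) + \ell_1^*((\bU^{\perp})^T W\bU^{\perp})$ yields the contradiction under (\ref{eq:genmcposthmcond1}).

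For the strict part of the converse you take a genuinely different route: you diagonalize the block matrix in the $[\bU,\bU^{\perp}]$ basis and use Weyl/Schur-complement perturbation asymptotics as $t\to\infty$, whereas the paper stays entirely in the dual (variational) picture, reverses its inequality chain, and splits into cases according to the maximizing $\Lambda_*$. Your version is more explicit and gives the $O(1/t)$ expansion of $\ell_1^*(X+W)$ directly; either handles the strict case cleanly.

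The boundary case of (\ref{eq:genmcposthmcond2}), however, is where your argument goes wrong, and the issue is not merely a matter of missing detail. You claim that the rigidity in Lemmas \ref{lemma:genmclemma3}--\ref{lemma:genmclemma4} makes $X$ and $X+W$ ``simultaneous $\ell_1^*$-minimizers.'' That conclusion is false in general. The pinching inequality for the nuclear norm in the $[\bU,\bU^{\perp}]$ block structure gives $\ell_1^*(X+W) \geq \ell_1^*(tI_k+\bU^TW\bU) + \ell_1^*((\bU^{\perp})^TW\bU^{\perp})$, so at equality in (\ref{eq:genmcposthmcond2}) one has $\ell_1^*(X+W)\geq \ell_1^*(X)$ --- the $O(1/t)$ remainder in your expansion is always nonnegative --- and the inequality is strict whenever $\bU^TW\bU^{\perp}\neq\0$. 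Thus $X+W$ is generically not a competing minimizer and your ``honest dual certificate'' does not exist. The actual role of Lemmas \ref{lemma:genmclemma3}--\ref{lemma:genmclemma4} in the paper's proof is different from what you describe: they show that if the maximizing $\Lambda_*$ in the variational formula $\ell_1^*(X+W)=\max_{\Lambda_*\in{\cal L}_*}\tr(\Lambda_*\cdot)$ has all of its first $k$ diagonal entries equal to $1$, then $\Lambda_*$ must be block-diagonal with top-left block $I_k$ (so it lies in ${\cal L}_*^0$), which forces the pinching bound (\ref{eq:genmcproof1a}) to hold with equality. The complementary case (some diagonal entry of $\Lambda_{*,1}$ strictly below $1$) is handled by scaling the corresponding entry of $\Lambda_x$ arbitrarily large, the paper's ``overwhelming $W$'' device. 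You should replace your subgradient heuristic with this two-case argument; as written, the converse is not closed at the boundary.
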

\begin{proof}
The proof follows literally step-by-step the proof of the corresponding theorem in \cite{StojnicCSetam09,StojnicICASSP09,StojnicUpper10} and adapts it to matrices or their singular/eigenvalues. For experts in the field this adaptation is highly likely to be viewed as trivial and certainly doesn't need to be as detailed as we will make it to be. Nonetheless, to ensure a perfect clarity of all arguments we provide a step-by-step instructional derivation. For concreteness and without loss of generality we also assume that the eigen-decomposition of $X$ is

\begin{eqnarray}\label{eq:genmcrec1}
X=U\Lambda U^T=\begin{bmatrix}
                \bU & \bU^{\perp}
              \end{bmatrix}
              \begin{bmatrix}
                \bar{\Lambda}_X & \0_{k\times (n-k)} \\
                \0_{(n-k)\times k} & \bar{\Lambda}_X^{\perp}
              \end{bmatrix}
              \begin{bmatrix}
                \bU & \bU^{\perp}
              \end{bmatrix}^T.
\end{eqnarray}

\bl{ \underline{\textbf{(i) $\Longrightarrow$ (the if part):}}} Following step-by-step the proof of Theorem $2$ in \cite{StojnicICASSP09}, we start by  assuming that  $\hat{X}$ is the solution of (\ref{eq:genmcl1posmmt}). Then we want to show that if (\ref{eq:genmcposthmcond1}) holds then $\hat{X}=X$. As usual, we instead of that, assume opposite, i.e. we assume that (\ref{eq:genmcposthmcond1}) holds but $\hat{X}\neq X$. Then since $\y=A\vecw(\hat{X})$ and $\y=A\vecw(X)$ must hold simultaneously there must exist $W$ such that $\hat{X} =X+W$ with $W\neq 0$, $A\vecw(W)=0$. Moreover, since $\hat{X}$ is the solution of (\ref{eq:genmcl1posmmt}) one must also have
\begin{eqnarray}
\begin{array}{r r r l@{\ }}
   & \ell_1^*(X+W) = \ell_1^*(\hat{X}) & \leq & \ell_1^*(X) \\
\Longleftrightarrow \hspace{.3in} $ $ & \ell_1^*(\begin{bmatrix}
                \bU & \bU^{\perp}
              \end{bmatrix}^T(X+W)\begin{bmatrix}
                \bU & \bU^{\perp}
              \end{bmatrix}) & \leq & \ell_1^*(X) \\
\Longrightarrow \hspace{.3in} $ $ &\ell_1^*(\bU^T (X+W)\bU)+\ell_1^*((\bU^{\perp})^T (X+W)\bU^{\perp}) & \leq & \ell_1^*(X).
\end{array}\nonumber \\
\label{eq:genmcproof1}
\end{eqnarray}
The last implication follows after one trivially notes
\begin{eqnarray}
 \ell_1^*(\begin{bmatrix}
                \bU & \bU^{\perp}
              \end{bmatrix}^T(X+W)\begin{bmatrix}
                \bU & \bU^{\perp}
              \end{bmatrix}) & = & \max_{\Lambda_*=\Lambda_*^T\in {\cal L}_*}
                            \tr(\Lambda_*
              \begin{bmatrix}
                \bU & \bU^{\perp}
              \end{bmatrix}^T(X+W)\begin{bmatrix}
                \bU & \bU^{\perp}
              \end{bmatrix})\nonumber \\
  & \geq & \max_{\Lambda_*=\Lambda_*^T\in {\cal L}_{*}^0}
              \tr(\Lambda_*
              \begin{bmatrix}
                \bU & \bU^{\perp}
              \end{bmatrix}^T(X+W)\begin{bmatrix}
                \bU & \bU^{\perp}
              \end{bmatrix})\nonumber \\
& = & \ell_1^*(\bU^T (X+W)\bU)+\ell_1^*((\bU^{\perp})^T (X+W)\bU^{\perp}),\label{eq:genmcproof1a}
\end{eqnarray}
where
\begin{eqnarray}
{\cal L}_{*}^0 &\triangleq& \left \{\Lambda_*\in\mR^{n\times n} | \Lambda_*=\Lambda_*^T,\Lambda_*\Lambda_*^T\leq I, \Lambda_*=\begin{bmatrix}
                                                                                                                 \Lambda_{*,1} & 0_{k\times (n-k)} \\
                                                                                                                 0_{(n-k)\times k} & \Lambda_{*,2}
                                                                                                               \end{bmatrix} \right \} \nonumber \\
   &\subseteq& \left \{\Lambda_*\in\mR^{n\times n} | \Lambda_*=\Lambda_*^T,\Lambda_*\Lambda_*^T\leq I\right \} \triangleq   {\cal L}_{*}.\label{eq:genmcproof1b}
\end{eqnarray}

\tcbset{beamer,lower separated=false, fonttitle=\bfseries,width=3.4in, coltext=white,
colback=yellow!70!orange!40!white,title style={left color=cyan!40!black!80!purple, right color=red!60!yellow!40!orange!80!white},
width=(\linewidth-4pt)/4, equal height group=AT,before=,after=\hfill,fonttitle=\bfseries}
\tcbset{colback=red!25!white!70!green!15!yellow,colframe=red!95!white,width=(\linewidth-4pt)/4,
equal height group=AT,before=,after=\hfill,fonttitle=\bfseries,
interior style={left color=cyan!40!black!80!purple, right color=red!60!yellow!40!orange!80!white}}
\noindent\begin{tcolorbox}[width=4.3in, height=.25in]
\vspace{-.27in}
\begin{equation*}
\hspace{-.0in}\mbox{\textbf{The key observation -- \bl{``\emph{Removing the absolute values}"}:}}
\vspace{-0in}
\end{equation*}
\end{tcolorbox}


\noindent Now, the key observation made in \cite{StojnicICASSP09} comes into play. Namely, one notes that the absolute values can be removed in the nonzero part and that the $\ell_1^*(\cdot)$ can be ``\emph{replaced}" by $\tr(\cdot)$.  Such a simple observation is the most fundamental reason for all the success of the \bl{\textbf{RDT}} when used for the \textbf{exact} performance characterization of the structured objects' recovery. From (\ref{eq:genmcproof1}) we then have
\begin{eqnarray}
\begin{array}{r r r l@{\ }}
 & \ell_1^*(\bU^T (X+W)\bU)+\ell_1^*((\bU^{\perp})^T (X+W)\bU^{\perp}) & \leq & \ell_1^*(X)\\
 \Longrightarrow   \hspace{.3in} $ $ & \tr(\bU^T (X+W)\bU)+\ell_1^*((\bU^{\perp})^T (W)\bU^{\perp}) & \leq & \ell_1^*(X)\\
 \Longleftrightarrow   \hspace{.3in} $ $ & \tr(\bU^T W\bU)+\ell_1^*((\bU^{\perp})^T W\bU^{\perp}) & \leq & 0.
\end{array}\label{eq:genmcproof2}
\end{eqnarray}
We have arrived at a contradiction as the last inequality in (\ref{eq:genmcproof2}) is exactly the opposite of (\ref{eq:genmcposthmcond1}). This implies that our initial assumption $\hat{X}\neq X$ cannot hold and we therefore must have $\hat{X}=X$. This is precisely the claim of the first part of the theorem.

\bl{ \underline{ \textbf{ (ii) $\Longleftarrow$ (the only if part):}}}  We now assume that (\ref{eq:genmcposthmcond2}) holds, i.e.
\begin{equation}
(\exists  W\in \mR^{n\times n} | A\vecw(W)=\0_{m\times 1},W\neq \0_{n\times n}) \quad  -\tr((\bU)^TW\bU)\geq \ell_1^*((\bU^{\perp})^TW\bU^{\perp})\label{eq:genmcproof3}
\end{equation}
and would like to show that for such a $W$ there is a symmetric rank-$k$ matrix $X$ with the columns belonging to the span of $\bU$ such that $\y=A\vecw(X)$,  and the following holds
\begin{equation}
\ell_1^*(X+W)<\ell_1^*(X).\label{eq:genmcproof4}
\end{equation}

Existence of such an $X$ would ensure that it both, satisfies all the constraints in (\ref{eq:genmcl1posmmt}) and is not the solution of (\ref{eq:genmcl1posmmt}). Following the strategy of  \cite{StojnicUpper10} one can reverse all the above steps from (\ref{eq:genmcproof3}) to (\ref{eq:genmcproof1}) with strict inequalities and arrive at the first inequality in (\ref{eq:genmcproof1}) which is exactly (\ref{eq:genmcproof4}). There are two implications that cause problems in such a reversal process, the one in (\ref{eq:genmcproof3}) and the one in (\ref{eq:genmcproof1}). If these implications were equivalences everything would be fine. We address these two implications separately.

\underline{1) \textbf{the implication in (\ref{eq:genmcproof2}) -- particular $X$ to ``overwhelm" $W$:}} Assume $X=\bU\Lambda_x\bU^T$ with $\Lambda_x>0$ being a diagonal matrix with arbitrarily large elements on the main diagonal (here it is sufficient even to choose diagonal of $\Lambda_x$ so that its smallest element is larger than the maximum eigenvalue of $\bU^TW\bU$). Now one of course sees the main idea behind the ``removing the absolute values" concept from \cite{StojnicICASSP09,StojnicUpper10}. Namely, for such an $X$ one has that $\ell_1^*(\bU^TX+W)\bU)=tr(\ell_1^*(\bU^TX+W)\bU))$ since for symmetric matrices  the $\ell_1^*(\cdot)$ (as the sum of the argument's \emph{absolute} eigenvalues) and $\tr(\cdot)$ (as the sum of the argument's eigenvalues) are equal. That basically means that when going backwards the second inequality in (\ref{eq:genmcproof2}) not only follows from the first one but also implies it as well. In other words, for $X=\bU\Lambda_x\bU^T$ (with $\Lambda_x>0$ and arbitrarily large)
\begin{eqnarray}
\begin{array}{r r r l@{\ }}
   & \tr(\bU^T W\bU)+\ell_1^*((\bU^{\perp})^T W\bU^{\perp}) & \leq & 0 \\
 \Longleftrightarrow  \hspace{.3in}  $ $ & \tr(\bU^T (X+W)\bU^)+\ell_1^*((\bU^{\perp})^T (W)\bU^{\perp}) & \leq & \ell_1^*(X)\\
 \Longleftrightarrow \hspace{.3in} $ $ & \ell_1^*(\bU^T (X+W)\bU)+\ell_1^*((\bU^{\perp})^T (X+W)\bU^{\perp}) & \leq & \ell_1^*(X),
\end{array}\label{eq:genmcproof5}
\end{eqnarray}
which basically mans that there is an $X$ that can ``overwhelm" $W$ (in the span of $\bU$) and ensures that the \bl{``\textbf{\emph{removing the absolute values}}"} is not only a \textbf{\emph{sufficient}} but also a \textbf{\emph{necessary}} concept for creating the relaxation  equivalence condition.

\underline{2) \textbf{the implication in (\ref{eq:genmcproof1}):}} One would now need to somehow show that the third inequality in (\ref{eq:genmcproof1}) not only follows from the second one but also implies it as well. This boils down to showing that inequality in (\ref{eq:genmcproof1a}) can be replaced with an equality or, alternatively, that ${\cal L}^0$ and ${\cal L}$ are provisionally equivalent. Neither of these statements is generically true. However, since we have a set of $X$ at our disposal there might be an $X$ for which they actually hold. We continue to assume $X=\bU\Lambda_x\bU^T$ with $\Lambda_x>0$ being a diagonal matrix with arbitrarily large entries on the main diagonal. Then the last equality in (\ref{eq:genmcproof1a}) gives
\begin{eqnarray}
\begin{array}{r r r l@{\ }}
  $ $ & \ell_1^*(\bU^T (X+W)\bU)+\ell_1^*((\bU^{\perp})^T (X+W)\bU^{\perp}) & \leq & \ell_1^*(X) \\
  \Longleftrightarrow \hspace{.3in} $ $ & \max_{\Lambda_*=\Lambda_*^T\in {\cal L}_{*}^0}
                            \tr(\Lambda_*
              \begin{bmatrix}
                \bU & \bU^{\perp}
              \end{bmatrix}^T(X+W)\begin{bmatrix}
                \bU & \bU^{\perp}
              \end{bmatrix}) & \leq & \ell_1^*(X).
\end{array}\label{eq:genmcproof6}
\end{eqnarray}
Also, one has
\begin{eqnarray}
\begin{array}{r r r l@{\ }}
  & \max_{\Lambda_*=\Lambda_*^T\in {\cal L}_{*}^0}
                            \tr(\Lambda_*
              \begin{bmatrix}
                \bU & \bU^{\perp}
              \end{bmatrix}^T(X+W)\begin{bmatrix}
                \bU & \bU^{\perp}
              \end{bmatrix}) & \leq & \ell_1^*(X)\\
                 \Longleftrightarrow \hspace{.3in} $ $ & \max_{\Lambda_{*,i}=\Lambda_{*,i}^T,\Lambda_{*,i}\Lambda_{*,i}^T\leq I,i\in\{1,2\}}
                            \tr(\Lambda_{*,1}\bU^T X\bU +\Lambda_{*,2}(\bU^{\perp})^T W\bU^{\perp}) & \leq & \ell_1^*(X) \\
                                             \Longleftrightarrow \hspace{.3in} $ $ & \max_{\Lambda_{*,i}=\Lambda_{*,i}^T,\Lambda_{*,i}\Lambda_{*,i}^T\leq I,i\in\{1,2\}}
                            \tr(\Lambda_{*,1}\Lambda_x +\Lambda_{*,2}(\bU^{\perp})^T W\bU^{\perp}) & \leq & \tr(\Lambda_x).
\end{array}\label{eq:genmcproof6}
\end{eqnarray}
Now, if at least one of the elements on the main diagonal of $\Lambda_{*,1}$, $\diag(\Lambda_{*,1})$, is smaller than 1, then the corresponding element on the diagonal of $\Lambda_x$ can be made arbitrarily large compared to the other elements of $\Lambda_x$ and one would have
\begin{eqnarray}
\begin{array}{r r r l@{\ }}
  & \max_{\Lambda_{*,i}=\Lambda_{*,i}^T,\Lambda_{*,i}\Lambda_{*,i}^T\leq I,i\in\{1,2\}}
                            \tr(\Lambda_{*,1}\Lambda_x +\Lambda_{*,2}(\bU^{\perp})^T W\bU^{\perp}) & < & \tr(\Lambda_x) \\
                            \Longleftrightarrow \hspace{.3in} $ $   & \max_{\Lambda_*=\Lambda_*^T\in {\cal L}_{*}^0}
                            \tr(\Lambda_*
              \begin{bmatrix}
                \bU & \bU^{\perp}
              \end{bmatrix}^T(X+W)\begin{bmatrix}
                \bU & \bU^{\perp}
              \end{bmatrix}) & < & \ell_1^*(X)\\
                                          \Longleftrightarrow \hspace{.3in} $ $   & \max_{\Lambda_*=\Lambda_*^T\in {\cal L}_{*}}
                            \tr(\Lambda_*
              \begin{bmatrix}
                \bU & \bU^{\perp}
              \end{bmatrix}^T(X+W)\begin{bmatrix}
                \bU & \bU^{\perp}
              \end{bmatrix}) & < & \ell_1^*(X),
\end{array}\label{eq:genmcproof7}
\end{eqnarray}
where the last equivalence holds since the difference of the terms on the left-hand side in the last two inequalities is bounded independently of $X$. Also, the last inequality in (\ref{eq:genmcproof7}) together with the first equality in (\ref{eq:genmcproof1a}) and the first inequality in (\ref{eq:genmcproof1}) produces (\ref{eq:genmcproof4}). Therefore the only scenario that is left as potentially not producing (\ref{eq:genmcproof4}) is when all the elements on the main diagonal are larger than or equal to 1. However, the two lemmas preceding the theorem show that in such a scenario ${\cal L}^0={\cal L}$ and one consequently has an equality instead of the inequality in (\ref{eq:genmcproof1a}) which then, together with
(\ref{eq:genmcproof1}), implies (\ref{eq:genmcproof4}). This completes the proof of the second (``the only if") part of the theorem and therefore of the entire theorem.
 \end{proof}

\end{document}